\theoremstyle{plain}
\newtheorem{theorem}{Theorem}[section]
\newtheorem{lemma}[theorem]{Lemma}
\newtheorem{corollary}[theorem]{Corollary}
\theoremstyle{definition}
\newtheorem{definition}[theorem]{Definition}
\theoremstyle{remark}
\newcommand{\inprod}[2]{\left\langle #1, #2 \right\rangle}
\DeclareMathOperator*{\hull}{hull}
\DeclareMathOperator*{\cone}{cone}
\DeclareMathOperator*{\sort}{sort}
\DeclareMathOperator*{\argmax}{arg\,max}
\DeclareMathOperator*{\argmin}{arg\,min}
\DeclareMathOperator*{\argsup}{arg\,sup}
\DeclareMathOperator*{\Risk}{Risk}
\DeclareMathOperator*{\Regret}{Regret}
\pgfplotsset{compat=1.17}
\tikzstyle{cell}=[dashed,thick]
\tikzstyle{simplex}=[thick]
\newcommand{\tikzfigscale}{2.2}
\newcommand{\restatableeq}[3]{\label{#3}#2\gdef#1{#2\tag{\ref{#3}}}}
\newcommand{\E}{\mathbb{E}}
\renewcommand{\P}{\mathcal{P}}
\newcommand{\R}{\mathcal{R}}
\newcommand{\Sc}{\mathcal{S}}
\newcommand{\U}{\mathcal{U}}
\newcommand{\Y}{\mathcal{Y}}
\newcommand{\T}{\mathcal{T}}
\newcommand{\Rlo}{R^{\tt low}}
\newcommand{\Rhi}{R^{\tt high}}
\newcommand{\simplex}{{\Delta_{\Y}}}
\newcommand{\lk}{\ell_k}
\newcommand{\Lk}{L_k}
\newcommand{\Tk}{T_k}
\newcommand{\Rk}{\R_k}
\newcommand{\sumk}{\sigma_k}
\newcommand{\topkset}{T_k}
\newcommand{\topkvec}{\tau_k}
\newcommand{\link}{\psi_k}
\newcommand{\propk}{\Gamma_k}
\newcommand{\ones}{\mathbbm{1}}
\newcommand{\prop}[1]{\mathrm{prop}[#1]}
\newcommand{\reals}{\mathbb{R}}
\newcommand{\relint}{\mathrm{relint}}
\newcommand{\risk}[1]{\underline{#1}}
\newcommand{\support}{\mathrm{support}}
\newcommand{\sign}{\mathrm{sign}}
\newcommand{\toto}{\rightrightarrows}
\renewcommand{\i}{{[i]}}
\renewcommand{\j}{{[j]}}
\renewcommand{\k}{{[k]}}
\newcommand{\kp}{{[k+1]}}
\newcommand{\sm}[1]{{\setminus{#1}}}
\newcommand{\smy}{\sm{y}}
\newcommand{\Li}[1]{L^{(#1)}}
\newcommand{\Ri}[1]{\R^{(#1)}}
\newcommand{\elli}[1]{\ell^{(#1)}}
\newcommand{\Upostwo}{U_+^{(2)}}
\icmltitlerunning{Consistent Polyhedral Surrogates for Top-$k$ Classification and Variants}
\begin{document}
\twocolumn[
\icmltitle{Consistent Polyhedral Surrogates for Top-$k$ Classification and Variants}
\icmlsetsymbol{equal}{*}

\begin{icmlauthorlist}
\icmlauthor{Jessie Finocchiaro}{cu}
\icmlauthor{Rafael Frongillo}{cu}
\icmlauthor{Emma Goodwill}{cu}
\icmlauthor{Anish Thilagar}{cu}
\end{icmlauthorlist}

\icmlaffiliation{cu}{University of Colorado Boulder Department of Computer Science, Boulder, CO, USA}

\icmlcorrespondingauthor{Jessie Finocchiaro}{Jessica.Finocchiaro@colorado.edu}
\icmlcorrespondingauthor{Anish Thilagar}{anish@colorado.edu}
\icmlkeywords{Top-$k$ classification, surrogate loss design}

\vskip 0.3in]

\printAffiliationsAndNotice{}

\begin{abstract}
Top-$k$ classification is a generalization of multiclass classification used widely in information retrieval, image classification, and other extreme classification settings. 
Several hinge-like (piecewise-linear) surrogates have been proposed for the problem, yet all are either non-convex or inconsistent.
For the proposed hinge-like surrogates that are convex (i.e., polyhedral), we apply the recent embedding framework of \citet{finocchiaro2019embedding,finocchiaro2022embedding} to determine the prediction problem for which the surrogate is consistent.
These problems can all be interpreted as variants of top-$k$ classification, which may be better aligned with some applications.
We leverage this analysis to derive constraints on the conditional label distributions under which these proposed surrogates become consistent for top-$k$.
It has been further suggested that every convex hinge-like surrogate must be inconsistent for top-$k$.
Yet, we use the same embedding framework to give the first consistent polyhedral surrogate for this problem.
\end{abstract}

\section{Introduction}

Top-$k$ classification is commonly used in image recognition~\citep{russakovsky2015imagenet,Karpathy2014large,akata2013good} and action analysis~\citep{Furnari2018leveraging}, search querying~\citep{ailon2008efficient,reddi2019stochastic}, and recommender systems more broadly~\citep{adomavicius2016classification,billsus1998learning,deshpande2004item}.
For example, in information retrieval, a page of $k$ results may be displayed out of $n \gg k$ total webpages available, with success indicated by a user clicking one of these $k$.
This scenario can be captured by the \emph{top-$k$ loss}: given a set $S$ of labels, $|S|=k$, and the true label $y$, assign loss $1$ if $y \not \in S$, and $0$ otherwise.
As top-$k$ loss is discrete, it is typically computationally hard to optimize.
Therefore, top-$k$ learning algorithms typically employ a \emph{surrogate loss}.

Common desiderata for surrogate losses are that they be \emph{convex}, and thus easier to optimize, and that they be \emph{statistically consistent}, meaning they solve the original problem (here: top-$k$) when given enough data.
Another consideration is whether the surrogate is \emph{smooth (e.g.\ differentiable) or piecewise-linear (``hinge-like'')}.
This consideration is related to whether the surrogate will implicitly learn the underlying conditional label distribution, which generally is a harder learning problem than the original; for example, the entire label distribution contains more information than the set of $k$ most likely labels.
Typically, smooth surrogates, such as cross-entropy, implicitly learn the entire label distribution.%
\footnote{Concretely, consider any surrogate whose Bayes risk is strictly concave, which is the case for most smooth surrogates.
For each surrogate prediction $u$, it can minimize expected loss for at most one conditional label distribution $p$; otherwise the Bayes risk would be flat on the line segment between two such distributions.
Thus, one can infer $p$ from the $u$ returned by the model.}
Conventional wisdom has been that piecewise-linear surrogates are more ``efficient'' in the sense that they learn only what is relevant for the original problem.
Moreover, piecewise-linear and convex surrogates give rise to linear surrogate regret bounds, whereas most smooth surrogates do not~\cite{frongillo2021surrogate}.

Combining the above desiderata, we would like a surrogate which is both \emph{polyhedral} (convex and piecewise-linear) and consistent for top-$k$ classification.
Unfortunately, while many piecewise-linear surrogates have been proposed for top-$k$, they are all either non-convex or inconsistent~\citep{lapin2015top,lapin2016loss,lapin2018analysis,yang2018consistency,reddi2019stochastic}.
Moreover, the results and writing of both \citet[pg.6]{lapin2016loss} and \citet[pg.1]{yang2018consistency} suggest that perhaps no such surrogate exists for top-$k$.

We resolve this open question by presenting the first consistent polyhedral surrogate for top-$k$ classification (\S~\ref{sec:embedding-construction}).
Our proof uses embedding framework of \citet{finocchiaro2019embedding,finocchiaro2022embedding}.
We also use the embedding framework to analyze three previous polyhedral surrogates in the literature which are inconsistent for top-$k$ (\S~\ref{sec:previous-surrogates}).
For each we show (a) what discrete prediction problem the surrogate is actually solving, in all cases a natural variant of top-$k$, and (b) a constraint on the conditional label distributions such that the surrogate becomes consistent for top-$k$.
Finally, we evaluate the performance of our surrogate compared to these previous surrogates (\S~\ref{sec:regret-comparison}).

\section{Setting}

We consider predictions in a discrete set $\R$ over a finite set of labels $\Y = \{1, \ldots, n\}$, and conditional label distributions $\simplex$.
In top-$k$ classification, predictions take the form of size-$k$ subsets of labels, $\R = \Rk := \{ S \subseteq \Y \mid |S| = k \}$. 
Top-$k$ loss $\lk:\Rk \times \Y \to \reals_+$ simply tests whether the actual label lies in the set,
\begin{align}\label{eq:top-k}
    \lk(S,y) &= \ones \{y \not \in S\}~,
\end{align}
where $\ones\{E\}$ is $1$ if event $E$ is true, and $0$ otherwise.
In reasoning about top-$k$ and variants, it is often useful to denote $u_{[i]}$ to be the $i^{\text{th}}$ largest element of the vector $u \in \reals^n$.
Moreover, the set of possible top-$k$ indices $\topkset:\reals^n\to 2^{\Rk}$ is given by $\topkset: u \mapsto \argmax_{S \in\Rk} \inprod{\ones_S}{u}$.
Observe $|\topkset(u)| > 1$ if and only if $u_\k = u_\kp$.
Additionally, we denote the sum of these top $k$ elements by $\sigma_k(u) = \max_{S \in \Rk} \inprod{\ones_S}{u}$.

\subsection{Consistency, Property Elicitation, and Calibration}
Discrete losses such as $\lk$ are hard to optimize directly, so a consistent surrogate is sought instead with better optimization guarantees.
In essence, a surrogate and link are \emph{consistent} with respect to a discrete target loss if approaching the optimal surrogate loss implies approaching the optimal target loss when the link function is applied to the surrogate predictions. 
We will phrase consistency in terms of the equivalent notion of calibration~\citep{bartlett2008classification,ramaswamy2016convex,tewari2007consistency,steinwart2008support}.

Before defining calibration, we first introduce properties, which encode the optimal predictions for a loss as a function of the conditional label distribution.
Here $\P\subseteq\simplex$.
\begin{definition} \label{def:property}
    A \emph{property} is a function $\Gamma : \P \to 2^\R \setminus \{ \emptyset \}$, which we more succinctly denote $\Gamma : \P \toto \R$.
  A loss $L : \R \times \Y \to \reals$ \emph{elicits} a property $\Gamma : \P \toto \R$ if
  \begin{align*}
      \forall p \in \P, \quad \Gamma(p) = \argmin_{r \in \R} \E_{Y \sim p} L(r,Y)~.~
  \end{align*}
\end{definition}
A loss $L$ is \emph{minimizable} if $\E_{Y \sim p} L(\cdot, Y)$ attains its infimum for all $p \in \P$.
Every minimizable loss $L$ elicits a unique property, which we denote $\prop{L}$.

As an example, the property elicited by top-$k$ loss is $\gamma_k = \prop{\lk}$, which is given by
    \begin{align}
        \gamma_k(p)
        & = \arg\min_{S \in \Rk}\inprod{p}{\lk(S, \cdot)} \nonumber \\
        &= \arg\min_{S \in \Rk} \sum_{i \not \in S} p_i \nonumber \\
        & = \topkset|_{\simplex}(p) ~. \label{eqn:topk-property}
    \end{align}

\begin{definition}\label{def:calibration}
    Let $\ell:\R\times\Y\to\reals$ with $|\R|<\infty$.
  A surrogate $L : \reals^d \times \Y \to \reals_+$ and link $\psi : \reals^d \to \R$ pair $(L, \psi)$ is \emph{calibrated} with respect to $\ell$ over $\P\subseteq\simplex$ if for all $p\in\P$,
  \begin{align*}
      \inf_{u : \psi(u) \not \in \prop{\ell}(p)} \E_{Y \sim p} L(u,Y) > \inf_{u \in \reals^d} \E_{Y \sim p} L(u,Y)~.
  \end{align*}
  We simply say $L$ is calibrated with respect to $\ell$ if there exists a link $\psi$ such that $(L,\psi)$ is calibrated with respect to $\ell$.
\end{definition}
One can think of $\P$ as the set of possible conditional label distributions conditioned on some feature vector.
We consider $\P = \simplex$ unless otherwise specified.

\subsection{Embedding Framework for Polyhedral Surrogates}

We rely heavily on the embedding framework of \citet{finocchiaro2019embedding,finocchiaro2022embedding}, which gives tools to analyze and construct consistent polyhedral surrogates.
An embedding maps the finite set of target predictions to a \emph{representative set} of surrogate predictions.

\begin{definition}[Representative set]
  A set $\Sc \subseteq \R$ is \emph{representative} for a property $\Gamma :\P \toto \R$ if, for all $p \in \P$, we have $\Gamma(p) \cap \Sc \neq \emptyset$.
  We say $\Sc$ is representative for a loss $L$ if it is representative for the property $\prop{L}$.
\end{definition}

\begin{definition}[Embedding]\label{def:loss-embed}
  A loss $L:\reals^d\times \Y\to\reals_+$ \emph{embeds} a discrete loss $\ell:\R\times \Y\to\reals_+$ if there exists a representative set $\Sc$ for $\ell$ and an injective embedding $\varphi:\Sc\to\reals^d$ such that
  (i) for all $r\in\Sc$ and $y \in \Y$ we have $L(\varphi(r),y) = \ell(r,y)$, and (ii) for all $p\in\simplex,r\in\Sc$ we have
  \begin{equation}\label{eq:embed-loss}
    r \in \prop{\ell}(p) \iff \varphi(r) \in \prop{L}(p)~.
  \end{equation}
\end{definition}
In other words, a surrogate embeds a discrete target loss if the loss values match at the embedded points, and moreover, a target prediction is optimal exactly when its embedded prediction is optimal for the surrogate.

Embeddings are closely tied to polyhedral surrogates; in particular, every polyhedral surrogate embeds some discrete loss \citep{finocchiaro2022embedding}.
We will primarily use the following results.
Throughout, for a loss $L : \R \times \Y \to \reals_+$ and set $\Sc \subseteq \R$, we denote by $L|_{\Sc}$ the loss on $\Sc \times \Y$ given by $L|_\Sc(u,y) = L(u,y)$, i.e., the restriction of $L$ to $\Sc$.
\begin{theorem}[{\citet{finocchiaro2022embedding}}]\label{thm:polyhedral-embeds-discrete}

~%
\begin{enumerate}
    \item Every polyhedral loss $L$ has a finite representative set.
    \item If $\Sc$ is a finite representative set for $L$, then $L$ embeds the discrete loss $L|_{\Sc}$.
    \item If $L$ embeds $\ell$, then there exists a link $\psi$ such that $(L,\psi)$ is calibrated with respect to $\ell$.
\end{enumerate}
\end{theorem}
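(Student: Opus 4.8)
The plan is to establish the three claims in order; the first two are short structural observations and essentially all the difficulty lies in the third. Throughout, write $f_p(u):=\E_{Y\sim p}L(u,Y)$, which is affine in $p$ for fixed $u$ and, for fixed $p$, convex (and piecewise linear when $L$ is polyhedral) in $u$. \emph{Part 1:} each $L(\cdot,y)$ is polyhedral, so the common refinement $\mathcal K$ of their regions of linearity is a finite polyhedral complex on which every $f_p$ is affine cell by cell. Since $L$ is $\reals_+$-valued, $f_p$ is polyhedral and bounded below, so its set of minimizers is a nonempty union of faces of $\mathcal K$; because $f_p$ is constant (equal to its minimum) on each such face, that union in fact contains a minimal face of $\mathcal K$ under inclusion. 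Hence the finite set $\Sc$ obtained by choosing one relative-interior point from each minimal face of $\mathcal K$ meets $\prop{L}(p)$ for every $p$, i.e.\ $\Sc$ is representative for $L$.

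\emph{Part 2:} take $\varphi$ to be the inclusion $\Sc\hookrightarrow\reals^d$ and put $\ell:=L|_\Sc$, so requirement~(i) of Definition~\ref{def:loss-embed} holds trivially and $\varphi$ is injective; since $\Sc$ is finite, $\prop{\ell}$ is well defined and $\Sc$ is representative for $\ell$. Because $\Sc$ is representative for $L$, for each $p$ some $r\in\Sc$ satisfies $r=\varphi(r)\in\prop{L}(p)$, so $\min_{u\in\reals^d}f_p(u)=f_p(r)=\E_p\ell(r,Y)\ge\min_{r'\in\Sc}\E_p\ell(r',Y)\ge\min_{u\in\reals^d}f_p(u)$, forcing the two infima to coincide. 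Consequently, for $r\in\Sc$, $r\in\prop{\ell}(p)$ holds exactly when $\E_p\ell(r,Y)$ equals this common value, exactly when $f_p(r)=\min_u f_p(u)$, exactly when $\varphi(r)\in\prop{L}(p)$ — which is requirement~(ii).

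\emph{Part 3, reduction to a pointwise-valid link:} let $\Sc,\varphi$ witness that $L$ embeds $\ell$ (and take $L$ polyhedral, as for the surrogates of interest). Arguing as in Part~2 with $r\in\prop{\ell}(p)\cap\Sc$ shows $\underline{L}(p):=\inf_u f_p(u)$ equals $\underline{\ell}(p):=\min_{r'\in\R}\E_p\ell(r',Y)$ for all $p$. Fix $u$ and set $D(u):=\{p:f_p(u)=\underline{L}(p)\}$, a convex set on which the affine map $p\mapsto f_p(u)$ agrees with the concave function $\underline{\ell}$ that it everywhere dominates; hence $\underline{\ell}$ is affine on $D(u)$. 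Picking $r_u\in\prop{\ell}(p_0)\cap\Sc$ for some $p_0\in\relint D(u)$, the affine map $p\mapsto\E_p\ell(r_u,Y)$ dominates $\underline{\ell}$ with equality at the interior point $p_0$, so it equals $\underline{\ell}$ throughout $D(u)$, i.e.\ $r_u\in\prop{\ell}(p)$ for every $p\in D(u)$. Setting $\psi(u):=r_u$ (arbitrary when $D(u)=\emptyset$) then yields the pointwise implication $\psi(u)\notin\prop{\ell}(p)\Rightarrow p\notin D(u)\Rightarrow f_p(u)>\underline{L}(p)$.

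\emph{Part 3, the uniform gap — the main obstacle:} what remains is the quantitative strengthening in Definition~\ref{def:calibration}, namely that for each $p$ the quantity $\inf\{f_p(u):\psi(u)\notin\prop{\ell}(p)\}$ is \emph{strictly} above $\underline{L}(p)$, not merely that each such $f_p(u)$ is; this is where polyhedrality is essential. The idea is to define the link more robustly: let $F(u)$ be the set of minimizers over $\simplex$ of the convex piecewise-linear function $q\mapsto f_q(u)-\underline{\ell}(q)$ — a polyhedron on which $\underline{\ell}$ is affine, taking only finitely many values as $u$ varies — and put $\psi(u):=r_{F(u)}$ for fixed choices $r_F\in\bigcap_{q\in F}\prop{\ell}(q)\cap\Sc$ (nonempty by the relative-interior argument above). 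One then argues, using the finiteness of $\mathcal K$ and of the linearity subdivision of $\underline{\ell}$, that for each $p$ the bad set $\{u:\psi(u)\notin\prop{\ell}(p)\}$ splits into finitely many relatively open polyhedral pieces, on each of which $f_p$ is affine and can approach $\underline{L}(p)$ only along a boundary face whose points lie in $\prop{L}(p)$ — but at such points $\psi$ takes a value in $\prop{\ell}(p)$, so they are not in the bad set, a contradiction. Making this boundary bookkeeping precise, so that the infimum over the bad set is genuinely bounded away from $\underline{L}(p)$, is the crux; granting it, $(L,\psi)$ is calibrated with respect to $\ell$ and the theorem follows.
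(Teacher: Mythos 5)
You should first note that the paper itself does not prove this theorem: it imports the three claims verbatim from \citet{finocchiaro2022embedding} (their Lemma~2, Proposition~1, and Theorem~2), so your attempt has to be judged against those arguments. Your Part~1 and Part~2 are essentially correct and follow the same route as the cited work: a common refinement of the linearity regions of the $L(\cdot,y)$ gives a finite complex, the minimizer set of the (polyhedral, bounded below) expected loss is a nonempty union of closed faces of that complex and hence contains a minimal face, and picking relative-interior points of minimal faces gives a finite representative set; restricting to any finite representative set then makes Definition~\ref{def:loss-embed}(i) trivial and (ii) follow from the equality of the two infima. (Minor care is needed to argue that the minimizer set contains a face that is minimal in the whole complex, but your argument is repairable as written.)

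The genuine gap is in Part~3, and you have located it yourself but not closed it. Calibration (Definition~\ref{def:calibration}) demands that for every $p$ the \emph{infimum} of $\E_{Y\sim p}L(u,Y)$ over the bad set $\{u:\psi(u)\notin\prop{\ell}(p)\}$ be strictly above $\risk{L}(p)$, and a pointwise-correct link of the kind you build first does not deliver this: the bad set can contain points arbitrarily close to the minimizing faces, so its infimum can equal $\risk{L}(p)$ even though no bad point attains it. Your proposed fix via $F(u)$ and the concluding contradiction argument does not yet work: showing that the boundary points at which a bad piece approaches $\risk{L}(p)$ are themselves ``good'' only removes those boundary points from the bad set; it does not bound the infimum over the relatively open piece away from $\risk{L}(p)$. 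What is actually needed -- and what constitutes the content of Theorem~2 in \citet{finocchiaro2022embedding} -- is a link that is correct on a full \emph{neighborhood} of every optimal surrogate report, i.e.\ the $\epsilon$-separation property (their Definition~8 and Theorem~5, also quoted as Definition~\ref{def:epsilon-separation} in this paper): one constructs $\psi$ so that $\psi(u')\in\prop{\ell}(p)$ whenever $u'$ is within $\ell_\infty$-distance $\epsilon$ of some $u\in\prop{L}(p)$, and then uses polyhedrality of $L$ (a strong-minimum / Hoffman-type bound: outside an $\epsilon$-ball around the argmin face, the excess expected loss of a polyhedral function is bounded below by a positive constant depending only on $L$ and $\epsilon$) to convert the geometric separation into the required uniform risk gap. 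Your sketch gestures at this construction but neither defines a link with the neighborhood property nor proves the separation-to-calibration implication, so as it stands Part~3 is incomplete precisely at the step that makes the theorem nontrivial.
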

These correspond to Lemma 2, Proposition 1, and Theorem 2 in that work, respectively.
The authors also provide a construction for the calibrated link $\psi$, as well as a construction for a calibrated polyhedral surrogate given any discrete loss; we discuss both of these additional tools in \S~\ref{sec:embedding-construction}.

\section{Previous Polyhedral Surrogates}\label{sec:previous-surrogates}

\citet{lapin2015top} proposes a nonconvex surrogate for top-$k$ prediction, as well as convex upper bounds on this surrogate in~\citep{lapin2016loss}, denoted $\Li{2}$ and $\Li{3}$ here to parallel their notation.
\citet{yang2018consistency} show that $\Li{2}$ and $\Li{3}$ are inconsistent for $\lk$ classification, and introduce another inconsistent surrogate, which we denote $\Li{4}$.

All three losses $\Li{2}$, $\Li{3}$, and $\Li{4}$ are polyhedral; as such Theorem~\ref{thm:polyhedral-embeds-discrete} implies that they all embed \emph{some} discrete loss.
It is not immediately clear, however, what exactly these discrete losses are for each surrogate.
In this section, we derive a target loss that each surrogate embeds, which in each case is an interesting variant of the original top-$k$ problem.

Deriving the loss embedded by an inconsistent surrogate also allows one to understand when it would be consistent for the intended target.
In particular, by looking at the geometry of the property elicited by the surrogate, we can derive a constraint on the set of conditional label distributions under which it becomes consistent for top-$k$.
One can view these results as a refinement of inconsistency results; for example, \citet[Proposition 4.2]{yang2018consistency} characterizes the set of distributions such that the surrogate report $u = \vec 0 \in\reals^n$ is optimal, a subset of the set of distributions we eliminate.

In summary, then, we strive in this section to answer two questions about $\Li{2}$, $\Li{3}$, and $\Li{4}$: (i) What discrete loss does the surrogate embed?  (ii) On which conditional label distributions is the surrogate actually consistent for top-$k$?

To answer (i), we find a finite representative set and apply Theorem~\ref{thm:polyhedral-embeds-discrete}, which shows that restricting to that set gives an embedding.
To find this set, we first observe that these surrogates are all invariant in the $\ones$ direction, meaning $L(u,y) = L(u + \alpha \ones, y)$ for all $\alpha \in \reals$.
Furthermore, we can fix the lowest $n-k-1$ elements of $U$ to be the same as $u_\kp$, as this can only improve the loss on any outcome.
We can therefore restrict our attention to the set of reports
\begin{equation}\label{eq:U-inf-rep}
    U = \{u \in\reals^n_+ \mid u_\kp = 0 = u_{[n]} \},
\end{equation}
which is representative, although infinite.
In some cases, we further restrict $U$ to a region where the positive part operator $(\cdot)_+$ can be removed.
In each case, we partition the resulting set into polytope regions over which the surrogate is affine; in other words, we find the pieces for which the loss is piecewise linear.
By the theory of polyhedral functions, for each conditional label distribution, at least one vertex of one of these regions must be a minimizer of the expected loss.
The union of all such vertices therefore yields a finite representative set.
As a final step, in each case we reparameterize this set of vertices with a bijection to a more natural prediction set, which more transparently reveals a variant of the top-$k$ problem.
Applying such a bijection preserves the embedding by Definition~\ref{def:loss-embed}.

To answer (ii), we observe that in all cases, inconsistency is driven by surrogate reports for which the set of top-$k$ elements is ambiguous, thus forcing the link to break a tie.
Specifically, for reports $u\in\reals^n$ with $u_\k = u_\kp$, we have multiple options for $\topkset(u)$, yet $\link$ must select one.
Let $U_{\text{ambig}} = \{ u\in\reals^n \mid u_\k = u_\kp \}$ be the set of these ambiguous surrogate reports.
Whenever a report $u \in U_{\text{ambig}}$ is optimal for a conditional label distribution $p$ for which $\topkset(p)$ is \emph{not} ambiguous, i.e.\ $p_\k > p_\kp$, we will have inconsistency.
Therefore, $(\Li{i},\link)$ is consistent with respect to $\lk$ on the set $\P^{(i)} := \{ p \in \simplex \mid \prop{\Li{i}}(p) \cap U_{\text{ambig}} = \emptyset \}$ of conditional label distributions for which there is no ambiguous optimal report.

\begin{table*}
\centering
\begin{tabular}{ccccc}
    & $\Li{2}$ & $\Li{3}$ & $\Li{4}$ & $\Lk$\\
    \hline \hline
    \rotatebox[origin=c]{90}{$k = 2$\hspace*{-2.5cm}} & \begin{tikzpicture} [scale=\tikzfigscale, thick, tdplot_main_coords]
\coordinate (orig) at (0,0,0);

\coordinate (uniform) at (1/4,1/4,1/4);
\coordinate[label=below:${p_1}$] (h) at (3/4,0,0);
\coordinate[label=below:${p_3}$] (m) at (0,3/4,0);
\coordinate[label=right:${p_2}$] (l) at (0,0,3/4);

\coordinate (one) at (3/4 - 1/8, 0, 1/8);
\coordinate (two) at (3/4 - 1/8, 1/8, 0);
\coordinate (three) at (1/8, 3/4 - 1/8, 0);
\coordinate (four) at (0, 3/4 - 1/8, 1/8);
\coordinate (five) at (0, 1/8, 3/4-1/8);
\coordinate (six) at (1/8, 0, 3/4-1/8);
\coordinate (seven) at (1/3, 0, 3/4 - 1/3);
\coordinate (eight) at (3/4 - 1/3, 0, 1/3);
\coordinate (nine) at (3/4 - 1/3, 1/3, 0);
\coordinate (ten) at (1/3, 3/4 - 1/3, 0);
\coordinate (eleven) at (0, 3/4 - 1/3, 1/3);
\coordinate (twelve) at (0, 1/3, 3/4 - 1/3);
\coordinate (thirteen) at (1/3, 3/4 - 2/3, 1/3);
\coordinate (fourteen) at (1/3, 1/3, 3/4 - 2/3);
\coordinate (fifteen) at (3/4 - 2/3, 1/3, 1/3);

\draw[simplex] (h) -- (m) -- (l) -- (h);

\begin{scope}
\clip (h) -- (m) -- (l) -- (h);

\draw (h) -- (one) -- (two) -- cycle;
\draw (l) -- (five) -- (six) -- cycle;
\draw (m) -- (three) -- (four) -- cycle;

\draw[fill = blue, fill opacity = 0.1] (one) -- (two) -- (nine)-- (fourteen) -- (thirteen) -- (eight) -- cycle;
\draw[fill = blue, fill opacity = 0.1] (three) -- (four) -- (eleven) -- (fifteen) -- (fourteen) -- (ten) -- cycle;
\draw[fill = blue, fill opacity = 0.1] (five) -- (six) -- (seven) -- (thirteen) -- (fifteen) -- (twelve) -- cycle;

\draw (nine) -- (ten) -- (fourteen) -- cycle;
\draw (seven) -- (eight) -- (thirteen) -- cycle;
\draw (eleven) -- (twelve) -- (fifteen) -- cycle;

\draw[fill = blue, fill opacity = 0.1] (thirteen) -- (fourteen) -- (fifteen) -- cycle;

\draw[dashed, color=blue, opacity = 0.6] (1/2, 1/4, 0) -- (uniform);
\draw[dashed, color=blue, opacity = 0.6] (1/2, 0, 1/4) -- (uniform);
\draw[dashed, color=blue, opacity = 0.6](1/4, 0, 1/2) -- (uniform);
\draw[dashed, color=blue, opacity = 0.6] (0, 1/4, 1/2) -- (uniform);
\draw[dashed, color=blue, opacity = 0.6](0, 1/2, 1/4) -- (uniform);
\draw[dashed, color=blue, opacity = 0.6] (1/4, 1/2, 0) -- (uniform);

\end{scope}

\node (129138center) at (23/48, 13/96, 13/96) {};
\node (129138label) at (23/48, 13/96, -1/5) {{\tiny $(\emptyset, 1)$}};
\draw[-{Latex[length=1mm, width=1mm]}, red] (129138label) -- (129138center.center);

\node (91014center) at (3/8 - 1/48, 3/8 - 1/48, 1/24) {};
\node (91014label) at (3/8, 3/8, -1/5) {{\tiny $(\emptyset, 13)$}};
\draw[-{Latex[length=1mm, width=1mm]}, red] (91014label) -- (91014center.center);

\node (midtricenter) at (1/6, 3/8 - 1/12, 3/8 - 1/12) {};
\node[rotate=0] (midtrilabel) at (-1/6, 3/8, 3/8) {{\tiny $(\emptyset, \emptyset)$}};
\draw[-{Latex[length=1mm, width=1mm]}, red] (midtrilabel) -- (midtricenter.center);

\node (endtricenter) at (3/4 - 1/12, 1/24, 1/24) {};
\node (endtrilabel) at (3/4-1/6, -1/6, 1/6) {{\tiny $(1,4)$}};
\draw[-{Latex[length=1mm, width=1mm]}, red] (endtrilabel) -- (endtricenter.center);

\end{tikzpicture} & \begin{tikzpicture} [scale=\tikzfigscale, thick, tdplot_main_coords]
\coordinate (orig) at (0,0,0);

\coordinate (uniform) at (1/4,1/4,1/4);
\coordinate[label=below:${p_1}$] (h) at (3/4,0,0);
\coordinate[label=below:${p_3}$] (m) at (0,3/4,0);
\coordinate[label=right:${p_2}$] (l) at (0,0,3/4);

\coordinate (one) at (3/4 - 1/8, 0, 1/8);
\coordinate (two) at (3/4 - 1/8, 1/8, 0);
\coordinate (three) at (1/8, 3/4 - 1/8, 0);
\coordinate (four) at (0, 3/4 - 1/8, 1/8);
\coordinate (five) at (0, 1/8, 3/4-1/8);
\coordinate (six) at (1/8, 0, 3/4-1/8);
\coordinate (seven) at (1/3, 0, 3/4 - 1/3);
\coordinate (eight) at (3/4 - 1/3, 0, 1/3);
\coordinate (nine) at (3/4 - 1/3, 1/3, 0);
\coordinate (ten) at (1/3, 3/4 - 1/3, 0);
\coordinate (eleven) at (0, 3/4 - 1/3, 1/3);
\coordinate (twelve) at (0, 1/3, 3/4 - 1/3);
\coordinate (thirteen) at (1/3, 3/4 - 2/3, 1/3);
\coordinate (fourteen) at (1/3, 1/3, 3/4 - 2/3);
\coordinate (fifteen) at (3/4 - 2/3, 1/3, 1/3);

\draw[simplex] (h) -- (m) -- (l) -- (h);

\begin{scope}
\clip (h) -- (m) -- (l) -- (h);

\draw (h) -- (one) -- (two) -- cycle;
\draw (l) -- (five) -- (six) -- cycle;
\draw (m) -- (three) -- (four) -- cycle;

\draw[fill = blue, fill opacity = 0.1] (one) -- (two) -- (nine)-- (fourteen) -- (thirteen) -- (eight) -- cycle;
\draw[fill = blue, fill opacity = 0.1] (three) -- (four) -- (eleven) -- (fifteen) -- (fourteen) -- (ten) -- cycle;
\draw[fill = blue, fill opacity = 0.1] (five) -- (six) -- (seven) -- (thirteen) -- (fifteen) -- (twelve) -- cycle;

\draw (nine) -- (ten) -- (fourteen) -- cycle;
\draw (seven) -- (eight) -- (thirteen) -- cycle;
\draw (eleven) -- (twelve) -- (fifteen) -- cycle;

\draw[fill = blue, fill opacity = 0.1] (thirteen) -- (fourteen) -- (fifteen) -- cycle;

\draw[dashed, color=blue, opacity = 0.6] (1/2, 1/4, 0) -- (uniform);
\draw[dashed, color=blue, opacity = 0.6] (1/2, 0, 1/4) -- (uniform);
\draw[dashed, color=blue, opacity = 0.6](1/4, 0, 1/2) -- (uniform);
\draw[dashed, color=blue, opacity = 0.6] (0, 1/4, 1/2) -- (uniform);
\draw[dashed, color=blue, opacity = 0.6](0, 1/2, 1/4) -- (uniform);
\draw[dashed, color=blue, opacity = 0.6] (1/4, 1/2, 0) -- (uniform);

\end{scope}

\node (129138center) at (23/48, 13/96, 13/96) {};
\node (129138label) at (23/48, 13/96, -1/6) {{\tiny $(234,1)$}};
\draw[-{Latex[length=1mm, width=1mm]}, red] (129138label) -- (129138center.center);

\node (91014center) at (3/8 - 1/48, 3/8 - 1/48, 1/24) {};
\node (91014label) at (3/8, 3/8, -1/4) {{\tiny $(24,13)$}};
\draw[-{Latex[length=1mm, width=1mm]}, red] (91014label) -- (91014center.center);

\node (midtricenter) at (1/6, 3/8 - 1/12, 3/8 - 1/12) {};
\node[rotate=0] (midtrilabel) at (-1/5, 3/8, 3/8) {{\tiny $(1234)$}};
\draw[-{Latex[length=1mm, width=1mm]}, red] (midtrilabel) -- (midtricenter.center);

\node (endtricenter) at (3/4 - 1/12, 1/24, 1/24) {};
\node[rotate = 60] (endtrilabel) at (3/4-1/6, -1/6, 1/6) {{\tiny $(23,4,1)$}};
\draw[-{Latex[length=1mm, width=1mm]}, red] (endtrilabel) -- (endtricenter.center);

\end{tikzpicture} & \begin{tikzpicture} [scale=\tikzfigscale, thick, tdplot_main_coords]
\coordinate (orig) at (0,0,0);

\coordinate (uniform) at (1/4,1/4,1/4);
\coordinate[label=below:${p_1}$] (h) at (3/4,0,0);
\coordinate[label=below:${p_3}$] (m) at (0,3/4,0);
\coordinate[label=right:${p_2}$] (l) at (0,0,3/4);

\coordinate (one) at (1/2, 1/4, 0);
\coordinate (two) at (1/4, 1/2, 0);
\coordinate (three) at (0, 1/2, 1/4);
\coordinate (four) at (0, 1/4, 1/2);
\coordinate (five) at (1/4, 0, 1/2);
\coordinate (six) at (1/2, 0, 1/4);
\coordinate (seven) at (1/3, 1/3, 1/12);
\coordinate (eight) at (1/12, 1/3, 1/3);
\coordinate (nine) at (1/3, 1/12, 1/3);

\draw[simplex] (h) -- (m) -- (l) -- (h);

\begin{scope}
\clip (h) -- (m) -- (l) -- (h);

\draw (h) -- (one) -- (six) -- cycle;
\draw (l) -- (four) -- (five) -- cycle;
\draw (m) -- (three) -- (two) -- cycle;

\draw (one) -- (two) -- (seven) -- cycle;
\draw (three) -- (four) -- (eight) -- cycle;
\draw (five) -- (six) -- (nine) -- cycle;

\draw[fill=blue,fill opacity = 0.1] (one) -- (seven) -- (nine) -- (six) -- cycle;
\draw[fill=blue,fill opacity = 0.1] (two) -- (three) -- (eight) -- (seven) -- cycle;
\draw[fill=blue,fill opacity = 0.1] (four) -- (five) -- (nine) -- (eight) -- cycle;

\draw[fill=blue,fill opacity = 0.1] (seven) -- (eight) -- (nine) -- cycle;

\draw[dashed, color=blue, opacity = 0.6] (1/2, 1/4, 0) -- (uniform);
\draw[dashed, color=blue, opacity = 0.6] (1/2, 0, 1/4) -- (uniform);
\draw[dashed, color=blue, opacity = 0.6](1/4, 0, 1/2) -- (uniform);
\draw[dashed, color=blue, opacity = 0.6] (0, 1/4, 1/2) -- (uniform);
\draw[dashed, color=blue, opacity = 0.6](0, 1/2, 1/4) -- (uniform);
\draw[dashed, color=blue, opacity = 0.6] (1/4, 1/2, 0) -- (uniform);

\end{scope}

\node (13center) at (3/8 - 1/48, 3/8 - 1/48, 1/24) {};
\node (13label) at (3/8, 3/8, -1/5) {{\tiny $1,3$}};
\draw[-{Latex[length=1mm, width=1mm]}, red] (13label) -- (13center.center);

\node (emptycenter) at (.23 * .75, .38 * .75,.38 * .75) {};
\node (emptylabel) at (-1/8, 3/8, 3/8) {{\tiny $\emptyset$}};
\draw[-{Latex[length=1mm, width=1mm]}, red] (emptylabel) -- (emptycenter.center);

\node (2center) at (.2 * .75, .2 * .75,.6 * .75) {};
\node (2label) at (3/8, -1/8, 5/8) {{\tiny $2$}};
\draw[-{Latex[length=1mm, width=1mm]}, red] (2label) -- (2center.center);

\node (14center) at (.8 * .75, .1 * .75,.1 * .75) {};
\node (14label) at (5/8, -1/8, 3/8) {{\tiny $14$}};
\draw[-{Latex[length=1mm, width=1mm]}, red] (14label) -- (14center.center);

\end{tikzpicture} & \begin{tikzpicture} [scale=\tikzfigscale, thick, tdplot_main_coords]
\coordinate (orig) at (0,0,0);

\coordinate (uniform) at (1/4,1/4,1/4);
\coordinate[label=below:${p_1}$] (h) at (3/4,0,0);
\coordinate[label=below:${p_3}$] (m) at (0,3/4,0);
\coordinate[label=right:${p_2}$] (l) at (0,0,3/4);

\draw[simplex] (h) -- (m) -- (l) -- (h);

\begin{scope}
\clip (h) -- (m) -- (l) -- (h);

\draw[opacity=0.9] (2/3, 1/3, 0) -- (uniform);
\draw[opacity=0.9] (2/3, 0, 1/3) -- (uniform);
\draw[opacity=0.9] (1/3, 0, 2/3) -- (uniform);
\draw[opacity=0.9] (0, 1/3, 2/3) -- (uniform);
\draw[opacity=0.9] (0, 2/3, 1/3) -- (uniform);
\draw[opacity=0.9] (1/3, 2/3, 0) -- (uniform);

\end{scope}

\node (14center) at (.66 * .75, 0.16 * .75, 0.16 * 0.75) {};
\node (14label) at (5/8, -1/4,2/8) {{\tiny $14$}};
\draw[-{Latex[length=1mm, width=1mm]}, red] (14label) -- (14center.center);

\node (23center) at (.1 * .75, 0.45 * .75, 0.45 * 0.75) {};
\node (23label) at (-1/8, 1/2,1/2) {{\tiny $23$}};
\draw[-{Latex[length=1mm, width=1mm]}, red] (23label) -- (23center.center);

\end{tikzpicture}\\ 
    \rotatebox[origin=c]{90}{$k = 3$\hspace*{-2.5cm}} & \begin{tikzpicture} [scale=\tikzfigscale, thick, tdplot_main_coords]
\coordinate (orig) at (0,0,0);

\coordinate (uniform) at (1/4,1/4,1/4);
\coordinate[label=below:${p_1}$] (h) at (3/4,0,0);
\coordinate[label=below:${p_3}$] (m) at (0,3/4,0);
\coordinate[label=right:${p_2}$] (l) at (0,0,3/4);

\coordinate (one) at (1/2, 1/20, 3/4 - 1/2 - 1/20);
\coordinate (two) at (1/2, 3/4 - 1/2 - 1/20,1/20);
\coordinate (three) at (3/4 - 1/2 - 1/20,1/2,1/20);
\coordinate (four) at (1/20,1/2, 3/4 - 1/2 - 1/20);
\coordinate (five) at (1/20,3/4 - 1/2 - 1/20, 1/2);
\coordinate (six) at (3/4 - 1/2 - 1/20,1/20, 1/2);
\coordinate (seven) at (1/2, 0, 1/4);
\coordinate (eight) at (1/2, 1/4, 0);
\coordinate (nine) at (1/4, 1/2, 0);
\coordinate (ten) at (0, 1/2, 1/4);
\coordinate (eleven) at (0, 1/4, 1/2);
\coordinate (twelve) at (1/4, 0, 1/2);
\coordinate (thirteen) at (1/3, 1/12, 1/3);
\coordinate (fourteen) at (1/3, 1/3, 1/12);
\coordinate (fifteen) at (1/12, 1/3, 1/3);
\coordinate (sixteen) at (1/6, 1/4, 1/3);
\coordinate (seventeen) at (1/4, 1/6, 1/3);
\coordinate (eighteen) at (1/3, 1/6, 1/4);
\coordinate (nineteen) at (1/3, 1/4, 1/6);
\coordinate (twenty) at (1/4, 1/3, 1/6);
\coordinate (twentyone) at (1/6, 1/3, 1/4);
\coordinate(twentytwo) at (1/4, 1/4, 1/4);

\draw[simplex] (h) -- (m) -- (l) -- (h);

\begin{scope}
\clip (h) -- (m) -- (l) -- (h);

\draw[fill=blue,fill opacity = 0.1] (h) -- (one) -- (two) -- cycle;
\draw[fill=blue,fill opacity = 0.1] (m) -- (three) -- (four) -- cycle;
\draw[fill=blue,fill opacity = 0.1] (l) -- (five) -- (six) -- cycle;

\draw (h) -- (one) -- (seven) -- cycle;
\draw (h) -- (two) -- (eight) -- cycle;
\draw (m) -- (three) -- (nine) -- cycle;
\draw (m) -- (four) -- (ten) -- cycle;
\draw (l) -- (five) -- (eleven) -- cycle;
\draw (l) -- (six) -- (twelve) -- cycle;

\draw (seven) -- (twelve) -- (thirteen) -- cycle;
\draw (eight) -- (nine) -- (fourteen) -- cycle;
\draw (ten) -- (eleven) -- (fifteen) -- cycle;

\draw (five) -- (eleven) -- (fifteen) -- (sixteen) -- cycle;
\draw (six) -- (twelve) -- (thirteen) -- (seventeen) -- cycle;
\draw (five) -- (six) -- (seventeen) -- (sixteen) -- cycle;
\draw (one) -- (seven) -- (thirteen) -- (eighteen) -- cycle;
\draw[fill=blue,fill opacity = 0.1] (one) -- (two) -- (nineteen) -- (eighteen) -- cycle;
\draw (two) -- (eight) -- (fourteen) -- (nineteen) -- cycle;
\draw (three) -- (nine) -- (fourteen) -- (twenty) -- cycle;
\draw[fill=blue,fill opacity = 0.1] (three) -- (four) -- (twentyone) -- (twenty) -- cycle;
\draw (four) -- (ten) -- (fifteen) -- (twentyone) -- cycle;
\draw[fill=blue,fill opacity = 0.1] (five) -- (six) -- (seventeen) -- (sixteen) -- cycle;

\draw (fourteen) -- (nineteen) -- (twentytwo) -- (twenty) -- cycle;
\draw[fill=blue,fill opacity = 0.1] (eighteen) -- (nineteen) -- (twentytwo) -- cycle;
\draw (thirteen) -- (eighteen) -- (twentytwo) -- (seventeen) -- cycle;
\draw[fill=blue,fill opacity = 0.1] (sixteen) -- (seventeen) -- (twentytwo) -- cycle;
\draw (fifteen) -- (sixteen) -- (twentytwo) -- (twentyone) -- cycle;
\draw[fill=blue,fill opacity = 0.1] (twenty) -- (twentyone) -- (twentytwo) -- cycle;

\draw[dashed, color=blue, opacity = 0.6] (h) -- (twentytwo) -- (m) -- (twentytwo) -- (l);

\end{scope}

\node (h12center) at (5/8, 1/16, 1/16) {};
\node (h12label) at (5/8, 1/8, -1/4) {{\tiny \,\,\, \,\, $(\emptyset,14)$}};
\draw[-{Latex[length=1mm, width=1mm]}, red] (h12label) -- (h12center.center);

\node (121918center) at (5/12, 1/6, 1/6) {};
\node (121918label) at (1/3, 1/4, -1/5) {{\tiny $(1,4)$}};
\draw[-{Latex[length=1mm, width=1mm]}, red] (121918label) -- (121918center.center);

\node (71213center) at (23/64, 1/32, 23/64) {};
\node[rotate=60] (71213label) at (3/8, -1/3, 3/8) {{\tiny $(12,4)$}};
\draw[-{Latex[length=1mm, width=1mm]}, red] (71213label) -- (71213center.center);

\node (h17center) at (5/8, 1/32, 3/32) {};
\node[rotate=60] (h17label) at (5/8, -1/5, 1/8) {{\tiny $(14,2)$}};
\draw[-{Latex[length=1mm, width=1mm]}, red] (h17label) -- (h17center.center);

\node (202122center) at ( 7/32,5/16, 7/32 ) {};
\node[rotate=-60] (202122label) at (-1/8, 5/8, 1/8) {{\tiny  $(\emptyset, 34)$}};
\draw[-{Latex[length=1mm, width=1mm]}, red] (202122label) -- (202122center.center);

\node (15162221center) at (5/16, 13/32, 13/32) {};
\node[rotate=-60] (15162221label) at (-1/4, 3/8, 3/8) {{\tiny \,\, $(\emptyset,234)$}};
\draw[-{Latex[length=1mm, width=1mm]}, red] (15162221label) -- (15162221center.center);

\node (5111516center) at (1/12,1/4, 5/12) {};
\node[rotate=0] (5111516label) at (-1/5, 1/8, 9/16) {{\tiny \,\,\,\,\,\,$(2,34)$}};
\draw[-{Latex[length=1mm, width=1mm]}, red] (5111516label) -- (5111516center.center);

\end{tikzpicture} & \begin{tikzpicture} [scale=\tikzfigscale, thick, tdplot_main_coords]
\coordinate (orig) at (0,0,0);

\coordinate (uniform) at (1/4,1/4,1/4);
\coordinate[label=below:${p_1}$] (h) at (3/4,0,0);
\coordinate[label=below:${p_3}$] (m) at (0,3/4,0);
\coordinate[label=right:${p_2}$] (l) at (0,0,3/4);

\coordinate (one) at (3/4 - 1/8, 0, 1/8);
\coordinate (two) at (3/4 - 1/8, 1/8, 0);
\coordinate (three) at (1/8, 3/4 - 1/8, 0);
\coordinate (four) at (0, 3/4 - 1/8, 1/8);
\coordinate (five) at (0, 1/8, 3/4 - 1/8);
\coordinate (six) at (1/8, 0, 3/4 - 1/8);
\coordinate (seven) at (3/4 - 1/8, 1/40,1/8 - 1/40);
\coordinate (eight) at (3/4 - 1/8, 1/8 - 1/40, 1/40);
\coordinate (nine) at (1/8 - 1/40, 3/4 - 1/8, 1/40);
\coordinate (ten) at (1/40, 3/4 - 1/8, 1/8 - 1/40);
\coordinate (eleven) at (1/40, 1/8 - 1/40, 3/4 - 1/8);
\coordinate (twelve) at (1/8 - 1/40, 1/40, 3/4 - 1/8);
\coordinate (thirteen) at (3/4 - 1/3, 1/3, 0);
\coordinate (fourteen) at (1/3, 3/4 - 1/3, 0);
\coordinate (fifteen) at (0,3/4 - 1/3, 1/3);
\coordinate (sixteen) at (0,1/3, 3/4 - 1/3);
\coordinate (seventeen) at (1/3, 0, 3/4 - 1/3);
\coordinate (eighteen) at (3/4 - 1/3,0, 1/3);
\coordinate (nineteen) at (1/2,1/8, 1/8);
\coordinate (twenty) at (1/3, 1/3, 3/4 - 2/3);
\coordinate (twentyone) at (1/8, 1/2, 1/8);
\coordinate (twentytwo) at (3/4 - 2/3, 1/3, 1/3);
\coordinate (twentythree) at (1/8, 1/8, 1/2);
\coordinate (twentyfour) at (1/3, 3/4 - 2/3, 1/3);
\coordinate (twentyfive) at (1/3, 1/4, 1/2 - 1/3);
\coordinate (twentysix) at (1/4, 1/3, 1/2 - 1/3);
\coordinate (twentyseven) at (1/2 - 1/3, 1/3, 1/4);
\coordinate (twentyeight) at (1/2 - 1/3, 1/4, 1/3);
\coordinate (twentynine) at (1/4, 1/2-1/3, 1/3);
\coordinate (thirty) at (1/3, 1/2 - 1/3, 1/4);

\draw[simplex] (h) -- (m) -- (l) -- (h);

\begin{scope}
\clip (h) -- (m) -- (l) -- (h);

\draw (h) -- (one) -- (seven) -- cycle;
\draw (h) -- (two) -- (eight) -- cycle;
\draw (m) -- (three) -- (nine) -- cycle;
\draw (m) -- (four) -- (ten) -- cycle;
\draw (l) -- (five) -- (eleven) -- cycle;
\draw (l) -- (six) -- (twelve) -- cycle;

\draw[fill=blue,fill opacity = 0.1] (h) -- (seven) -- (nineteen) -- (eight) -- cycle;
\draw[fill=blue,fill opacity = 0.1] (m) -- (nine) -- (twentyone) -- (ten) -- cycle;
\draw[fill=blue,fill opacity = 0.1] (l) -- (eleven) -- (twentythree) -- (twelve) -- cycle;

\draw (one) -- (seven) -- (nineteen) -- (thirty) -- (twentyfour) -- (eighteen) -- cycle;
\draw (two) -- (eight) -- (nineteen) -- (twentyfive) -- (twenty) -- (thirteen); 
\draw (three) -- (nine) -- (twentyone) -- (twentysix) -- (twenty) -- (fourteen);
\draw (four) -- (ten) -- (twentyone) -- (twentyseven) -- (twentytwo) -- (fifteen) -- cycle;
\draw (five) -- (eleven) -- (twentythree) -- (twentyeight) -- (twentytwo) -- (sixteen) -- cycle;
\draw (six) -- (twelve) -- (twentythree) -- (twentynine) -- (twentyfour) -- (seventeen) -- cycle;

\draw[fill=blue,fill opacity = 0.1] (nineteen) -- (twentyfive) -- (uniform) -- (thirty);
\draw[fill=blue,fill opacity = 0.1] (twentyone) -- (twentyseven) -- (uniform) -- (twentysix);
\draw[fill=blue,fill opacity = 0.1] (twentythree) -- (twentyeight) -- (uniform) -- (twentynine) -- cycle;

\draw (twenty) -- (twentysix) -- (uniform) -- (twentyfive) -- cycle;
\draw (twentytwo) -- (twentyseven) -- (uniform) -- (twentyeight) -- cycle;
\draw (twentyfour) -- (twentynine) -- (uniform) -- (thirty) -- cycle;

\draw[dashed, color=blue, opacity = 0.6] (h) -- (uniform) -- (m) -- (uniform) -- (l);

\end{scope}

\node (2101center) at (.6 * .75, 0.1 * .75, 0.3 * 0.75) {};
\node (2101label) at (5/8,-1/4, 3/8) {{\tiny $(3,24,1)$}};
\draw[-{Latex[length=1mm, width=1mm]}, red] (2101label) -- (2101center.center);

\node (2001center) at (.8 * .75, 0.1 * .75, 0.1 * 0.75) {};
\node (2001label) at (7/8,-1/4, 1/8) {{\tiny $(23,4,1)$}};
\draw[-{Latex[length=1mm, width=1mm]}, red] (2001label) -- (2001center.center);

\node (2201center) at (.475 * .75, 0.05 * .75, 0.475 * 0.75) {};
\node (2201label) at (3/8,-1/8, 5/8) {{\tiny $(3,4,12)$}};
\draw[-{Latex[length=1mm, width=1mm]}, red] (2201label) -- (2201center.center);

\node (1011center) at (.405 * .75, 0.405 * 0.75, 0.19 * .75) {};
\node (1011label) at (4/8, 2/8, -1/6) {{\tiny $(2, 134)$}};
\draw[-{Latex[length=1mm, width=1mm]}, red] (1011label) -- (1011center.center);

\node (0011center) at (.25 * .75, 0.5 * 0.75, 0.25 * .75) {};
\node (0011label) at (2/8, 4/8, -1/6) {{\tiny $(12, 34)$}};
\draw[-{Latex[length=1mm, width=1mm]}, red] (0011label) -- (0011center.center);

\end{tikzpicture} & \begin{tikzpicture} [scale=\tikzfigscale, thick, tdplot_main_coords]
\coordinate (orig) at (0,0,0);

\coordinate (uniform) at (1/4,1/4,1/4);
\coordinate[label=below:${p_1}$] (h) at (3/4,0,0);
\coordinate[label=below:${p_3}$] (m) at (0,3/4,0);
\coordinate[label=right:${p_2}$] (l) at (0,0,3/4);

\draw[simplex] (h) -- (m) -- (l) -- (h);

\begin{scope}
\clip (h) -- (m) -- (l) -- (h);

\draw (h) -- (uniform) -- (m) -- cycle;
\draw (h) -- (uniform) -- (l) -- cycle;
\draw (m) -- (uniform) -- (l) -- cycle;

\draw[dashed, color=blue, opacity = 0.6] (h) -- (uniform) -- (m) -- (uniform) -- (l);

\end{scope}

\node (134center) at (.45 * .75, 0.45 * .75, 0.1 * 0.75) {};
\node (134label) at (3/8, 3/8, -1/5) {{\tiny $134$}};
\draw[-{Latex[length=1mm, width=1mm]}, red] (134label) -- (134center.center);

\node (234center) at (.1 * .75, 0.45 * .75, 0.45 * 0.75) {};
\node (234label) at (-1/4, 3/8, 3/8) {{\tiny $234$}};
\draw[-{Latex[length=1mm, width=1mm]}, red] (234label) -- (234center.center);

\node (124center) at (.45 * .75, 0.1 * .75, 0.45 * 0.75) {};
\node (124label) at (3/8, -1/4,3/8) {{\tiny $124$}};
\draw[-{Latex[length=1mm, width=1mm]}, red] (124label) -- (124center.center);

\end{tikzpicture} & \begin{tikzpicture} [scale=\tikzfigscale, thick, tdplot_main_coords]
\coordinate (orig) at (0,0,0);

\coordinate (uniform) at (1/4,1/4,1/4);
\coordinate[label=below:${p_1}$] (h) at (3/4,0,0);
\coordinate[label=below:${p_3}$] (m) at (0,3/4,0);
\coordinate[label=right:${p_2}$] (l) at (0,0,3/4);

\draw[simplex] (h) -- (m) -- (l) -- (h);

\begin{scope}
\clip (h) -- (m) -- (l) -- (h);

\draw (h) -- (uniform) -- (m) -- cycle;
\draw (h) -- (uniform) -- (l) -- cycle;
\draw (m) -- (uniform) -- (l) -- cycle;

\draw[dashed, color=blue, opacity = 0.6] (h) -- (uniform) -- (m) -- (uniform) -- (l);

\end{scope}

\node (134center) at (.45 * .75, 0.45 * .75, 0.1 * 0.75) {};
\node (134label) at (3/8, 3/8, -1/5) {{\tiny $134$}};
\draw[-{Latex[length=1mm, width=1mm]}, red] (134label) -- (134center.center);

\node (234center) at (.1 * .75, 0.45 * .75, 0.45 * 0.75) {};
\node (234label) at (-1/4, 3/8, 3/8) {{\tiny $234$}};
\draw[-{Latex[length=1mm, width=1mm]}, red] (234label) -- (234center.center);

\node (124center) at (.45 * .75, 0.1 * .75, 0.45 * 0.75) {};
\node (124label) at (3/8, -1/4,3/8) {{\tiny $124$}};
\draw[-{Latex[length=1mm, width=1mm]}, red] (124label) -- (124center.center);

\end{tikzpicture}\\ 
\end{tabular}\caption{Visualizations of the minimizers of the losses (embedded by) $\Li{2}$, $\Li{3}$, $\Li{4}$, and $\Lk$ with $n=4$ and $k \in \{2,3\}$, fixing $p_4 = 1/4$. The dashed blue lines give sets of distributions $p$ corresponding to the same report $u$ such that $u \in \prop{\Lk}(p)$. As we must link reports deterministically, we want each of the bold, black cells to be fully contained in a cell from the blue dashed cells. Blue regions cross the dashed blue lines and suggest where deciding how to construct a link $\psi$ is ambiguous, as $|\topkset(u)| > 1$.  White regions are therefore where the surrogate and any top-$k$ link are consistent, e.g., $\P^{(i)}$. On the right, $\Lk$ shows our proposed surrogate that is consistent for top-$k$ classification, demonstrated by no blue regions.  Each of the cells corresponds to a subset of distributions where exactly $k$ reports are optimal. }
\label{tab:loss-slices}
\end{table*}

\subsection{Analysis of $\Li{2}$}\label{sec:L2}
The surrogate $\Li{2}$ proposed by~\citet{lapin2016loss} is given by
\begin{align}
\restatableeq{\Ltwo}{\Li{2}(u,y)~=~\left(1 - u_y + \frac{1}{k} \sum_{i=1}^k (u - e_y)_{[i]} \right)_+~.~}{eq:psi-2}
\end{align}
We will derive a discrete loss $\elli 2$ in eq.~\eqref{eq:hat-ell-2} that $\Li{2}$ embeds, and then use it to characterize the set of distributions $\P^{(2)}$ on which $(\Li{2}, \link)$ is consistent with respect to $\lk$.
See \S~\ref{appendix:L2} for all omitted details.

By our strategy outlined above, we begin with the set $U$ (eq.~\eqref{eq:U-inf-rep}), which is representative for $\Li 2$.
We then construct the bounded region $\Upostwo \subset U$ in which the positive part operator in eq.~\eqref{eq:psi-2} is not activated, and show $\Upostwo$ is representative.
We next partition $\Upostwo$ into polytope regions over which $\Li{2}$ is affine.
When restricting to $\Upostwo$, the only way $\Li{2}(\cdot,y)$ fails to be affine is in the top-$k$ elements of a prediction $(u - e_y)$ changing.
Observe that, up to tie-breaking, the top $k$ elements of $(u-e_y)$ are the same as the top $k$ elements of $u$ if and only if $u_y \geq 1 = 1 + u_{[k+1]}$.
$\Li 2$ is therefore affine on regions where $\sign(u_i - 1)$ is constant for all $i \in \{1, \ldots, k\}$. 
Further examining these affine regions reveals that their vertices are the points $u \in \reals^n_+$ such that $u_i \in \{0,1, c(u)\}$ for a particular value $c(u) > 1$ that depends on how many entries of $u$ are nonzero and how many are strictly greater than 1.

Taking the union of these vertices, we arrive at a finite representative set for $\Li 2$.
Theorem~\ref{thm:polyhedral-embeds-discrete} now states that $\Li 2$ embeds $\Li 2$ restricted to this vertex set.
To state this discrete loss more intuitively, we simply reparameterize these vertices, letting $M$ be the set of entries equal to 1, and $H$ the set strictly greater than 1.
Letting $\Ri 2$ be the set of valid pairs $(H,M)$, namely disjoint and with $|H\cup M| \leq k$, we arrive at the following discrete loss $\elli 2:\Ri 2\times \Y\to\reals$ embedded by $\Li 2$.
\begin{align} \label{eq:hat-ell-2}
    \elli 2((H,M), y) = \begin{cases}
    0 & y \in H \\
     \frac{|H| + |M| - 1}{k-|H|}
     & y \in M\\
     \frac{|H| + |M| - 1}{k-|H|} + \frac{k+1}{k}
     & \text{otherwise}
    \end{cases}\hspace*{-10pt}
\end{align}
One can regard $H$ as the ``high labels'', with high likelihood of being the ground truth label, and $M$ the ``medium labels'', with some likelihood.
One therefore attains loss $0$ if they were highly confident in the ground truth label, and accumulate a loss that grows in the size of $H$ and $M$ otherwise.

By our observations above, consistency with respect to top-$k$ is achieved whenever the optimal report is some $(H,M)$ with $|H\cup M| = k$. 
This condition can be written as follows, where $h^*(p) = \max \{i \in \{0,\ldots, k\} \mid  p_\i > \frac{1 - \sigma_{i-1}(p)}{k-(i-1)} \}$.

\begin{restatable}{corollary}{psitwoconsistent}\label{cor:psi-2-inconsistent-region}
    Define
    \begin{align}
\P^{(2)} := \left\{p \in \simplex \mid p_\k > \frac{(1-\sigma_{h^*(p)}(p))}{(k+1)(k-h^*(p))} \right\}~.~\label{eq:consistent-set-L2}
\end{align}
    $\Li{2}$ is consistent with respect to $\lk$ on $\P^{(2)}$.
\end{restatable}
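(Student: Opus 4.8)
The plan is to reduce the claim, via the embedding $\Li{2}\to\elli{2}$ from eq.~\eqref{eq:hat-ell-2}, to a finite combinatorial optimization over the discrete prediction set $\Ri{2}$, and then carry that optimization out explicitly. Recall $\Li{2}$ embeds $\elli{2}$ through a bijection $\varphi:(H,M)\mapsto u$ from $\Ri{2}$ onto a finite representative set, and that the discussion preceding the corollary shows $(\Li{2},\link)$ is consistent with respect to $\lk$ on $\{p\in\simplex\mid\prop{\Li{2}}(p)\cap U_{\text{ambig}}=\emptyset\}$. So I would first translate the no-ambiguity condition through $\varphi$: a representative report $\varphi(H,M)$ has its $k$-th and $(k+1)$-th largest coordinates both equal to $0$ exactly when $|H\cup M|<k$, so $\varphi(H,M)\in U_{\text{ambig}}$ iff $|H\cup M|<k$; and when $|H\cup M|=k$ we have $\topkset(\varphi(H,M))=\{H\cup M\}$, a single top-$k$ set, which lies in $\gamma_k(p)$ whenever $(H,M)$ is $\elli{2}$-optimal (since an optimal $H$ consists of the $|H|$ most likely labels and $M$ of the next $|M|$). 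Modulo tie-breaking technicalities deferred to the appendix, it then suffices to show: $p\in\P^{(2)}$ if and only if every minimizer of $(H,M)\mapsto\E_{Y\sim p}\,\elli{2}((H,M),Y)$ over $\Ri{2}$ satisfies $|H\cup M|=k$.

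Next I would put the expected loss in closed form. With $h=|H|$, $m=|M|$, $s=h+m$, expanding eq.~\eqref{eq:hat-ell-2} gives an expression strictly decreasing in both $\sum_{y\in H}p_y$ and $\sum_{y\in M}p_y$, so an optimal pair takes $H$ to be the $h$ most likely labels and $M$ the next $m$; the objective then collapses to a function of the integers $0\le h\le s\le k$,
\begin{equation*}
F(h,s)\;=\;\frac{s-1}{k-h}\bigl(1-\sigma_h(p)\bigr)+\frac{k+1}{k}\bigl(1-\sigma_s(p)\bigr),
\end{equation*}
where $h=k$ yields $F=+\infty$ on $\relint\simplex$ and is never optimal.

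I would then minimize $F$ by a discrete marginal argument. For fixed $s\ge 2$ only the first term depends on $h$, and the inequality $p_\i>\tfrac{1-\sigma_{i-1}(p)}{k-(i-1)}$ controlling whether raising $h$ from $i-1$ to $i$ decreases $F$ is self-reinforcing — once it fails at some $i$ it fails for all larger indices, since $p_\i$ is nonincreasing while its right-hand side is then nondecreasing — so the optimal number of ``high'' labels is exactly $h^*(p)$, uniformly in $s$. Setting $h=h^*(p)$, the first difference of $F$ in $s$ is $\tfrac{1-\sigma_{h^*(p)}(p)}{k-h^*(p)}-\tfrac{k+1}{k}\,p_{[s]}$, again self-reinforcing in $s$; hence the optimal $s$ equals $k$ — and, since a tie at $s=k-1$ must be excluded, strictly so — exactly when $p_\k$ exceeds the threshold read off from this marginal condition, which after rearrangement is the inequality defining $\P^{(2)}$ in eq.~\eqref{eq:consistent-set-L2}. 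Together with the first paragraph this gives the corollary.

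I expect the main obstacle to be the combinatorial optimization of the third paragraph: one must check that the marginal conditions in $h$ and in $s$ are genuinely monotone/self-reinforcing, so that the greedy stopping rule delivers the global rather than a merely local minimizer; handle the coupling between the optimal $h$ and $s$ (including the regime $s<h^*(p)$ and the degenerate $h=k$ term); and track ties in $p$ precisely enough to pin down both the exact threshold and the strict inequality. None of this is conceptually hard, but it is exactly where the bookkeeping must be done carefully, and where I would expect the appendix argument (\S~\ref{appendix:L2}) to spend most of its effort.
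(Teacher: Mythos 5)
Your overall route is the same as the paper's: embed, reduce consistency on a restricted set of distributions to the statement that every $\elli{2}$-optimal report has $|H\cup M|=k$ and links into $\gamma_k(p)$ (this is exactly the hypothesis of the paper's Lemma~\ref{lemma:restricted-consistency}, which is the formal version of the main-text claim you cite), and then determine the optimal $(H,M)$ by marginal swap arguments. Your two marginal conditions are precisely the paper's Lemmas~\ref{lem:H-better-M} and~\ref{lem:M-better-L}, and your self-reinforcement observations are what justify the paper's greedy definitions of $h^*(p)$ and $m^*(p)$, so the structure of the argument is not in question.

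The problem is the last step. Your first difference in $s$ is computed correctly, but it does not rearrange to eq.~\eqref{eq:consistent-set-L2}: requiring $F(h^*(p),k)-F(h^*(p),k-1)<0$ is equivalent to $p_\k > \frac{k\,(1-\sigma_{h^*(p)}(p))}{(k+1)(k-h^*(p))}$, which differs from the stated threshold by a factor of $k$ in the numerator. This is not a harmless constant. Take $k=2$, $n=3$, $p=(0.45,0.3,0.25)$: then $h^*(p)=0$ and $p_\k=0.3>\tfrac16$, so $p$ lies in the stated $\P^{(2)}$, yet the minimizers of $\E_{Y\sim p}\,\elli{2}(\cdot,Y)$ are exactly $(\emptyset,\{1\})$ and $(\{1\},\emptyset)$ (expected loss $0.825$, versus $0.875$ for $(\emptyset,\{1,2\})$ and worse for all other reports), both of which embed to ambiguous points with $u_\k=u_\kp=0$. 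So the equivalence you assert in your first paragraph fails at this $p$, and your own (correct) marginal computation cannot terminate in the displayed inequality. In fact the source is an internal inconsistency in the paper: the threshold stated in Lemma~\ref{lem:M-better-L}, $\bigl(\tfrac hk-\sigma_H(p)\bigr)\tfrac{k}{(k-h)(k+1)}+\tfrac1{k+1}$, simplifies to exactly your $k$-scaled quantity, while the last line of that lemma's proof, the definition of $m^*(p)$, and eq.~\eqref{eq:consistent-set-L2} drop the factor $k$. Carried out carefully, your argument proves consistency on the smaller set $\{p\in\simplex \mid p_\k > \frac{k(1-\sigma_{h^*(p)}(p))}{(k+1)(k-h^*(p))}\}$; the claim that the marginal condition ``after rearrangement is the inequality defining $\P^{(2)}$'' is an algebra slip that hides this mismatch, and you should either correct the threshold or explicitly flag the discrepancy rather than paper over it.
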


\subsection{Analysis of $\Li{3}$}\label{sec:L3}
\citet{lapin2016loss} give two convex upper bounds on the proposed top-$k$ surrogate  from~\citep{lapin2015top}: $\Li{2}$ studied in \S~\ref{sec:L2}, and $\Li{3}$, defined as follows.
\begin{align}
\restatableeq{\Lthree}{\Li{3}(u,y) = \frac 1 k \sum_{i=1}^k \left[ 1 - u_y + (u-e_y)_{[i]} \right]_+}{eq:L3}
\end{align}
While similar to $\Li{2}$, the placement of the positive part operator changes the analysis of the surrogate significantly. 
See \S~\ref{appendix:L3} for all omitted details.

As above, it suffices to identify sources of non-affineness on $U$ (eq.~\eqref{eq:U-inf-rep}) to construct a finite representative set for $\Li{3}$.
Non-affineness of $\Li{3}$ is introduced by the positive part operator and the ordering of the top-$k$ elements of a prediction $u \in U$.
Unlike $\Li 2$, the positive part operator is applied to each term of the summand, so we cannot immediately ignore this operator by restricting to a bounded representative region.
Instead, let us simultaneously fix (1) a set $S \in \Rk$ to be indices of the top-$k$ elements of $u$, and (2) sets $\vec V = \{V_y \subseteq S \setminus \{y\} \mid y\in\Y\}$ corresponding to induces when the positive part operator is not activated for $\Li 3(u,y)$.
For any such $S, \vec V$, therefore, we define the region $A^{S, \vec V}$ to be all points $u\in U$ with (1) $S \in \topkset(u)$ and (2) for all $y\in\Y$, we have $u_i + 1 \geq u_y$ for all $i\in V_y$, and $u_i + 1 \leq u_y$ for all $i\notin V_y$.
By the above reasoning,
$\Li{3}$ is affine on the set $A^{S, \vec V}$ for each choice of $S, \vec V$.

The union of the vertices of each $A^{S, \vec V}$ region is therefore a finite representative set, and $\Li 3$ embeds $\Li 3$ restricted to these vertices.
Upon inspection of the geometry of the $A^{S, \vec V}$ regions, we show that the vertices of each are in fact a subset of $\mathbb{Z}_k^n$.
A more intuitive form for this discrete loss can therefore be expressed in terms of ordered partitions, where index $i$ is in the $j^\text{th}$ partition $Q_j$ when $u_i = j$.
Formally, we reparameterize the vertices as ordered partitions $Q \in \Ri{3}$, where
\begin{multline*}
    \Ri{3} = \{ Q\!=\!(Q_0, \ldots, Q_s) \mid s\!\leq\! k, Q_i \cap Q_j = \emptyset \, \forall i \neq j, \\ |Q_1, \cup \ldots \cup Q_s| \leq k, Q_i \neq \emptyset \, \forall i\}~.
\end{multline*}

We now have that $\Li{3}$ embeds $\elli 3:\Ri 3\times \Y\to\reals$, given by
\begin{align*}
\elli 3(Q, y) &=\!
\begin{cases}
\frac 1 k \!\left(|Q_j| - 1 + \sum\limits_{i > j} |Q_i| (i-j +1) \right) \!\!\!& j\!>\!0\\
\frac 1 k \sum\limits_{i=1}^s |Q_i|(i+1)  & j\!=\!0
\end{cases}
\end{align*}
where $y \in Q_j$.
For intuition, $\elli 3$ allows for predictions with more granularity than $\elli 2$, where the higher index $i$ of the partition $Q_i$ is, the more confident one is in outcomes in $Q_i$.
The punishment for error again grows in the number of indices one reports high confidence in, as well as the number of partitions. 

In order to characterize the regions where $(\Li{3},\link)$ is consistent with respect to $\lk$, we can study where $\elli 3$ can be unambiguously linked to $\lk$.
In particular, one can do so for any $p \in \simplex$ such that $|Q_0| = n-k$ for $Q \in \prop{\elli 3}(p)$.

\begin{restatable}{corollary}{ellthreeconsistent}
$\Li{3}$ is consistent with respect to $\lk$ on $\P^{(3)}  = \{p \in \simplex \mid p_{[k+1]} > \frac 1 {k+1} \wedge \frac{\sum_{i = k+1}^n p_\i}{k-1} \geq p_\k\}$.
\end{restatable}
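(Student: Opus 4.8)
The plan is to follow the recipe laid out at the start of \S\ref{sec:previous-surrogates}. Since $\Li{3}$ embeds $\elli{3}$, by Theorem~\ref{thm:polyhedral-embeds-discrete} it suffices to produce a link $\link$ for which no \emph{ambiguous} surrogate report is ever optimal on $\P^{(3)}$; concretely, as noted just before the statement, it is enough to show that every optimal ordered partition $Q\in\prop{\elli{3}}(p)$ has $|Q_0| = n-k$, since then its embedding $\varphi(Q)$ has exactly $k$ coordinates $\geq 1$ and the other $n-k$ equal to $0$, so $\topkset(\varphi(Q))$ is a singleton. I take $\link$ to be a top-$k$ link, $\link(u)\in\topkset(u)$, breaking ties toward lower-indexed labels.

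First I would characterize $\prop{\elli{3}}(p)$ by directly minimizing $C(Q,p):=\E_{Y\sim p}\,\elli{3}(Q,Y)$ over $Q\in\Ri{3}$. An exchange argument — using that the formula for $\elli{3}$ assigns a strictly smaller per-label loss to a label placed in a higher-indexed partition — shows that at any minimizer the partitions are ordered consistently with $p$: if $y\in Q_j$, $y'\in Q_i$ and $j>i\geq 0$, then $p_y\geq p_{y'}$. Hence $Q_0$ holds the lowest-probability labels, so as soon as $|Q_0|=n-k$ the active support $Q_1\cup\cdots\cup Q_s$ is a top-$k$ set of $p$, and it equals $\{\i : i\leq k\}$ whenever $p_\k>p_\kp$.

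The heart of the proof is the remaining discrete optimization. Substituting the sorted alignment, $C(Q,p)$ becomes an explicit function of the partition-size profile $(|Q_1|,\dots,|Q_s|)$ and the partial sums $\sigma_j(p)$, and minimizing over profiles is a small combinatorial program. I would analyze this program and show that its minimizers all use a full active support of size $k$ exactly when $p$ lies in $\P^{(3)}$: the first defining inequality of $\P^{(3)}$ is precisely the threshold at which activating a $k$-th label strictly improves on using only $k-1$ active labels, while the second, $\tfrac{1}{k-1}\sum_{i=k+1}^{n}p_\i\geq p_\k$, rules out the competing profiles that fold part of the top-$k$ mass back into $Q_0$ (or collapse the active partitions so the support drops below $k$). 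Together they force $|Q_0|=n-k$ for every $Q\in\prop{\elli{3}}(p)$.

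To finish, for $p\in\P^{(3)}$ every representative minimizer $\varphi(Q)$ is unambiguous with $\link(\varphi(Q)) = Q_1\cup\cdots\cup Q_s\in\gamma_k(p)$; since $\E_{Y\sim p}\Li{3}(\cdot,Y)$ is polyhedral, its full minimizer set is spanned — up to the reductions used to reach the representative set (the $\ones$-translation invariance and the freedom in the bottom $n-k-1$ coordinates) — by these representative minimizers, and none of those moves creates an optimal report whose tie-broken top-$k$ link leaves $\gamma_k(p)$; and where $p_\k=p_\kp$ every size-$k$ member of $\topkset(p)$ is already $\lk$-optimal, so the tie-break does no harm. Hence $(\Li{3},\link)$ is calibrated, and therefore consistent, with respect to $\lk$ on $\P^{(3)}$. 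The main obstacle is the combinatorial minimization of the third step — reducing $C(Q,p)$ over ordered partitions to a tractable program and reading off exactly the two threshold inequalities; a secondary technical point is the ``no spurious ambiguous minimizer'' check in the last step, which I expect to dispatch via polyhedrality together with the fixed tie-break.
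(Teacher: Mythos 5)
Your high-level plan does track the paper's strategy: reduce consistency on $\P^{(3)}$ to showing that every optimal embedded report has exactly $k$ active labels (equivalently $|Q_0|=n-k$, so the linked top-$k$ set is unambiguous), and then pass from the embedded loss back to a calibrated link. But the heart of the argument is missing. The paper supplies the decisive content through two explicit expected-loss comparisons on the representative set $\Ri{3}$: Lemma~\ref{lem:bump-one-up-L3}, which compares keeping the $k$-th sorted coordinate at $1$ versus dropping it to $0$ and yields exactly the threshold $\tfrac{1}{k-1}\sum_{i=k+1}^{n}p_\i \gtrless p_\k$, and Lemma~\ref{lem:bump-all-up-L3}, which compares raising the \emph{entire} active block by one (so the next label leaves $Q_0$) and yields the $\tfrac{1}{k+1}$ threshold. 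You instead gesture at a sorting/exchange argument followed by a ``small combinatorial program'' over partition-size profiles and simply assert that its solution changes character exactly at the two inequalities defining $\P^{(3)}$; that assertion is the entire theorem, and you flag it yourself as the main obstacle. Moreover, the roles you assign to the two inequalities do not match the comparisons the paper actually performs: the $\tfrac{1}{k+1}$ condition is not the threshold for ``activating a $k$-th label in place'' but for shifting the whole block of higher partitions up by one, and the condition $\tfrac{1}{k-1}\sum_{i=k+1}^{n}p_\i \geq p_\k$ arises in Lemma~\ref{lem:bump-one-up-L3} as the condition under which the report with $u_\k=0$ is weakly \emph{preferred} to the one with $u_\k=1$ (other coordinates fixed), which is essentially the opposite of the role you give it (``rules out folding mass back into $Q_0$''). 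So even as a plan, your account of how the two thresholds emerge would need to be reworked before it could be executed.

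A secondary gap is the last step. You commit to the argmax link $\link$ with a fixed tie-break and then need to argue that \emph{every} minimizer of the full polyhedral surrogate (not just the representative vertices) links into $\gamma_k(p)$; your claim that the minimizer set is ``spanned'' by representative minimizers up to $\ones$-translations and bottom-coordinate freedom is asserted, not proved, and the minimizing face can contain non-vertex points whose linked set must still be controlled. The paper avoids this entirely: it does not verify $\link$ for $\Li{3}$, but invokes Lemma~\ref{lemma:restricted-consistency}, which composes the calibrated link guaranteed by the embedding with a link from $\elli{3}$ to $\lk$ and only requires the property-level inclusion on $\P^{(3)}$. Either adopt that route or supply a genuine separation-style argument for the argmax link; as written, both the combinatorial core and the link step are placeholders.
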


\subsection{Analysis of $\Li{4}$}
\label{sec:L4}
Observing that $\Li{2}$ and $\Li{3}$ are inconsistent with respect to $\lk$, \citet{yang2018consistency} propose $\Li{4}$ as in eq~\eqref{eq:L4}, changing the summation from elements of $(u-e_y)$ to elements of $u_{\smy} \in \reals^{n-1}$: the elements of $u$ excluding $u_y$.
See \S~\ref{appendix:L4} for all omitted details.
\begin{align}\label{eq:L4}
\Li{4}(u,y) &=\left(1-u_y+\frac{1}{k}\sum_{i=1}^k(u_{\backslash y})_{[i]}\right)_+
\end{align}

Again following the strategy outlined above, we begin with the set $U$, which is representative for $\Li{4}$.
Here we also further restrict to the set of points $U_+^{(4)} \subseteq U$ yielding a nonnegative argument to the positive part operator, and show that $U_+^{(4)}$ is also representative for $\Li{4}$.
Within $U_+^{(4)}$, we observe that the only way $\Li{4}(\cdot,y)$ fails to be affine is when the top $k$ elements of $u_{\smy}$ change.
Since all elements of $U$ have at most $k$ nonzero entries already, it therefore suffices to select a subset $T$ of nonzero indices.
For any $T \subseteq [n]$ with $|T|\leq k$, let us therefore define the set $A^T$ to be all points $u\in \reals^n$ such that $0\leq u_i\leq 1+\frac{1}{k}\sum_{j\in T,j\neq i}u_j$ for $i\in T$, and $u_i = 0$ for $i \notin T$.
For any $p \in \simplex$, the function $ u \mapsto \inprod{\Li{4}(u, \cdot)}{p}$ is affine on each region $A^T$, and moreover, they partition the representative set $U_+^{(4)}$.

Taking the union of vertices of each $A^T$ set, we arrive at a finite representative set for $\Li{4}$.
Carefully examining the geometry of the $A^T$ sets, one sees that these vertices are the points $u \in \reals^n$ such that each element is either $0$ or $\frac{k}{k+1-|T|}$.
Therefore, the finite representative set for $\Li{4}$ can be reparameterized as $\Ri{4} = \left\{T \subseteq [n] \mid |T| \leq k \right\}$,
and thus $\Li{4}$ embeds $\elli 4:\Ri{4}\times\Y\to\reals$ given by
\begin{align*}
\elli 4(T,y)=
\begin{cases}
 0  & y\in T\\
 \frac{k+1}{k+1-|T|} & y\notin T
\end{cases}~.~
\end{align*}
Intuitively, $\elli{4}$ is a variant of top-$k$ where one may report any set of labels of size $m \leq k$, and the stakes for being incorrect increase in $m$.
Therefore, the loss incentivizes one to report smaller sets only when sufficiently confident.

Following this intuition, consistency therefore arises whenever the conditional label distribution does not lead to such high confidence that the optimal report is a set of size $m < k$.
We characterize such distributions as follows.
\begin{restatable}{corollary}{ellfourinconsistent}
$\Li{4}$ is consistent with respect to $\lk$ on $\P^{(4)} := \{p \in \simplex \mid p_\k > 1-\sigma_k(p) \}$.
\end{restatable}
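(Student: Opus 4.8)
The plan is to work entirely through the embedding: we have just argued that $\Li{4}$ embeds the discrete loss $\elli{4}$ over $\Ri{4}$, so Theorem~\ref{thm:polyhedral-embeds-discrete}(3) yields a link $\psi_0:\reals^n\to\Ri{4}$ with $(\Li{4},\psi_0)$ calibrated with respect to $\elli{4}$. To upgrade this to calibration with respect to $\lk$ on $\P^{(4)}$ I would proceed in three steps: (i) compute $\prop{\elli{4}}(p)$; (ii) show that on $\P^{(4)}$ this set is the singleton consisting of the unique size-$k$ top set; and (iii) post-compose $\psi_0$ with a fixed ``complete to size $k$'' map and deduce calibration for $\lk$.

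For (i), fix $p\in\simplex$ and $T\in\Ri{4}$ with $|T|=m\le k$. Then $\E_{Y\sim p}\,\elli{4}(T,Y)=\tfrac{k+1}{k+1-m}\,(1-p(T))$ with $p(T):=\sum_{y\in T}p_y$, so among size-$m$ reports the optimum collects the $m$ largest labels and attains $g(m):=\tfrac{k+1}{k+1-m}(1-\sigma_m(p))$; hence $\prop{\elli{4}}(p)$ is the union, over minimizing sizes $m\in\argmin_{0\le m\le k}g(m)$, of the size-$m$ top sets. For (ii) I would show that $p_\k>1-\sigma_k(p)$ is equivalent to $g(k)<g(m)$ for every $m<k$. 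Using $1-\sigma_m(p)=(1-\sigma_k(p))+\sum_{i=m+1}^{k}p_\i$, the inequality $g(k)<g(m)$ rearranges to $(k-m)\,(1-\sigma_k(p))<\sum_{i=m+1}^{k}p_\i$; the right-hand side is a sum of $k-m$ terms each at least $p_\k$, so it follows from $p_\k>1-\sigma_k(p)$, while taking $m=k-1$ gives the converse. Since moreover $p_\k>1-\sigma_k(p)=\sum_{i>k}p_\i\ge p_\kp$, the $k$ largest labels are uniquely determined, so the top-$k$ set $S^\star$ of $p$ is unique, $\prop{\lk}(p)=\{S^\star\}$, and for $p\in\P^{(4)}$ we conclude $\prop{\elli{4}}(p)=\{S^\star\}$ with $|S^\star|=k$.

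For (iii), let $c:\Ri{4}\to\Rk$ send $T$ to $T$ together with the lowest-indexed labels not in $T$, so that $c$ is the identity on size-$k$ sets, and set $\psi:=c\circ\psi_0$. Fix $p\in\P^{(4)}$. If $\psi(u)\ne S^\star$ then $\psi_0(u)\ne S^\star$ (as $c(S^\star)=S^\star$), i.e.\ $\psi_0(u)\notin\prop{\elli{4}}(p)$; since $\prop{\lk}(p)=\{S^\star\}$, this gives $\{u:\psi(u)\notin\prop{\lk}(p)\}\subseteq\{u:\psi_0(u)\notin\prop{\elli{4}}(p)\}$. The infimum of $\E_{Y\sim p}\Li{4}(\cdot,Y)$ over the smaller set is no smaller, so calibration of $(\Li{4},\psi_0)$ for $\elli{4}$ forces $\inf_{u:\psi(u)\notin\prop{\lk}(p)}\E_{Y\sim p}\Li{4}(u,Y)>\inf_u\E_{Y\sim p}\Li{4}(u,Y)$, i.e.\ $(\Li{4},\psi)$ is calibrated with respect to $\lk$ on $\P^{(4)}$, hence consistent there.

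I expect step (ii) to be the crux: one must verify that the single scalar condition $p_\k>1-\sigma_k(p)$ rules out \emph{every} optimal discrete report of size $m<k$ simultaneously, not merely the size-$(k-1)$ one, and this is precisely where monotonicity of the sorted coordinates ($p_\i\ge p_\k$ for $i\le k$) does the work. Everything else — the expected-loss computation in (i) and the link-promotion inclusion in (iii) — is routine once (ii) pins $\prop{\elli{4}}(p)$ down to a single size-$k$ set that coincides with the unique top-$k$ prediction.
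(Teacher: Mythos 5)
Your proposal is correct, and its overall architecture mirrors the paper's: exploit the fact that $\Li{4}$ embeds $\elli{4}$ over $\Ri{4}$, show that on $\P^{(4)}$ the optimal discrete report is forced to be the (unique) top-$k$ set, and then lift a link calibrated for $\elli{4}$ to one calibrated for $\lk$ on $\P^{(4)}$. Where you genuinely diverge is in the crux step (ii) and in how the lifting is justified. The paper characterizes $\prop{\elli{4}}$ in full generality via a greedy threshold analysis (an element $z$ joins the optimal set exactly when $p_z \geq \frac{1-\sigma_T(p)}{k+1-|T|}$, leading to the quantities $j_1,j_2$ and the sets $H,I$), and then reads off that the ambiguity disappears precisely when $j_1=k$, i.e.\ $p_\k > 1-\sigma_k(p)$; consistency then follows by invoking Lemma~\ref{lemma:restricted-consistency}. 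You instead bypass the full property computation: since for fixed size $m$ the optimum is a top-$m$ set with value $g(m)=\frac{k+1}{k+1-m}\bigl(1-\sigma_m(p)\bigr)$, the comparison $g(k)<g(m)$ reduces to $(k-m)\bigl(1-\sigma_k(p)\bigr)<\sum_{i=m+1}^{k}p_\i$, which follows from $p_\i\geq p_\k>1-\sigma_k(p)$, and the same condition gives $p_\k>p_\kp$ and hence a unique top-$k$ set; your step (iii) then reproves the link-promotion argument of Lemma~\ref{lemma:restricted-consistency} inline for this singleton situation. Your route is more elementary and self-contained for this specific corollary (and your $m=k-1$ remark even shows the condition is tight), while the paper's route buys the complete description of $\prop{\elli{4}}(p)$ for all $p$, which it uses elsewhere (e.g.\ for the geometric pictures and the interpretation of $\elli{4}$) and a reusable restricted-consistency lemma shared with the analyses of $\Li{2}$ and $\Li{3}$. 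One cosmetic caveat: your construction yields consistency via the composed link $c\circ\psi_0$ rather than the argmax link $\link$ discussed in the main text, but this matches the corollary as stated (and the paper's own proof via Lemma~\ref{lemma:restricted-consistency} likewise only produces some link).
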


\section{A New Consistent Surrogate}\label{sec:embedding-construction}

\citet{yang2018consistency} show that the polyhedral surrogates analyzed in \S~\ref{sec:previous-surrogates} are not consistent for top-$k$.
They further suggest that perhaps \emph{no} polyhedral surrogate can be consistent.
On the other hand, the embedding framework of \citet{finocchiaro2019embedding,finocchiaro2022embedding} shows that every discrete loss has a consistent polyhedral surrogate.
As their result is constructive, we apply it to the top-$k$ loss $\lk$, giving the first consistent polyhedral surrogate,  $\Lk$, for the problem (\S~\ref{subsec:new-surrogate-def}).
The embedding framework relies on constructing a link from scratch, rather than using a pre-specified link function.
As such, in principle their surrogate construction could yield a surrogate which is not consistent when paired with $\psi_k$, but only with a different link entirely.
Interestingly, we further show that in particular $(\Lk,\psi_k)$ is consistent with respect to $\lk$ (\S~\ref{subsec:canonical-calibrated}).

\subsection{Formulating $\Lk$} \label{subsec:new-surrogate-def}

To show that every discrete loss is embedded by a consistent polyhedral surrogate, \citeauthor{finocchiaro2022embedding} give the following construction.
Their construction echoes similar constructions in the literature (cf.\ \citet{asif2015adversarial}, \citet{farnia2016minimax}, \citet{fathony2016adversarial},  \citet{duchi2018multiclass}.)
Recall that the \emph{Bayes risk} of a loss $\ell:\R\times\Y\to\reals$ is the function $\risk{\ell}:\simplex\to\reals$, $\risk{\ell}:p\mapsto\min_{r\in\R} \inprod{p}{\ell(r,\cdot)}$.
\begin{theorem}[{\citet[Theorem 4]{finocchiaro2022embedding}}]
    \label{thm:surrogate-construction}
    Any discrete loss $\ell: \R \times \Y \to \reals_+$
    is embedded by the consistent surrogate $L(u, y) = (-\risk{\ell})^*(u) - u_y $
    where
    $(\cdot)^*$ denotes the convex conjugate.
\end{theorem}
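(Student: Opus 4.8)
The plan is to make the convex conjugate explicit, use Fenchel--Young duality to read off both the Bayes risk of $L$ and its set of minimizers, and then exhibit a representative set and an embedding map by hand; the verification then collapses to a one-line subgradient computation.

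First I would let $G$ be $-\risk{\ell}$ extended by $+\infty$ outside $\simplex$, so that $(-\risk{\ell})^* = G^*$ in the notation of the theorem. Since $-\risk{\ell}(p) = \max_{r\in\R}\inprod{p}{-\ell(r,\cdot)}$ and $\simplex$ is a polytope, $G$ is closed, proper, and polyhedral; and since its domain $\simplex$ is bounded, $G^*$ is finite on all of $\reals^n$ and polyhedral, so $L(u,y) = G^*(u) - u_y$ is a bona fide finite polyhedral surrogate. I would also note $L\ge 0$: from $G^*(u)\ge\inprod{u}{e_y}-G(e_y) = u_y + \risk{\ell}(e_y)$ we get $L(u,y)\ge\risk{\ell}(e_y)\ge 0$. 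The key computation is that, for $p\in\simplex$ (using $\sum_y p_y = 1$),
\[
\E_{Y\sim p}L(u,Y) \;=\; G^*(u) - \inprod{p}{u},
\]
so $\inf_u\E_{Y\sim p}L(u,Y) = -G^{**}(p) = -G(p) = \risk{\ell}(p)$, whence $L$ is minimizable, and its minimizer set is $\prop{L}(p) = \argmin_u\,[G^*(u)-\inprod{p}{u}] = \partial G(p)$, nonempty on all of $\simplex$ by polyhedrality of $G$.

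Next I would build the embedding. Take $\Sc := \{r\in\R \mid r\in\prop{\ell}(p)\ \text{for some}\ p\in\simplex\}$: this is representative for $\ell$ because in fact $\prop{\ell}(p)\subseteq\Sc$ for every $p$, and after identifying any predictions with identical loss vectors I may assume $r\mapsto\ell(r,\cdot)$ is injective on $\Sc$. Set $\varphi(r) := -\ell(r,\cdot)\in\reals^n$. The single fact doing all the work is that for $r\in\Sc$,
\[
G^*(\varphi(r)) \;=\; \sup_{p\in\simplex}\left(\risk{\ell}(p) - \inprod{p}{\ell(r,\cdot)}\right) \;=\; 0,
\]
since the quantity in parentheses is nonpositive everywhere and equals $0$ exactly when $r\in\prop{\ell}(p)$, which occurs for at least one $p$ by the definition of $\Sc$. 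Consequently $L(\varphi(r),y) = G^*(\varphi(r)) + \ell(r,y) = \ell(r,y)$, which is condition (i) of Definition~\ref{def:loss-embed}. For condition (ii), fix $p\in\simplex$ and $r\in\Sc$; by Fenchel--Young and $G^*(\varphi(r))=0$, we have $\varphi(r)\in\partial G(p)$ iff $G(p) = \inprod{p}{\varphi(r)}$, i.e.\ iff $\risk{\ell}(p) = \inprod{p}{\ell(r,\cdot)}$, i.e.\ iff $r\in\prop{\ell}(p)$ (because $\risk{\ell}(p)$ is the minimum of $\inprod{p}{\ell(r',\cdot)}$ over $r'$). Since $\prop{L}(p) = \partial G(p)$, this is exactly the required equivalence $\varphi(r)\in\prop{L}(p)\iff r\in\prop{\ell}(p)$. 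Thus $L$ embeds $\ell$, and Theorem~\ref{thm:polyhedral-embeds-discrete}(3) supplies a link $\psi$ with $(L,\psi)$ calibrated, hence consistent, with respect to $\ell$.

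The step I expect to require the most care is the conjugate bookkeeping, because several ingredients have to align at once: boundedness of $\simplex$ is what keeps $G^*$ finite-valued; the affine identity $\inprod{\ones}{p}=1$ is what collapses $\E_{Y\sim p}L(u,Y)$ to $G^*(u)-\inprod{p}{u}$ (and, incidentally, makes $L$ invariant under $u\mapsto u+c\ones$); and the pointwise bound $\risk{\ell}\le\inprod{\cdot}{\ell(r,\cdot)}$ is what pins $G^*(\varphi(r))$ to $0$ \emph{precisely} on $\Sc$ --- which is exactly why $\Sc$ must be the set of loss-optimal predictions and not all of $\R$, since on the complement condition (i) would fail. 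The only other point needing attention is the routine reduction to non-redundant predictions, so that $\varphi$ is genuinely injective as Definition~\ref{def:loss-embed} demands.
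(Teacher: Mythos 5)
Your proof is correct, and it follows essentially the same Fenchel--Young/conjugate-duality construction as the source of this result: the paper itself does not prove Theorem~\ref{thm:surrogate-construction} but imports it verbatim from \citet[Theorem 4]{finocchiaro2022embedding}, where the argument likewise embeds via $\varphi(r) = -\ell(r,\cdot)$, uses $G^*(\varphi(r))=0$ on the optimal reports, and identifies $\prop{L}(p)$ with $\partial G(p)$ before invoking the embedding-implies-calibration theorem. Your handling of the side conditions (finiteness and nonnegativity of $L$, injectivity of $\varphi$ after merging duplicate loss vectors, attainment via polyhedrality) is sound, so there is no gap.
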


The Bayes risk of $\lk$ is
\[ \underline{\lk}(p) = \inf_{S \in \Rk}\inprod{p}{\lk(S,\cdot)} = 1 - \sumk(p) ~. \]

By Theorem \ref{thm:surrogate-construction}, the following loss function $\Lk$ therefore embeds $\lk$, with consistency (for some link function) following from Theorem \ref{thm:polyhedral-embeds-discrete}.
\begin{align}
	\Lk(u, y) 
	&= (-\risk{\lk})^*(u) - u_y \nonumber \\
	&= \sup_{p \in \simplex}\left(\inprod{p}{u} + \underline{\lk}(p)\right) - u_y  \nonumber \\
	&= \sup_{p \in \simplex}\left(\inprod{p}{u} + 1 - \sumk(p)\right) - u_y ~.     \label{eq:new-surrogate} \\
\intertext{Choosing $p$ to be uniform on the $m$ largest indices of $u$ (which we justify in \S~\ref{subsec:new-surrogate-max-form}), this expression simplifies to}
    &= \max_{1\leq m \leq n} \left\{ \tfrac {\sigma_m(u)}{m} +  \left(1-\tfrac k m\right)_+ \right\} - u_y~. \label{eq:neq-surrogate-max-form}
\intertext{Since $\frac{\sigma_m(u)}{m}$ is non-increasing in $m$, and $1 - \frac{k}{m} \leq 0$ for $0 < m \leq k$, the $m = 1$ case will dominate the $1 < m \leq k$ cases. Therefore, we can further simplify the loss,}
    &= \max\left\{u_{[1]}, \max_{k < m \leq n} \left\{ \tfrac {\sigma_m(u)}{m} +  1 - \tfrac{k}{m} \right\} \! \right\}\! - u_y~. \nonumber
\end{align}
In this form, it is clear to see that the surrogate is piecewise linear, as a maximum of affine functions (recall that $\sigma_m$ can itself be written as a maximum).

\subsection{The Argmax Link is Calibrated} \label{subsec:canonical-calibrated}

From Theorem \ref{thm:polyhedral-embeds-discrete}, there exists some link function $\psi: \reals^n \rightarrow \Rk$ mapping the report space of $\Lk$ back to the that of $\lk$, such that $(\Lk,\psi)$ is consistent with respect to $\lk$.
It remains to actually find this link $\psi$.
In fact, we will show that one can take $\psi = \link$, the canonical argmax link.

Recall that consistency is characterized by calibration (Definition \ref{def:calibration}), which says that linking to a non-$\lk$-optimal report should be strictly $\Lk$-suboptimal.
To show that $\link$ is calibrated, we in turn use another equivalent condition, that $\link$ be $\epsilon$-separated \citep[Definition 8]{finocchiaro2022embedding} with respect to $\lk$ and $\Lk$.
Recall that all minimizable losses elicit a property (Definition \ref{def:property}), which is just a map from distributions to all optimal reports under that loss.

\begin{definition}
  \label{def:epsilon-separation}
  Given a discrete loss $\ell: \R \times \Y \to \reals_+$ and surrogate $L:\reals^d \times \Y \to \reals_+$, let $\Gamma = \prop{L}$ and $\gamma = \prop{\ell}$ be their respective properties. 
  The link $\psi: \reals^d \rightarrow \R$ is $\epsilon$-separated with respect to $(L, \ell)$ if for all $p \in \simplex$, $u \in \Gamma(p)$, and $u'\in\reals^d$ such that $\psi(u') \not \in \gamma(p)$, we have $\|u - u'\|_\infty \geq \epsilon$.
\end{definition}
Calibration and $\epsilon$-separation are equivalent for polyhedral surrogates \citep[Theorem 5]{finocchiaro2022embedding}.

To show $\epsilon$-separation, we first must characterize the properties of $\lk$ and $\Lk$.
Eq.~\eqref{eqn:topk-property} gives us $\prop{\lk} = \gamma_k$.
Let $\propk = \prop{\Lk}$.

Recall that the report space of $\lk$ is $\Rk = \{ S \subseteq \Y \mid |S| = k \}$. 
Let $\mathcal{T} = \{ \ones_S \mid S \in \Rk\}$ be the set of indicators for the elements of $\Rk$.
Then, $\topkvec(u) = \argmax_{t \in \mathcal{T}} \inprod{t}{u}$ is the set of possible indicators of the top $k$ elements of $u$.
Note that $|\topkvec(u)| > 1$ if and only if $u_{[k]} = u_{[k+1]}$.

\begin{lemma} \label{lemma:Gamma}
	Let $\cone$ denote the convex cone. Then,
	\[ \propk(p) = \hull(\topkvec(p)) - \cone\{\ones_{i} \mid p_i = 0 \} + \bigcup_{\alpha \in \reals} \{\alpha \ones\}~.\]
	
\end{lemma}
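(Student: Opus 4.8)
The plan is to compute $\propk(p) = \argmin_{u \in \reals^n} \E_{Y \sim p} \Lk(u, Y)$ directly from the definition of $\Lk$ in eq.~\eqref{eq:new-surrogate}. Writing $F(u) := (-\risk{\lk})^*(u) = \sup_{p \in \simplex}(\inprod{p}{u} + 1 - \sigma_k(p))$, we have $\E_{Y \sim p} \Lk(u, Y) = F(u) - \inprod{p}{u}$, so $\propk(p) = \argmin_u \{F(u) - \inprod{p}{u}\} = \partial F^*(p)$ by standard convex duality, where $F^* = (-\risk{\lk})^{**}$. Since $-\risk{\lk}(p) = \sigma_k(p) - 1$ is already convex and lower semicontinuous on $\simplex$ (extended by $+\infty$ off $\simplex$), we get $F^*(p) = \sigma_k(p) - 1$ on $\simplex$ and $+\infty$ otherwise, so $\propk(p) = \partial(\sigma_k + \delta_{\simplex})(p)$, where $\delta_{\simplex}$ is the indicator of the simplex in the convex-analysis sense. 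The task thus reduces to computing this subdifferential and checking it equals the claimed expression.

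First I would handle the two summands via the sum rule for subdifferentials (valid since $\sigma_k$ is finite everywhere, so the constraint qualification holds): $\partial(\sigma_k + \delta_{\simplex})(p) = \partial \sigma_k(p) + \partial \delta_{\simplex}(p) = \partial \sigma_k(p) + N_{\simplex}(p)$, where $N_{\simplex}(p)$ is the normal cone to $\simplex$ at $p$. For the normal cone: at any $p \in \simplex$, $N_{\simplex}(p) = \{v \mid v_i \leq c \text{ for all } i, \text{ and } v_i = c \text{ whenever } p_i > 0, \text{ some } c \in \reals\}$ — equivalently, $N_{\simplex}(p) = \bigcup_{\alpha} \{\alpha \ones\} - \cone\{\ones_i \mid p_i = 0\}$, which already produces the last two terms in the lemma. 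For $\partial \sigma_k$: since $\sigma_k(p) = \max_{t \in \T} \inprod{t}{p}$ is a max of linear functions, its subdifferential at $p$ is the convex hull of the active indicators, i.e.\ $\partial \sigma_k(p) = \hull\{t \in \T \mid \inprod{t}{p} = \sigma_k(p)\} = \hull(\topkvec(p))$. Assembling these three pieces gives exactly the stated formula.

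The main obstacle — really the only place requiring care — is the justification that passing from eq.~\eqref{eq:new-surrogate} to a subdifferential computation is legitimate, and in particular that $\argmin_u \{F(u) - \inprod{p}{u}\}$ is nonempty and equals $\partial F^*(p)$. This follows because $F$ is a closed proper convex function that is the conjugate of $-\risk{\lk} + \delta_{\simplex}$; by Fenchel duality $u \in \argmin_u\{F(u) - \inprod{p}{u}\}$ iff $p \in \partial F(u)$ iff $u \in \partial F^*(p)$, and $F^* = \overline{\mathrm{co}}(-\risk{\lk} + \delta_{\simplex})$. One must verify $-\risk{\lk}(p) = \sigma_k(p) - 1$ is convex lsc on $\simplex$ so that no closure-of-convex-hull operation changes it — this is immediate since $\sigma_k$ is a finite convex function (a max of linear functions) and $\simplex$ is a closed convex set. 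A second minor point is verifying the constraint qualification for the sum rule, but $\mathrm{dom}\,\sigma_k = \reals^n \supseteq \relint(\simplex) \neq \emptyset$ makes this automatic. After these checks the remaining computation of $N_{\simplex}(p)$ and $\partial\sigma_k(p)$ is routine and the identity in the lemma follows by direct substitution.
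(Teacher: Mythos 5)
Your proposal is correct and follows essentially the same route as the paper: it identifies $\propk(p)$ with $\partial(-\risk{\lk})(p)=\partial(\sigma_k+I_{\simplex})(p)$ via conjugate/Fenchel duality (the paper's Lemma~\ref{lemma:G}), then applies the sum rule, the max-of-affine-functions formula giving $\partial\sigma_k(p)=\hull(\topkvec(p))$, and the normal cone of the simplex giving the remaining two terms. The only cosmetic difference is that you quote the simplex normal cone directly, whereas the paper derives it by decomposing $I_{\simplex}$ into $n+1$ constraint indicators and summing their subdifferentials.
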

The proof, deferred to \S~\ref{appendix:Gamma-lemma-proof}, relies on the connection between $\propk$ and the subgradients of $-\risk{\lk}$.
With this characterization of $\gamma_k$ and $\propk$, we can prove that $\link$ is calibrated.
\begin{theorem} \label{thm:topk-link-calibrated}
    $(\Lk,\link)$ is calibrated with respect to $\lk$.
\end{theorem}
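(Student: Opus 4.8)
The plan is to reduce the claim to $\epsilon$-separation (Definition~\ref{def:epsilon-separation}), which by \citet[Theorem 5]{finocchiaro2022embedding} is equivalent to calibration for polyhedral surrogates (and $\Lk$ is polyhedral, as shown in \S~\ref{subsec:new-surrogate-def}). So it suffices to produce some $\epsilon > 0$ such that for every $p \in \simplex$, every $u \in \propk(p)$, and every $u'$ with $S := \link(u') \notin \gamma_k(p)$, we have $\|u - u'\|_\infty \ge \epsilon$. I claim $\epsilon = \tfrac{1}{2n}$ works.

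Fix such $p, u, u'$ and let $\delta = \|u - u'\|_\infty$. The argument rests on two observations. First, since $\link(u') = S$ maximizes $\inprod{\ones_S}{u'}$ over $\Rk$, we have $u'_\ell \ge u'_m$ whenever $\ell \in S$ and $m \notin S$ (swapping $\ell$ for $m$ would otherwise strictly increase $\inprod{\ones_S}{u'}$); hence for any $\ell^* \in S$ and $m^* \notin S$, the chain $u_{\ell^*} + \delta \ge u'_{\ell^*} \ge u'_{m^*} \ge u_{m^*} - \delta$ yields $\delta \ge \tfrac12(u_{m^*} - u_{\ell^*})$, so it is enough to find indices $m^* \notin S$, $\ell^* \in S$ with $u_{m^*} - u_{\ell^*} \ge \tfrac1n$. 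Second, by Lemma~\ref{lemma:Gamma} we may write $u = t - c + \alpha\ones$ with $t \in \hull(\topkvec(p))$ and $c \in \cone\{\ones_i \mid p_i = 0\}$; writing $A = \{i \mid p_i > p_\kp\}$ and $B = \{i \mid p_i \ge p_\k\}$, one checks $\gamma_k(p) = \{S' \mid A \subseteq S' \subseteq B,\ |S'| = k\}$, and correspondingly $t_i = 1$ and $c_i = 0$ (so $u_i = 1 + \alpha$) for $i \in A$, $t_i = 0$ (so $u_i \le \alpha$) for $i \notin B$, $\sum_{i\in B} t_i = k$, and $\inprod{\ones_S}{t} \le k-1$ since $S \ne S'$ for every $S' \in \gamma_k(p)$.

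Now, since $S \notin \gamma_k(p)$, either $A \not\subseteq S$ or $S \not\subseteq B$. If $A \not\subseteq S$, take $m^* \in A \setminus S$, so $u_{m^*} = 1 + \alpha$; since $\inprod{\ones_S}{u} = \inprod{\ones_S}{t} - \inprod{\ones_S}{c} + k\alpha \le (k-1) + k\alpha$, some $\ell^* \in S$ has $u_{\ell^*} \le \tfrac{k-1}{k} + \alpha$, so $u_{m^*} - u_{\ell^*} \ge \tfrac1k \ge \tfrac1n$. Otherwise $A \subseteq S$ and there is $\ell^* = j^* \in S \setminus B$; then $u_{j^*} \le \alpha$, and because $p_{j^*} < p_\k$ forces $p_\k > 0$, every $i \in B$ has $p_i > 0$ and hence $c_i = 0$. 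Since $|S \cap B| \le k-1$, we get $\sum_{i \in B \setminus S} t_i = k - \sum_{i \in B\cap S} t_i \ge 1$, so some $m^* \in B \setminus S$ has $t_{m^*} \ge \tfrac{1}{|B\setminus S|} \ge \tfrac1n$, and thus $u_{m^*} = t_{m^*} + \alpha \ge \tfrac1n + \alpha$, giving again $u_{m^*} - u_{j^*} \ge \tfrac1n$. In either case $\delta \ge \tfrac{1}{2n}$, which establishes $\tfrac{1}{2n}$-separation and therefore calibration.

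I expect the main obstacle to be the careful handling of ties in $p$ and of zero-probability coordinates. A report $u \in \propk(p)$ can drive zero-probability coordinates arbitrarily negative (the $-\cone$ term of Lemma~\ref{lemma:Gamma}) and can spread its ``confident'' weight thinly over many tied coordinates of $B$, so the large-gap coordinate $m^*$ cannot in general be chosen where $t$ equals $1$; it is precisely in the $S \not\subseteq B$ case, where we instead locate $m^*$ among the tied coordinates of $B$, that the proof must exploit that those coordinates carry strictly positive probability (so the $-\cone$ term vanishes there). One should also verify that the argument uses only $\link(u') \in \topkvec(u')$ and is therefore insensitive to how the argmax link breaks ties.
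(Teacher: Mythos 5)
Your proof is correct, and while it shares the paper's skeleton---reduce calibration to $\epsilon$-separation via \citet[Theorem 5]{finocchiaro2022embedding} with the same constant $\epsilon = \tfrac{1}{2n}$, and use Lemma~\ref{lemma:Gamma} to write $u = t - c + \alpha\ones$ with $t \in \hull(\topkvec(p))$---the core combinatorial argument is genuinely different. The paper fixes $u$ and argues by contradiction about the ordering of the coordinates of $u'$, splitting on $p_{[k]} > 0$ versus $p_{[k]} = 0$; you instead split on the two ways $S = \link(u')$ can fail to lie in $\gamma_k(p)$ (missing an index of $A = \{i \mid p_i > p_{[k+1]}\}$, or containing an index outside $B = \{i \mid p_i \ge p_{[k]}\}$), and in each case exhibit a swapped pair $m^* \notin S$, $\ell^* \in S$ with $u_{m^*} - u_{\ell^*} \ge \tfrac1n$, which forces $\|u-u'\|_\infty \ge \tfrac{1}{2n}$ because $\link(u')$ orders $u'$ the other way on that pair. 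Your averaging and pigeonhole estimates (mean of $u$ over $S$ in the first case, mass of $t$ on $B \setminus S$ in the second) replace the paper's lower bound $u_{[k]} > \alpha + \tfrac1n$ and its sum bound over $k - |S| + 1$ indices. One concrete payoff of your decomposition: in the paper's Case 1 the chosen index $j \in \link(u')$ with $p_j < p_{[k]}$ need not exist when $p_{[k]} = p_{[k+1]} > 0$ and $\link(u')$ stays inside $B$ while merely omitting a forced index of $A$ (e.g., $p = (0.4,0.3,0.3)$, $k=2$, $\link(u') = \{2,3\}$); your $A \not\subseteq S$ case covers exactly this tie situation, so your argument is, if anything, more airtight on that point, and it is also explicitly insensitive to how $\link$ breaks ties since you only use $\link(u') \in \topkset(u')$. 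The only steps to spell out in a full write-up are the (standard) verification that $\gamma_k(p) = \{S' \in \Rk \mid A \subseteq S' \subseteq B\}$ and the coordinatewise bound $t_i \le 1$, both of which your argument uses implicitly.
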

\begin{proof}
First, we show $\link$ is $\epsilon$-separated with respect to $\propk$ and $\gamma_k$.
Let $\epsilon = \frac{1}{2n}$. 
Fix any $p \in \simplex$, and choose any $u \in \propk(p)$. 
Choose $\alpha$ such that $u - \alpha \ones \in \hull(\topkvec(p)) - \cone\{\ones_{i} \mid p_i = 0 \}$.
We need to show for every $u'$ with $\link(u') \not \in \gamma_k(p)$, $\|u - u'\|_\infty \geq \frac{1}{2n}$. 

\emph{Case 1:  $p_\k > 0$.}
Since $u \in \propk(p)$, Lemma \ref{lemma:Gamma} implies every element of $u$ is at most $1 + \alpha$, so we have $\sigma_{k-1}(u) \leq (k-1)(1 + \alpha)$.
Let $S = \support(\gamma_k(p))$, the set of indices $i$ with $p_\i \geq p_\k > 0$.
Lemma \ref{lemma:Gamma} also implies $\sum_{i\in S} u_i = k + \alpha|S|$.
Since $u_\k$ is the largest element of $S$ that is not in the top $k-1$ elements of $u$, we have 
\begin{align*}
    u_\k 
    &\geq \frac{\left(\sum_{i\in S} u_i\right) - \sigma_{k-1}(u)}{|S| - (k-1)}\\
    &= \frac{k + \alpha |S| - (k-1)(1 + \alpha)}{|S| - (k-1)}\\
    &= \frac{1}{|S| - (k-1)} + \alpha\\
    &> \frac{1}{n} + \alpha ~.    
\end{align*}

Now, pick any $u'$ such that $\link(u') \not \in \gamma_k(p)$. 
Since $\link(u')$ is some top-$k$ index set of $u'$, and by eq. \eqref{eqn:topk-property} $\gamma_k(p)$ is every possible top-$k$ index set of $p$, then there must be some index $j \in \link(u')$ such that $p_j < p_\k$. 
Then by Lemma \ref{lemma:Gamma}, $u_j \leq \alpha$.

We proceed by contradiction.
Assume $\|u - u'\|_\infty < \frac{1}{2n}$. 
Therefore for every index $i$, we have $|u_i - u'_i| < \frac{1}{2n}$.
Since $u_\k > \alpha + \frac{1}{n}$, for every $i \in \link(u)$, we must have $u'_i > u_i - \frac 1 {2n} \geq u_\k - \frac 1 {2n} \geq \alpha + \frac{1}{2n}$. 
Since $u_j \leq \alpha$, we also must have $u'_j < \alpha + \frac{1}{2n}$.
However, that means there are $|\link(u)| = k$ elements of $u'$ which are larger than $u'_j$, so $j \not \in \link(u')$, a contradiction.
Therefore, $\|u - u'\|_\infty \geq \frac{1}{2n}$.

\emph{Case 2: $p_\k = 0$.}
Let $S = \{i | p_\i > 0\}$. 
Therefore, for all $i\in S$, $u_i = 1 + \alpha$. 
Since $p_\k = 0$, $S$ must be contained by element of $\gamma_k(p)$.
Choose any $u'$ such that $\link(u') \not \in \gamma_k(p)$. 
By eq \eqref{eqn:topk-property} every element of $\gamma_k(p)$ contains $S$, so there must be some index $j \in S$ such that $u'_j \leq u'_\k$.

We again proceed by contradiction, and assume $\|u - u'\|_\infty < \frac{1}{2n}$.
Since $u_j = 1 + \alpha$, we must have $u'_j > 1 + \alpha - \frac{1}{2n}$.
However, since $u'_j \leq u'_\k$, there must be $k - (|S| - 1)$ elements of $u'$ that are greater than $u'_j$ but not in $S$.
Formally, choose any set $T \subseteq ([n] \setminus S) \cup \link(u')$ with $|T| = k - (|S| - 1)$.
For every $i \in T$ we have $u'_i > u'_j$, so 
\begin{align*}
    \sum_{i \in T} u_i 
    &\geq \left(1 + \alpha - \frac{1}{2n}\right)|T| \\
    &= \left(1 + \alpha - \frac{1}{2n}\right)(k - |S| + 1) \\
    &= \left(1 + \alpha\right)(k - |S|) + \alpha + 1 - \frac{k - |S| + 1}{2n} \\
    &> (k - |S|)(1 + \alpha) + \alpha ~.
\end{align*} 
However, by Lemma \ref{lemma:Gamma}, the maximum sum of any $k - |S| + 1$ elements of $[n] \setminus S$ is $(k - |S|) + (k - |S| + 1) \alpha = (k - |S|)(1 + \alpha) + \alpha$, a contradiction. Thus, $\|u - u'\|_\infty \geq \frac{1}{2n}$.

Therefore, in either case, $\link$ is $\epsilon$-separated with respect to $(\propk, \gamma_k)$.
Finally, by \citet[Theorem 5]{finocchiaro2022embedding}, $(\Lk, \link)$ is calibrated with respect to $\lk$.
\end{proof}

\section{Numerical Comparison}\label{sec:regret-comparison}

\begin{figure}[t]
\begin{center}
  \includegraphics[clip,width=\columnwidth]{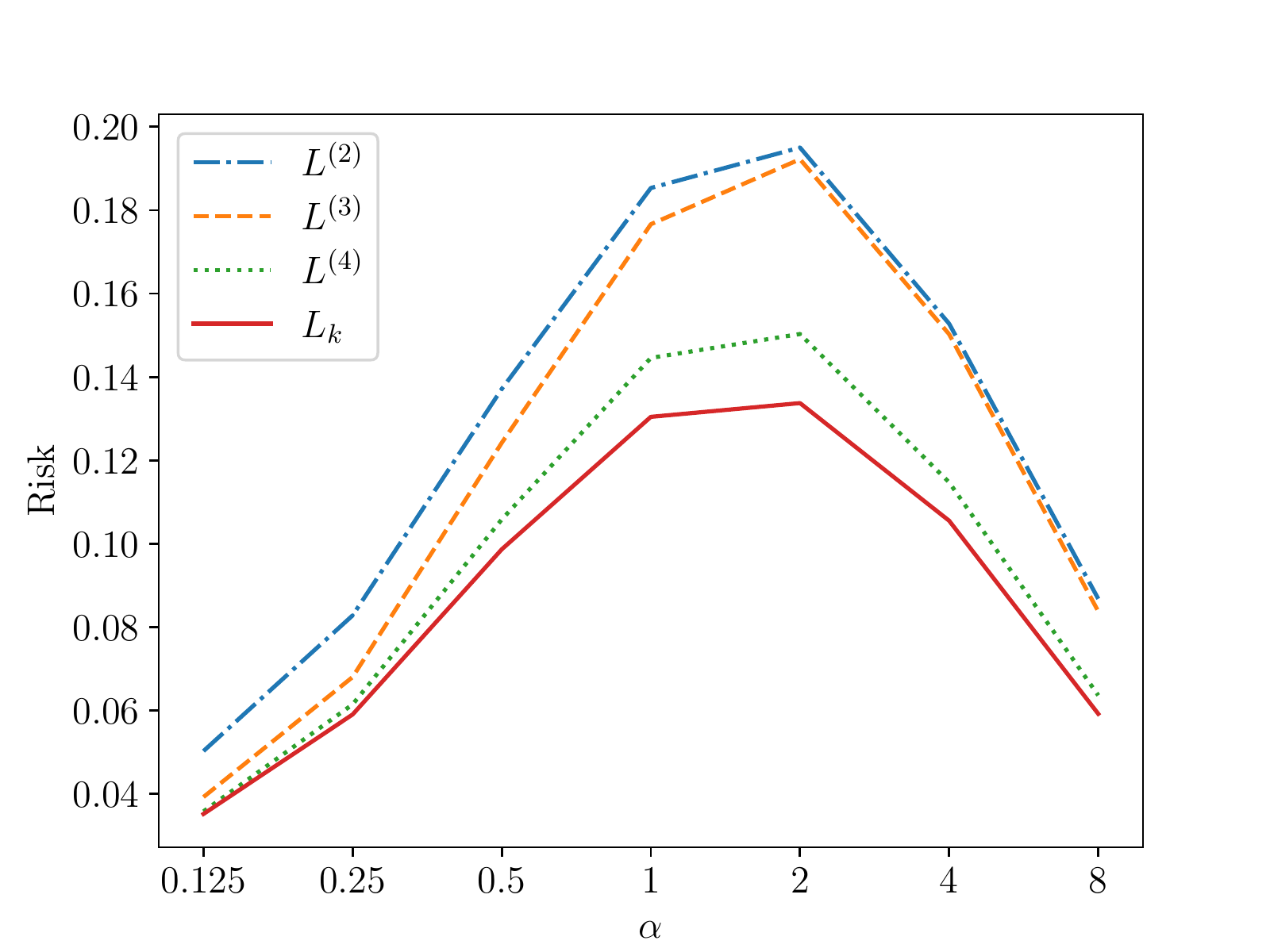}%
  \\

  \includegraphics[clip,width=\columnwidth]{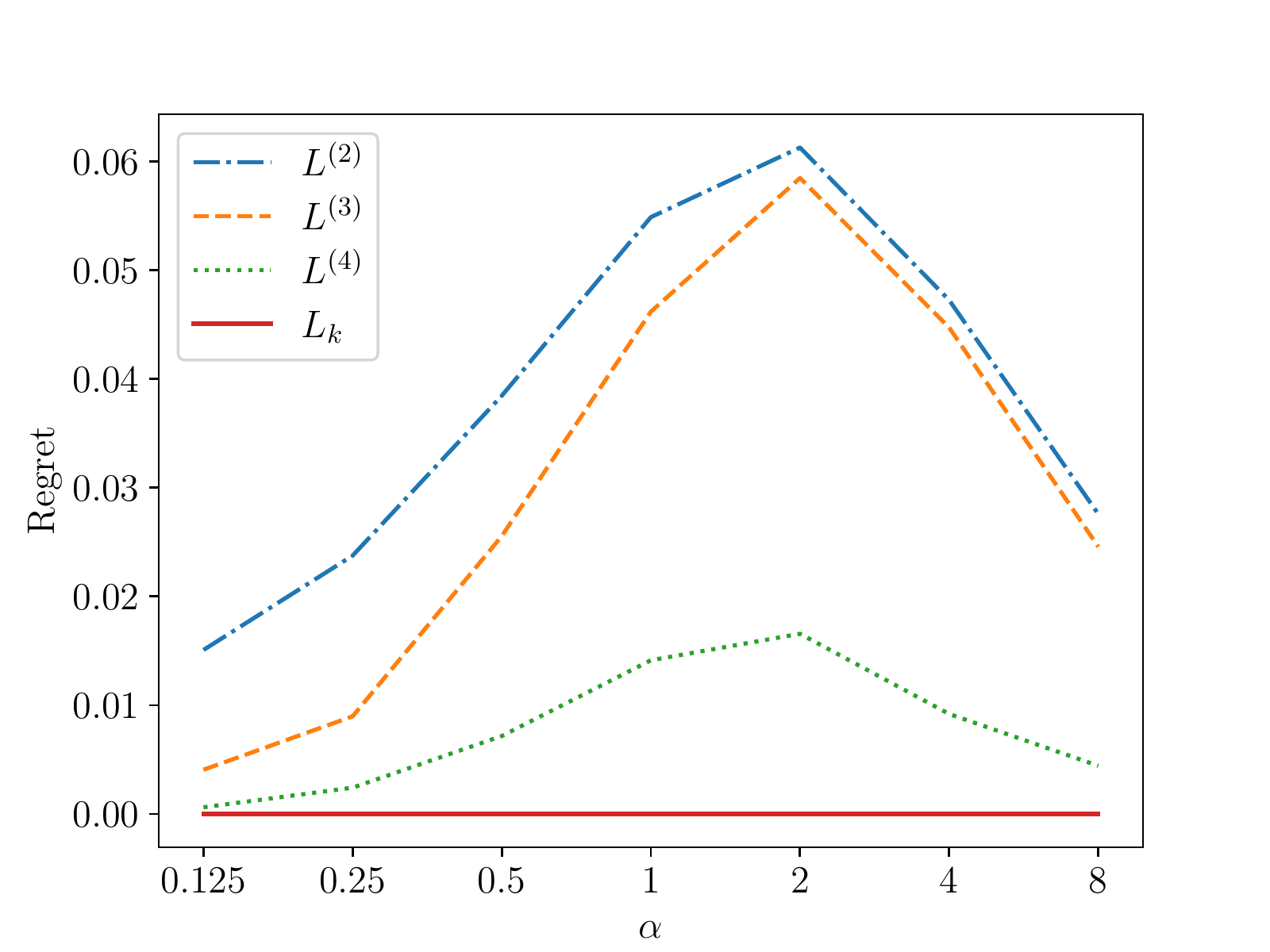}%
\end{center}
\caption{
    The top-$k$ risk (top) and regret (bottom) from surrogate risk minimization of $\Li 2$, $\Li 3$, $\Li 4$, and $\Lk$, for $n=5$ and $k=3$. 
    For each choice of $\alpha$, 1000 conditional label distributions were drawn from $\text{Dirichlet}(\alpha, \alpha, 1, 1, 1)$. 
}
\label{fig:regret}
\end{figure}

We have seen that $\Lk$ is consistent for top-$k$ classification, while $\Li{2}, \Li{3}$, and $\Li{4}$ are not.
In general, therefore, we expect these inconsistent losses to have worse top-$k$ performance than $\Lk$.
We now quantify this gap for the case $n=5$ and $k=3$, by computing the expected difference in top-$k$ loss obtained as a result of optimizing each of the four surrogates.

Recall from Definition~\ref{def:property} that we have $\prop{L}(p) = \argmin_{u \in \reals^n} \inprod{p}{L(u, \cdot)}$ as the minimizers of the expected loss of $L$ under $p$.
For each surrogate $L$ we measure their expected \emph{risk}: the top-$k$ loss obtained by optimizing $L$
\[\Risk(L) = \underset{p\sim D}{\E}\left[\inprod{p}{\lk\left(\link\left(\prop{L}(p)\right),\cdot\right)}\right] ~,\]
and \emph{regret}: the risk minus the true optimal top-$k$ loss.
\[\Regret(L) = \Risk(L) - \underset{p\sim D}{\E}\left[ \argmin_{r \in \R_k} \inprod{p}{\lk(r, \cdot)} \right] ~.\]

Here $p$ is a conditional label distribution, which we draw from $D = \text{Dirichlet}(\alpha, \alpha, 1, 1, 1)$, with $\alpha$ varied from $2^{-3}$ to $2^3$.
We take the $\link$ that breaks ties lexicographically.
The results of these trials are shown in Figure~\ref{fig:regret}.

When $\alpha$ is large, $D$ concentrates on conditional label distributions with most of their weight on the first two labels, and for small $\alpha$, it concentrates on those with weight on the last three.
As $k=3$, we expect all surrogates to perform well in these regimes, since it is relatively easy to select the most likely labels. 
For intermediate values, the distribution is closer to uniform, and the loss increases for all surrogates.
However, the inconsistent surrogates incur the largest increase, and therefore largest regret, as they are more likely to link to a suboptimal set when $p_\k$ is close to $p_\kp$.

As expected, $\Lk$ incurs no regret, since it is consistent.
We also see that of the inconsistent surrogates, $\Li{2}$ incurs the most regret, while $\Li{4}$ incurs the least.
This observation aligns with Table \ref{tab:loss-slices}, which shows that $\Li{2}$ has the largest inconsistent regions, while $\Li{4}$ has the smallest.

Next, we verify this performance empirically. 
We fix $p = (.15, .15, .15, .2, .35)$, a point where $\Li{2}, \Li{3}$, and $\Li{4}$ are inconsistent.
For each value of $\alpha$, we sample 10000 conditional label distributions $p_i \sim \text{Dirichlet}(\alpha p)$; we take the feature vector $x_i=p_i$ and draw the label $y_i \sim p_i$.
For each dataset and each surrogate loss function, we train a linear model for 200 epochs using Adam with a learning rate of 0.01.
Finally, for each $\alpha$, we create a test set with 1000 samples in the same fashion.
We then compute the top-$k$ loss of the model trained for each surrogate loss, and plot the results in Figure \ref{fig:experiment}.

For large $\alpha$, the conditional labels are concentrated on a region where $\Lk$ is consistent but the other surrogate losses are not.
In this regime, $\Lk$ clearly obtains a better top-$k$ test loss. 
For smaller $\alpha$, the conditional distributions are more evenly distributed on $\simplex$, and in this regime $\Lk$ actually performs worse than the inconsistent surrogates.
One explanation for this worse performance could be the shallowness of its gradients.

\begin{figure}[t]
\begin{center}
\includegraphics[clip,width=\columnwidth]{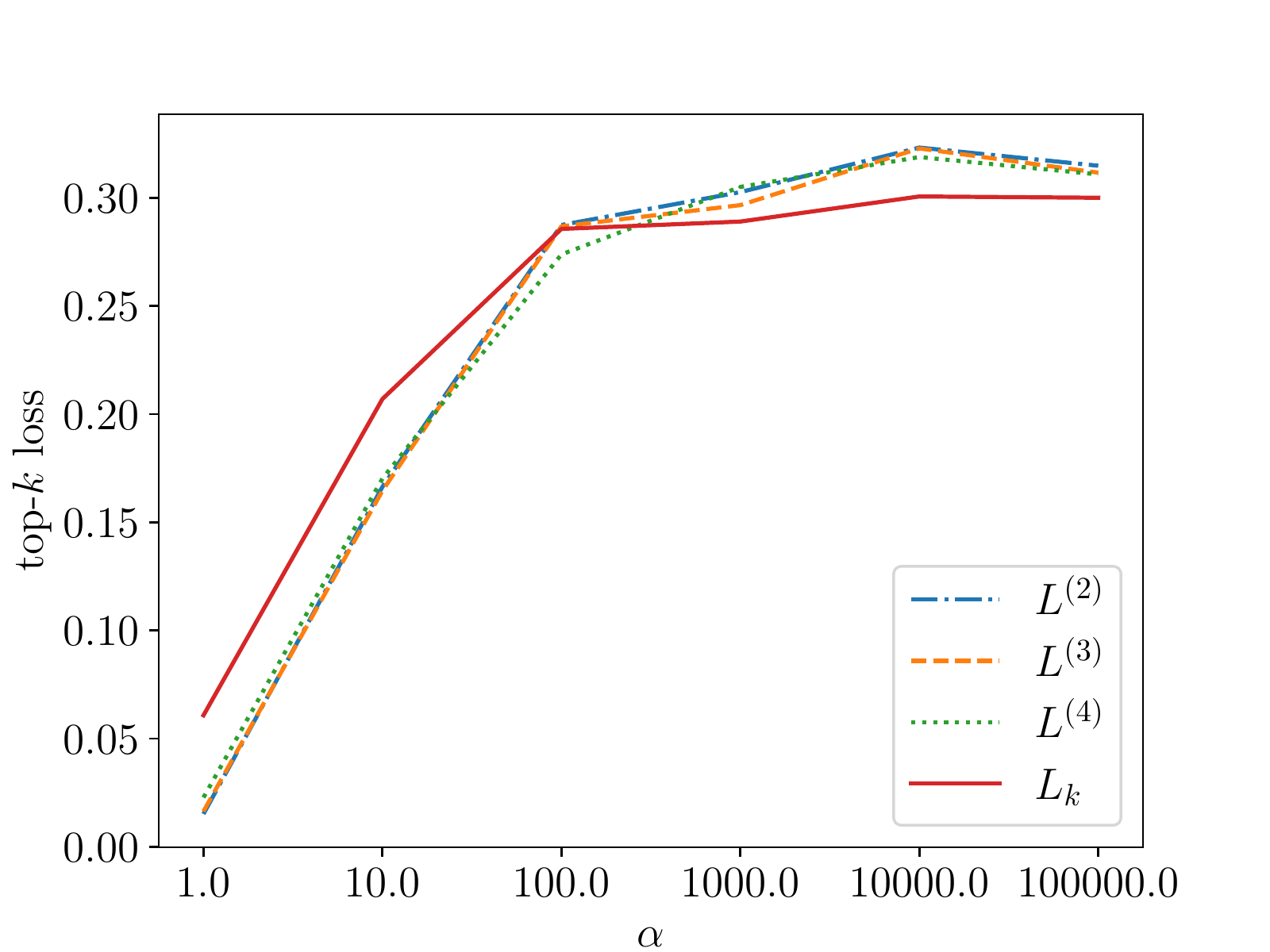}%
\end{center}
\caption{
    The empirical top-$k$ test loss for each loss trained on a dataset with conditional label distributions sampled from $\text{Dirichlet}(\alpha p)$.
}
\label{fig:experiment}
\end{figure}

\section{Discussion}\label{sec:conclusion}

In \S~\ref{sec:previous-surrogates}, we apply the embedding framework of~\citet{finocchiaro2019embedding,finocchiaro2022embedding} to analyze previously proposed, yet inconsistent, surrogates for top-$k$ classification.
The goal of this analysis is two-fold: first, to uncover the discrete losses for which these surrogates are consistent, and second, to characterize distributional conditions sufficent to render them consistent for top-$k$ classification.
We believe this general line of inquiry will be useful for other polyhedral surrogates in the literature known to be inconsistent for their desired target.
In particular, while it is clearly useful to understand the circumstances in which these surrogates would be consistent, we also believe it would be useful to uncover the variants of the intended target which are embedded by these inconsistent surrogates.

To illustrate, consider the surrogate $\Li 4$, analyzed in \S~\ref{sec:L4}.
We showed $\Li 4$ to be consistent for the target loss $\elli 4(T,y) = \frac{k+1}{k+1-|T|}\ones\{y\notin T\}$, which allows one to predict any set of labels $T$ with $|T|\leq k$.
While $\Li 4$ is therefore consistent for top-$k$ only when optimal sets $T$ have size $k$, in practice, the extra flexibility to report smaller sets may be of use.
That is, while common practice is to use $\Li 4$ with the argmax link $\link$, which always yields a set of size $k$, it may be advantageous to use a link $\psi^{(4)}$ that makes $\Li 4$ consistent for $\elli 4$, which could link to sets strictly smaller than $k$.
For example, suppose a search engine has $k = 10$ spaces to show on the first page, but given a specific query $x$, the model $h(x)$ links to $T = \psi^{(4)}(h(x))$ where $|T| = 7$.
Given this information, the search engine may prefer to show only the results in $T$ to reduce visual clutter, or perhaps serve advertisements in the remaining 3 slots.
It is of course rare that a practical decision problem lines up exactly with the canonical discrete loss studied by machine learning researchers---exploring the variants of these canonical problems lurking behind inconsistent polyhedral surrogates may therefore be a useful line of research.
We expect the general technique outlined in \S~\ref{sec:previous-surrogates} would apply readily to other such surrogates.

In \S~\ref{sec:embedding-construction}, we gave the first polyhedral surrogate that is consistent for top-$k$ classification.
This result contributes to an ongoing discussion in the literature about the relative benefits of smooth and polyhedral surrogates.
While it has been suggested that no polyhedral surrogate could be consistent for top-$k$, our surrogate emphasizes the broader finding of \citet{finocchiaro2022embedding}, that in fact \emph{every} discrete target loss has a consistent polyhedral surrogate.
Moreover, any smooth proper loss, with an appropriate link, suffices as a smooth surrogate~\cite{williamson2016composite}.
The question is therefore not one of existence but of when and why smooth surrogates or polyhedral surrogates may be preferable.
In particular, an important open direction is to study the relationship between smoothness, consistency, convergence rates, and excess risk tradeoffs for top-$k$ classification, as well as other discrete prediction tasks.

Finally, while we give the first polyhedral surrogate that is consistent for top-$k$, it remains to compare it to other surrogates in practice beyond our limited experiments.

\section*{Acknowledgements}
The authors would like to thank Enrique Nueve and the anonymous reviewers for their helpful suggestions. We also thank Forest Yang and Sanmi Koyejo for providing implementations of previously studied surrogates.
This material is based upon work supported by the National Science Foundation under Grant No.\ IIS-2045347.

\newpage
\bibliographystyle{plainnat}
\bibliography{extra,topk}

\begin{thebibliography}{28}
\providecommand{\natexlab}[1]{#1}
\providecommand{\url}[1]{\texttt{#1}}
\expandafter\ifx\csname urlstyle\endcsname\relax
  \providecommand{\doi}[1]{doi: #1}\else
  \providecommand{\doi}{doi: \begingroup \urlstyle{rm}\Url}\fi

\bibitem[Adomavicius and Zhang(2016)]{adomavicius2016classification}
Gediminas Adomavicius and Jingjing Zhang.
\newblock Classification, ranking, and top-k stability of recommendation
  algorithms.
\newblock \emph{INFORMS Journal on Computing}, 28\penalty0 (1):\penalty0
  129--147, 2016.

\bibitem[Ailon and Mohri(2008)]{ailon2008efficient}
Nir Ailon and Mehryar Mohri.
\newblock An efficient reduction of ranking to classification.
\newblock In \emph{Proceedings of The 21st Annual Conference on Learning Theory
  (COLT 2008)}, Helsinki, Finland, 2008.
\newblock URL
  \url{http://www.cs.nyu.edu/~mohri/postscript/learning\_ranking.pdf}.

\bibitem[Akata et~al.(2013)Akata, Perronnin, Harchaoui, and
  Schmid]{akata2013good}
Zeynep Akata, Florent Perronnin, Zaid Harchaoui, and Cordelia Schmid.
\newblock Good practice in large-scale learning for image classification.
\newblock \emph{IEEE Transactions on Pattern Analysis and Machine
  Intelligence}, 36\penalty0 (3):\penalty0 507--520, 2013.

\bibitem[Asif et~al.(2015)Asif, Xing, Behpour, and
  Ziebart]{asif2015adversarial}
Kaiser Asif, Wei Xing, Sima Behpour, and Brian~D Ziebart.
\newblock Adversarial cost-sensitive classification.
\newblock In \emph{UAI}, pages 92--101, 2015.

\bibitem[Bartlett and Wegkamp(2008)]{bartlett2008classification}
Peter~L Bartlett and Marten~H Wegkamp.
\newblock Classification with a reject option using a hinge loss.
\newblock \emph{Journal of Machine Learning Research}, 9\penalty0
  (Aug):\penalty0 1823--1840, 2008.

\bibitem[Billsus et~al.(1998)Billsus, Pazzani, et~al.]{billsus1998learning}
Daniel Billsus, Michael~J Pazzani, et~al.
\newblock Learning collaborative information filters.
\newblock In \emph{{ICML}}, volume~98, pages 46--54, 1998.

\bibitem[Deshpande and Karypis(2004)]{deshpande2004item}
Mukund Deshpande and George Karypis.
\newblock Item-based top-n recommendation algorithms.
\newblock \emph{ACM Transactions on Information Systems (TOIS)}, 22\penalty0
  (1):\penalty0 143--177, 2004.

\bibitem[Duchi et~al.(2018)Duchi, Khosravi, Ruan, et~al.]{duchi2018multiclass}
John Duchi, Khashayar Khosravi, Feng Ruan, et~al.
\newblock Multiclass classification, information, divergence and surrogate
  risk.
\newblock \emph{The Annals of Statistics}, 46\penalty0 (6B):\penalty0
  3246--3275, 2018.

\bibitem[Farnia and Tse(2016)]{farnia2016minimax}
Farzan Farnia and David Tse.
\newblock A minimax approach to supervised learning.
\newblock \emph{Advances in Neural Information Processing Systems}, 29, 2016.

\bibitem[Fathony et~al.(2016)Fathony, Liu, Asif, and
  Ziebart]{fathony2016adversarial}
Rizal Fathony, Anqi Liu, Kaiser Asif, and Brian Ziebart.
\newblock Adversarial multiclass classification: A risk minimization
  perspective.
\newblock \emph{Advances in Neural Information Processing Systems}, 29, 2016.

\bibitem[Finocchiaro et~al.(2019)Finocchiaro, Frongillo, and
  Waggoner]{finocchiaro2019embedding}
Jessie Finocchiaro, Rafael Frongillo, and Bo~Waggoner.
\newblock An embedding framework for consistent polyhedral surrogates.
\newblock In \emph{Advances in neural information processing systems}, 2019.

\bibitem[Finocchiaro et~al.(2022)Finocchiaro, Frongillo, and
  Waggoner]{finocchiaro2022embedding}
Jessie Finocchiaro, Rafael~M Frongillo, and Bo~Waggoner.
\newblock An embedding framework for the design and analysis of consistent
  polyhedral surrogates.
\newblock \emph{arXiv preprint arXiv:2206.14707}, 2022.

\bibitem[Frongillo and Waggoner(2021)]{frongillo2021surrogate}
Rafael Frongillo and Bo~Waggoner.
\newblock Surrogate regret bounds for polyhedral losses.
\newblock \emph{Advances in Neural Information Processing Systems}, 34, 2021.

\bibitem[Furnari et~al.(2018)Furnari, Battiato, and
  Maria~Farinella]{Furnari2018leveraging}
Antonino Furnari, Sebastiano Battiato, and Giovanni Maria~Farinella.
\newblock Leveraging uncertainty to rethink loss functions and evaluation
  measures for egocentric action anticipation.
\newblock In \emph{Proceedings of the European Conference on Computer Vision
  (ECCV) Workshops}, September 2018.

\bibitem[Gr{\"u}nbaum et~al.(1967)Gr{\"u}nbaum, Klee, Perles, and
  Shephard]{grunbaum1967convex}
Branko Gr{\"u}nbaum, Victor Klee, Micha~A Perles, and Geoffrey~Colin Shephard.
\newblock \emph{Convex polytopes}, volume~16.
\newblock Springer, 1967.

\bibitem[Hiriart-Urruty and Lemar{\'e}chal(2012)]{hiriart2012fundamentals}
Jean-Baptiste Hiriart-Urruty and Claude Lemar{\'e}chal.
\newblock \emph{Fundamentals of convex analysis}.
\newblock Springer Science \& Business Media, 2012.

\bibitem[Karpathy et~al.(2014)Karpathy, Toderici, Shetty, Leung, Sukthankar,
  and Fei-Fei]{Karpathy2014large}
Andrej Karpathy, George Toderici, Sanketh Shetty, Thomas Leung, Rahul
  Sukthankar, and Li~Fei-Fei.
\newblock Large-scale video classification with convolutional neural networks.
\newblock In \emph{Proceedings of the IEEE Conference on Computer Vision and
  Pattern Recognition (CVPR)}, June 2014.

\bibitem[Lapin et~al.(2015)Lapin, Hein, and Schiele]{lapin2015top}
Maksim Lapin, Matthias Hein, and Bernt Schiele.
\newblock Top-k multiclass svm.
\newblock In \emph{Advances in Neural Information Processing Systems}, pages
  325--333, 2015.

\bibitem[Lapin et~al.(2016)Lapin, Hein, and Schiele]{lapin2016loss}
Maksim Lapin, Matthias Hein, and Bernt Schiele.
\newblock Loss functions for top-k error: Analysis and insights.
\newblock In \emph{Proceedings of the IEEE Conference on Computer Vision and
  Pattern Recognition}, pages 1468--1477, 2016.

\bibitem[Lapin et~al.(2018)Lapin, Hein, and Schiele]{lapin2018analysis}
Maksim Lapin, Matthias Hein, and Bernt Schiele.
\newblock Analysis and optimization of loss functions for multiclass, top-k,
  and multilabel classification.
\newblock \emph{IEEE transactions on pattern analysis and machine
  intelligence}, 40\penalty0 (7):\penalty0 1533--1554, 2018.

\bibitem[Ramaswamy and Agarwal(2016)]{ramaswamy2016convex}
Harish~G Ramaswamy and Shivani Agarwal.
\newblock Convex calibration dimension for multiclass loss matrices.
\newblock \emph{The Journal of Machine Learning Research}, 17\penalty0
  (1):\penalty0 397--441, 2016.

\bibitem[Reddi et~al.(2019)Reddi, Kale, Yu, Holtmann-Rice, Chen, and
  Kumar]{reddi2019stochastic}
Sashank~J Reddi, Satyen Kale, Felix Yu, Daniel Holtmann-Rice, Jiecao Chen, and
  Sanjiv Kumar.
\newblock Stochastic negative mining for learning with large output spaces.
\newblock In \emph{The 22nd International Conference on Artificial Intelligence
  and Statistics}, pages 1940--1949. PMLR, 2019.

\bibitem[Rockafellar(1997)]{rockafellar1997convex}
Ralph~Tyrell Rockafellar.
\newblock \emph{Convex analysis}.
\newblock Princeton university press, 1997.

\bibitem[Russakovsky et~al.(2015)Russakovsky, Deng, Su, Krause, Satheesh, Ma,
  Huang, Karpathy, Khosla, Bernstein, et~al.]{russakovsky2015imagenet}
Olga Russakovsky, Jia Deng, Hao Su, Jonathan Krause, Sanjeev Satheesh, Sean Ma,
  Zhiheng Huang, Andrej Karpathy, Aditya Khosla, Michael Bernstein, et~al.
\newblock Imagenet large scale visual recognition challenge.
\newblock \emph{International journal of computer vision}, 115\penalty0
  (3):\penalty0 211--252, 2015.

\bibitem[Steinwart and Christmann(2008)]{steinwart2008support}
Ingo Steinwart and Andreas Christmann.
\newblock \emph{Support vector machines}.
\newblock Springer Science \& Business Media, 2008.

\bibitem[Tewari and Bartlett(2007)]{tewari2007consistency}
Ambuj Tewari and Peter~L Bartlett.
\newblock On the consistency of multiclass classification methods.
\newblock \emph{Journal of Machine Learning Research}, 8\penalty0 (5), 2007.

\bibitem[Williamson et~al.(2016)Williamson, Vernet, Reid,
  et~al.]{williamson2016composite}
Robert Williamson, Elodie Vernet, Mark Reid, et~al.
\newblock Composite multiclass losses.
\newblock 2016.

\bibitem[Yang and Koyejo(2020)]{yang2018consistency}
Forest Yang and Sanmi Koyejo.
\newblock On the consistency of top-k surrogate losses.
\newblock In \emph{International Conference on Machine Learning}, pages
  10727--10735. PMLR, 2020.

\end{thebibliography}

\newpage
\appendix
\onecolumn

\section{Additional Derivations for $\Li{2}$} \label{appendix:L2}

Throughout this section, consider the surrogate loss 
\[ \Ltwo \]

We proceed as follows: find a bounded representative region for $\Li 2$, find the subsets of that region on which $u \mapsto \Li 2(u,y)$ is affine for all $y \in \Y$, enumerate the vertices of these regions as a finite representative set (since $\Li 2$ is polyhedral).  
By Theorem~\ref{thm:polyhedral-embeds-discrete}, $\Li 2$ embeds its restriction to these vertices.
We can then study the property elicited by the embedded loss and compare it to the top-$k$ property to understand which distributional assumptions are needed for top-$k$ consistency.
This procedure is in Figure~\ref{fig:embedding-procedure}.

In this section, we take $\bar u$ to be the average of the top $k$ elements of $u$, $\bar u = \frac{\sigma_k(u)}{k}$.
Moreover, we denote $\bar{u}_{-i} = \frac{1}{k-1} (\sigma_k(u) - u_{[i]})$ be the averages of the first $k$ sorted elements of $u$ and the average of the first $k-1$ sorted elements of $u$ besides the $i^{th}$ element, respectively.
When $u \in \U_2$ as defined below, $\bar u_{-i}$ is the average of the top-$k$ elements of $u$ if $i \not \in T_k(u)$ and the top $k-1$ of $u_{\setminus i}$ otherwise.

\begin{figure}[t]
    \centering
    \includegraphics[width=0.9\linewidth]{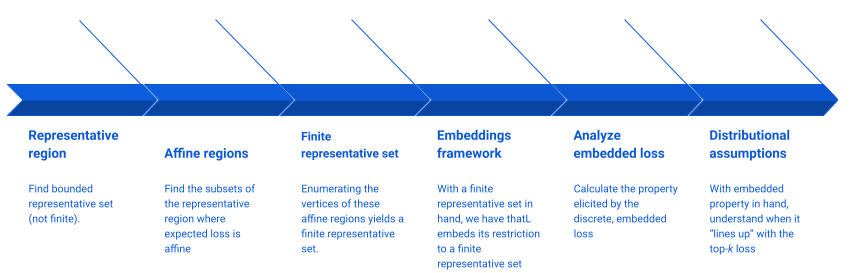}
    \caption{The general embedding procedure used to analyze $\Li{2}$, $\Li{3}$, and $\Li{4}$.}
    \label{fig:embedding-procedure}
\end{figure}

\subsection{The Bounded Representative Region}\label{sec:assumptions}
We initially bound our report set with upper and lower bounds, and show the restricted set is representative. 
We first observe that $\Li{2}$ is invariant in the $\ones$ direction, which is necessary for our first restriction.

\begin{lemma}[Invariance in the $\ones$ direction]\label{lem:invar-ones-L2}
	$\Li{2}(u,y) = \Li{2}(u + \alpha \ones,y)$ for all $\alpha \in \reals$ and $y\in\Y$.
\end{lemma}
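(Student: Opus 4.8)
\textbf{Proof proposal for Lemma~\ref{lem:invar-ones-L2}.}

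The plan is to unfold the definition of $\Li{2}$ in eq.~\eqref{eq:psi-2} at the shifted point $u + \alpha\ones$ and show every $\alpha$-dependent term cancels. First I would substitute $u + \alpha\ones$ for $u$ in the expression
\[
\Li{2}(u,y) = \left(1 - u_y + \tfrac{1}{k}\sum_{i=1}^k (u - e_y)_{[i]}\right)_+~.
\]
The term $u_y$ becomes $u_y + \alpha$, contributing $-\alpha$. For the sum, note that $(u + \alpha\ones) - e_y = (u - e_y) + \alpha\ones$, and adding a constant $\alpha$ to every coordinate of a vector shifts each order statistic by exactly $\alpha$, i.e.\ $((u - e_y) + \alpha\ones)_{[i]} = (u - e_y)_{[i]} + \alpha$ for every $i$. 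Hence $\tfrac{1}{k}\sum_{i=1}^k ((u+\alpha\ones) - e_y)_{[i]} = \tfrac{1}{k}\sum_{i=1}^k (u - e_y)_{[i]} + \alpha$, contributing $+\alpha$.

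Adding these contributions, the argument of the positive part at $u + \alpha\ones$ equals $1 - u_y + \tfrac{1}{k}\sum_{i=1}^k(u-e_y)_{[i]} - \alpha + \alpha$, which is exactly the argument at $u$. Applying $(\cdot)_+$ to both sides gives $\Li{2}(u+\alpha\ones,y) = \Li{2}(u,y)$, as desired. A short remark would record that the key structural fact being used is that the map $v \mapsto v_{[i]}$ (the $i$-th largest order statistic) is translation-equivariant: $(v + \alpha\ones)_{[i]} = v_{[i]} + \alpha$, since adding a constant to all entries preserves their relative order.

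I do not anticipate a genuine obstacle here; the only point requiring a word of care is the translation-equivariance of order statistics, and that the shift $e_y$ is applied \emph{before} the shift by $\alpha\ones$, so the two operations commute ($(u - e_y) + \alpha\ones = (u + \alpha\ones) - e_y$). Everything else is a direct cancellation inside the positive-part operator, so the lemma follows in a couple of lines.
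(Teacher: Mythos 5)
Your proposal is correct and follows essentially the same route as the paper's proof: substitute $u+\alpha\ones$ into eq.~\eqref{eq:psi-2}, use that shifting every coordinate by $\alpha$ shifts each order statistic $(u-e_y)_{[i]}$ by $\alpha$ so the averaged sum gains exactly $+\alpha$, and cancel this against the $-\alpha$ from $u_y+\alpha$ inside the positive part. Your explicit remark on translation-equivariance of order statistics is just making precise a step the paper uses implicitly.
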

\begin{proof}
	\begin{align*}
	\Li{2}(u + \alpha \ones,y) &= \left(1 - (u_y + \alpha) + \frac{1}{k} \sum_{i=1}^k (u + \alpha \ones  - e_y)_{[i]} \right)_+\\
	&= \left(1 - (u_y + \alpha) +\frac{1}{k} \alpha k + \frac{1}{k} \sum_{i=1}^k (u - e_y)_{[i]} \right)_+\\
	&= \left(1 -u_y - \alpha + \alpha + \frac{1}{k} \sum_{i=1}^k (u - e_y)_{[i]} \right)_+\\
	&= \left(1 -u_y + \frac{1}{k} \sum_{i=1}^k (u - e_y)_{[i]} \right)_+\\
	&= \Li{2}(u,y)
	\end{align*}
\end{proof}

We now introduce our first restriction on reports and show it is representative.
\begin{lemma}\label{lem:extra-elts-zero}
	$\Rlo_2 := \{u \in \reals^n_+ \mid \|u\|_0 \leq k\} = \{u \in \reals^n \mid u_\kp = 0 = u_{[n]}\}$ is a representative set for $\Li{2}$.
\end{lemma}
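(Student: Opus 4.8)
The plan is to start from an arbitrary minimizer of the expected surrogate risk and reshape it, without increasing the risk, into a point of $\Rlo_2$. Since $\Li{2}$ is polyhedral, the map $u\mapsto\inprod{p}{\Li{2}(u,\cdot)}$ attains its infimum for every $p\in\simplex$; fix a minimizer $u^*\in\argmin_{u\in\reals^n}\inprod{p}{\Li{2}(u,\cdot)}$. Let $\tilde u$ be obtained from $u^*$ by leaving the $k+1$ largest coordinates (by rank) untouched and resetting each of the remaining $n-k-1$ coordinates to the value $u^*_\kp$. The core of the argument is the pointwise inequality $\Li{2}(\tilde u,y)\le\Li{2}(u^*,y)$ for every $y\in\Y$. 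Granting this, $\tilde u$ is again a minimizer, and translating by $-u^*_\kp\ones$ --- which by Lemma~\ref{lem:invar-ones-L2} changes neither the loss values nor membership in $\prop{\Li{2}}(p)$ --- produces a vector whose bottom $n-k$ coordinates vanish and whose top $k$ coordinates are nonnegative, i.e., a point of $\Rlo_2$. This gives a member of $\prop{\Li{2}}(p)\cap\Rlo_2$ for every $p$, which is exactly what representativeness asks; the stated equality of the two descriptions of $\Rlo_2$ is immediate from nonnegativity.

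To establish the pointwise inequality I would rewrite $\Li{2}(u,y)=\bigl(1-u_y+\tfrac1k\sigma_k(u-e_y)\bigr)_+$ and split on whether $y$ indexes one of the reset coordinates. If it does, then $u^*_y<u^*_\kp$, so replacing $u^*_y$ by $u^*_\kp$ only decreases the term $1-u_y$; moreover $y$ is not among the top $k$ coordinates of $u^*$, and raising coordinates that lie strictly below $u^*_\kp$ up to the value $u^*_\kp$ cannot change the top-$k$ sum, so $\sigma_k(\tilde u-e_y)=\sigma_k(u^*-e_y)=\sigma_k(u^*)$; hence the loss weakly decreases. If $y$ is not a reset coordinate, then $\tilde u_y=u^*_y$ and it suffices to show $\sigma_k(\tilde u-e_y)\le\sigma_k(u^*-e_y)$, since the reverse inequality is automatic from $\tilde u\ge u^*$ coordinatewise. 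For this I would use an exchange argument: in any size-$k$ set achieving $\sigma_k(\tilde u-e_y)$ each reset coordinate contributes exactly $u^*_\kp$, and a short count shows there are always at least as many coordinates of rank at most $k+1$ in $u^*$ available for substitution --- each of value at least $u^*_\kp$, and with enough slack to avoid the index $y$ --- so the sum is unchanged; thus in fact $\Li{2}(\tilde u,y)=\Li{2}(u^*,y)$ in this case.

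I expect the exchange/counting step in the ``not reset'' case to be the one genuine obstacle; everything else --- attainment of the infimum for a polyhedral loss, the $\ones$-invariance (already proved in Lemma~\ref{lem:invar-ones-L2}), and the bookkeeping placing the shifted $\tilde u$ in $\Rlo_2$ --- is routine. A clean way to organize the exchange is to reset one coordinate at a time: raising a single coordinate from below $u^*_\kp$ up to $u^*_\kp$ leaves $u^*_\kp$ itself unchanged, so one may iterate, and each single step then requires only a one-element swap, after which the counting is transparent.
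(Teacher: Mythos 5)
Your proposal is correct and follows essentially the same route as the paper: modify the candidate point by pushing the bottom $n-k-1$ coordinates up to the level of $u_\kp$ (the paper equivalently shifts by $\ones$-invariance so $u_\kp=0$ and clips negatives to zero), verify the pointwise inequality $\Li{2}(\tilde u,y)\le\Li{2}(u^*,y)$ by casing on whether $y$ is a modified coordinate, and invoke Lemma~\ref{lem:invar-ones-L2} to land in $\Rlo_2$. The only cosmetic difference is that you handle the unmodified-$y$ case with a single exchange argument on $\sigma_k$, where the paper splits further on $u_y\ge 1$ versus $u_y\in[0,1)$; both verifications are sound.
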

\begin{proof}
	By Lemma~\ref{lem:invar-ones-L2}, we can fix $u_{[k+1]} = 0$ without loss of generality.
	We then have $u_{[i]} \leq 0$ for all $i \geq k+1$.
	Consider $u' = \max(u, \vec 0)$ such that $u'_{[i]} = u_{[i]}$ for all $i$ such that $u_{[i]} \geq 0$, and $u'_{[i]} = 0$ otherwise.
	Observe that $u$ and $u'$ have the same ordering on their elements and $u' \in \Rlo_2$ by construction.
	We want to show that $\Li{2}(u,y) \geq \Li{2}(u',y)$ for all $y \in \Y$, and representativeness of $\Rlo_2$ follows.
	
	If $u_y \leq u_\kp = 0$, then 
	\begin{align*}
	\Li{2}(u,y) &= \left(1 - u_y + \frac 1 k \sum_{i=1}^k (u - e_y)_{[i]}\right)_+ &  \\
	&\geq \left(1 + \frac {1}{k} \sum_{i=1}^k (u' - e_y)_{[i]}\right)_+ & \text{top-$k+1$ elements of $u$ and $u'$ are the same and $u_y \leq 0$}\\
	&= \Li{2}(u',y)~.~
	\end{align*}
	The inequality comes from the equality of the first $k+1$ sorted elements of $u$ and $u'$, combined with setting $u_y \leq u'_y = 0$ in this case.
	
	Now, if $u_y \geq u_\kp$ and $u_y \geq 1$, observe that $\bar u = \bar u'$.
	We then have 
	\begin{align*}
	\Li{2}(u,y) &= \left(1 - u_y + \frac 1 k \sum_{i=1}^k (u - e_y)_{[i]}\right)_+ & \\
	&= \left(1 - u_y + \bar u - \frac 1 k\right)_+ & \text{substitution of summand by case}\\
	&= \left(1 - u'_y + \bar u' - \frac 1 k\right)_+ & \text{$\bar u = \bar u'$}\\
	&= \Li{2}(u',y)~.~
	\end{align*}
	
	Now suppose that $u_y \geq u_\kp$ and $u_y \in [0,1)$; as the top-$k+1$ elements of $u$ and $u'$ are the same, observe $(u - e_y)_\k = u_\kp = 0 = (u' - e_y)_\k$.
	\begin{align*}
	\Li{2}(u,y) &= \left(1 - u_y + \frac 1 k \sum_{i=1}^{k-1} u_{[i]} + 0 \right)_+ & \text{case} \\
	&= \left(1 - u'_y + \frac 1 k \sum_{i=1}^{k-1} u'_{[i]}\right)_+ & \text{$u$ matches $u'$ on top $k+1$ elements and $u_y = u'_y$} \\
	&= \left(1 - u'_y + \frac 1 k \sum_{i=1}^{k} (u' - e_y)_{[i]}\right)_+ & \\
	&= \Li{2}(u',y)~.~
	\end{align*}

	Since $\Li{2}(u,y) \geq \Li{2}(u',y)$ for all $y \in \Y$, this also holds for the expected loss for all $p \in \simplex$.
	Thus, $\Rlo_2$ is representative.
\end{proof}

It follows from construction of $\Rlo_2$ that we can take $u_\kp = \ldots, = u_{[n]} = 0$ and still have a representative set.

Now, consider the set $\Rhi_2 := \{u \in \reals^n_+ \mid u_i \leq \bar u_{-i} +1 \,\,\,\, \forall i \in [n]\}$. 
While $\Rlo_2$ gives a lower bound on a representative region, $\Rhi_2$ gives an upper bound.

\begin{lemma}\label{lem:top-threshold}
The set $\U_2 := \Rlo_2 \cap \Rhi_2$ is representative.
\end{lemma}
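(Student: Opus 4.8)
\emph{Proof plan.} The plan is to start from Lemma~\ref{lem:extra-elts-zero}, which gives $\prop{\Li{2}}(p)\cap\Rlo_2\neq\emptyset$ for every $p\in\simplex$, and upgrade it by showing this set always contains a point of $\Rhi_2$ as well; then $\prop{\Li{2}}(p)\cap\U_2\neq\emptyset$, which is exactly representativeness. Since $\Rlo_2$ is a finite union of polyhedral pieces $\{u\ge 0 : u_j=0\ \forall j\notin T\}$ over $k$-subsets $T\subseteq[n]$, and $\prop{\Li{2}}(p)$ is a polyhedron (a sublevel set of the polyhedral convex function $\E_{Y\sim p}\Li{2}(\cdot,Y)$), the intersection $\prop{\Li{2}}(p)\cap\Rlo_2$ is a finite union of polyhedra on which the linear objective $u\mapsto\sum_i u_i\ge 0$ attains a minimum; I would fix a minimizer $u^\star$ of this objective. (One can also observe that $\U_2$ is precisely the subregion of $\Rlo_2$ on which the argument of the positive-part operator in \eqref{eq:psi-2} never becomes negative, so this is the $\Upostwo$ of the main text.)

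The core step would be a \emph{capping} claim: if $u\in\Rlo_2$ and some coordinate $i$ violates $\Rhi_2$, i.e.\ $u_i>\bar{u}_{-i}+1$, then the point $\tilde u$ given by $\tilde u_i=\bar{u}_{-i}+1$ and $\tilde u_j=u_j$ for $j\neq i$ lies in $\Rlo_2$ and satisfies $\Li{2}(\tilde u,y)\le\Li{2}(u,y)$ for every $y\in\Y$. Before the case analysis I would record three consequences of the violation: since $\bar{u}_{-i}\ge 0$ we get $u_i>1$, so $i$ is one of the at most $k$ nonzero coordinates of $u$ (indeed $u_i\ge u_{[k]}$); rearranging the violation gives $k u_i>\sigma_k(u)+k-1$, i.e.\ $u_i>\bar u+\tfrac{k-1}{k}$; and because there are at most $k-1$ nonzero coordinates other than $i$ while $\bar{u}_{-i}+1\ge 1>0$, coordinate $i$ stays in the top $k$ of $\tilde u$, so $\bar{u}_{-i}$ is unchanged and $\sigma_k(\tilde u)=\sigma_k(u)-(u_i-\bar{u}_{-i}-1)<\sigma_k(u)$. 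With these in hand the claim reduces to splitting on $y$: (a) $y=i$, where $\Li{2}(u,i)=\Li{2}(\tilde u,i)=0$ since the argument of $(\cdot)_+$ is nonpositive in both cases (using $u_i>\bar u+\tfrac{k-1}{k}$ for $u$, and the direct computation $\tfrac1k\sum_j(\tilde u-e_i)_{[j]}=\bar{u}_{-i}$ for $\tilde u$); (b) $y$ not a nonzero coordinate of $u$, where $\Li{2}(u,y)=1+\bar u$ and decreases because $\sigma_k$ did; (c) $y$ a nonzero coordinate with $u_y\ge 1$, where $\Li{2}(u,y)=\bigl(1-u_y+\bar u-\tfrac1k\bigr)_+$, again decreasing with $\sigma_k$; (d) $y$ a nonzero coordinate with $0\le u_y<1$ (necessarily $y\neq i$, since a coordinate below $1$ cannot violate $\Rhi_2$), where $\Li{2}(u,y)=\bigl(1-u_y+\tfrac1k(\sigma_k(u)-u_y)\bigr)_+$, once more decreasing with $\sigma_k$. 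Throughout I would use $n\ge k+1$ so that $u-e_y$ has at least $k$ nonnegative coordinates, which is what determines whether $y$ lies in the top $k$ of $u-e_y$ in cases (b) and (d).

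Granting the capping claim the lemma follows at once: if $u^\star\notin\Rhi_2$, cap a violated coordinate to obtain $\tilde u\in\Rlo_2$ with $\E_{Y\sim p}\Li{2}(\tilde u,Y)\le\E_{Y\sim p}\Li{2}(u^\star,Y)$, so $\tilde u$ is also a minimizer of the expected loss, yet $\sum_i\tilde u_i<\sum_i u^\star_i$, contradicting the choice of $u^\star$. Hence $u^\star\in\Rlo_2\cap\Rhi_2=\U_2$, so $\prop{\Li{2}}(p)\cap\U_2\neq\emptyset$; since $p\in\simplex$ was arbitrary, $\U_2$ is representative.

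The hard part will be the case analysis in the capping claim, and within it the bookkeeping of which coordinates lie in the top $k$ of $u$, of $u-e_y$, and of $\tilde u-e_y$: subtracting $e_y$ and capping coordinate $i$ each perturb not only entry values but also the identity of the $k$ largest entries, and one must verify in each case that the capped coordinate does not drop out of the top $k$ and that $y$ enters or leaves the top $k$ exactly as the formula for $\Li{2}$ requires. By contrast, the existence of a $\sum_i u_i$-minimal minimizer is routine, since $\Rlo_2$ decomposes into finitely many polyhedral pieces carrying a nonnegative objective.
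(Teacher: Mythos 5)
Your proof is correct and follows essentially the same route as the paper's: cap a coordinate violating $\Rhi_2$ at $\bar u_{-i}+1$ and verify pointwise, case by case on $y$, that $\Li{2}$ does not increase (both losses vanish via the positive part when $y$ is the capped coordinate, and otherwise the loss weakly decreases because $\sigma_k$ does). Your one deviation --- choosing a minimizer of $\sum_i u_i$ within $\prop{\Li{2}}(p)\cap\Rlo_2$ and deriving a contradiction --- is a cleaner way to handle multiple violating coordinates than the paper's informal ``repeat this procedure in descending order'' remark, since it sidesteps any termination concern when capping one coordinate creates new violations elsewhere.
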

\begin{proof}
  Since we have already proven $\Rlo_2$ is representative in Lemma~\ref{lem:extra-elts-zero}, suppose $u \in \Rlo_2$.
  Any $u \not \in \Rhi_2$ must have some element $i \in [n]$ such that $u_i > \bar u_{-i} + 1$.
  Consider $u'$ as follows: for all $y \in [n]$ such that $u_y > \bar u_{-y} + 1$, reassign such a $u'_y := \bar u_{-y} +1$.
  We proceed in two cases, showing below that $\Li{2}(u',y) = \Li{2}(u,y) = 0$ due to the positive part operator.
  In the second case, $\Li{2}(u',j) \leq \Li{2}(u,j)$ for all $j \neq y$ as $\bar u < \bar u'$.
  Moreover, $u' \in \Rhi_2$ by construction.

First, we consider when the outcome $y$ is the modified element of $u$.
We write $u_y = \bar u_{-y} + 1 + \epsilon$ for some $\epsilon > 0$ and $u'_y = \bar u_{-y} + 1$, with $u_j = u'_j$ for all $j \neq y$. 
\begin{align*}
\Li{2}(u,y) &= \left(1 - (\bar u_{-y} + \epsilon + 1) + \frac 1 k \left(\sum_{j=1, j \neq y}^k u_j + \bar u_{-y} + \epsilon\right) \right)_+\\ 
&= \left(1 - (\bar u_{-y} + \epsilon + 1) + \frac 1 k \left((k-1) \bar u_{-y} + \bar u_{-y} + \epsilon\right) \right)_+\\
&= \left(-\bar u_{-y} - \epsilon + \bar u_{-y} + \frac \epsilon k \right)_+ \\
&= (-\frac{k-1}{k}\epsilon)_+\\
&= 0%
\end{align*}
When $\epsilon = 0$, we recover $u'$, in which case we observe the same result from $\Li 2(u',y) = (-\frac {k-1}{k} 0)_+ = 0$.
Thus, the losses are equal on the outcome $y$.

Now, let us consider $z\neq y$.
Since $u \in \Rlo_2$, we have $\bar u \geq 0$ and $\bar u_{-i} \geq 0$ for any $i \in [n]$.
Therefore, if $u_y > \bar u_{-y} + 1$, then we have $u_y > u_\kp$ as $u_y + 1 \geq 1 > 0 = u_\kp$.
Now, for outcome $z \neq y$ (with $u_z \leq \bar u_{-z} +1$, and therefore $u_z = u'_z$), we have 
\begin{align*}
\Li{2}(u,z) &= \left( 1 - u_z + \frac 1 k \sum_{i=1}^k (u-e_z)_{[i]} \right)_+ & \\
	&= \left( 1 - u'_z + \frac 1 k \sum_{i=1}^k (u'-e_z)_{[i]} + \frac {\epsilon}{k}\right)_+ & \text{$u'_y$ is in the top $k$ elements of $(u' - e_z)$ and $u_z = u'_z$}\\
	&\geq \left( 1 - u'_z + \frac 1 k \sum_{i=1}^k (u'-e_z)_{[i]}\right)_+ & \text{Since $\epsilon > 0$}\\
	&= \Li{2}(u',z)
\end{align*}
If there is more than one index $y$ such that $u_y > \bar u_{-y} + 1$, we can repeat this procedure in decsending order so the result holds.

Therefore, if $u \in \argmin_{r}\Li{2}(r,y)$, then so is some $u' \in \U_2$ for each $y \in \Y$, and we can say the same of the expected loss $\E_p \Li{2}(u,\cdot)$ for all $p \in \simplex$.
Thus, $\argmin_u \E_p \Li{2}(u,\cdot) \cap \U_2$ is nonempty for all $p \in \simplex$ and therefore $\U_2$ is representative.
\end{proof}

\paragraph{Re-writing the surrogate without the positive part operator.}
For any $u \in \U_2$, we can rewrite $\Li 2|_{\U_2}(u,y) = \Li{2}(u,y) =  1 - u_y + \frac 1 k \sum_{i = 1}^k(u - e_y)_{[i]}$, removing the positive part operator, as the term inside is always nonnegative for $u \in \U_2$.
This allows us to re-write the loss as follows: 
\begin{align}
\Li{2}(u,y) &= 1 - u_y + \bar u - \frac 1 k \min(u_y, 1)~.\label{eq:simplified-psi-2}
\end{align}
Moreover, we can evaluate the expected loss
\begin{align}
    \E_p \Li{2}(u,\cdot) &= \sum_y p_y \left( 1 - u_y + \bar u - \frac 1 k \min(u_y, 1) \right) \nonumber \\
    &= \sum_y p_y (1 + \bar u)  - \sum_y p_y u_y - \sum_y p_y \frac 1 k \min(u_y, 1) \nonumber \\
    &= 1  - \inprod{p}{u} + \bar u - \frac 1 k \inprod{p}{\min(u, \ones)}~.\label{eq:simplified-ex-psi-2}
\end{align}

\subsection{Affine Regions and a Finite Representative Set}
Since the loss $\Li{2}$ is polyhedral, it has a finite set of minimizers~\citep[Lemma 2]{finocchiaro2019embedding}.
Upon finding a finite representative set $\Ri{2} \subseteq \U_2$, we can apply Theorem~\ref{thm:polyhedral-embeds-discrete}(2) and study the property elicited by $\Li{2}|_{\Ri{2}}$ via embeddings, and how it compares to the top-$k$ property $\gamma_k := \prop{\lk}$ under the argmax link. 

As we showed $\U_2$ is representative in Lemma~\ref{lem:top-threshold}, consider the following set 
\begin{align*}
\Ri{2} &:= \left\{ \frac{|M| + k -1}{k - |H|} \mathbf{1}_H + \mathbf{1}_M : H, M \subset [n], H\cap M = \emptyset, |H| + |M| \leq k, |H| < k \right\}~.~
\end{align*}

We will show that $\Ri{2}$ enumerates the vertices of the regions where the function $u\mapsto \E_p \Li{2}(u,\cdot)$ must be affine, regardless of $p \in \simplex$.
Moreover, is the expected loss is polyhedral, it minimized on at least one face of these affine regions; since each face contains at least one vertex in $\Ri 2$, we will conclude $\Ri{2}$ is representative. 
 
\begin{lemma}\label{lem:topk-affine-regions}
	Fix a set $T \subseteq [n]$ such that $|T| = k$ and any $S \subseteq T$.
	Then $\Li{2}(\cdot,y)$ is affine on the set $A^{T,S} := \{u \in \U_2 \mid T \in T_k(u) \wedge u_y \in [0,1] \forall y \in S \wedge u_y \in [1,1 + \frac 1 {k-1} \sum_{j \in T_k(u) \setminus \{i\}} u_j] \forall y \in T \setminus S \}$ for all $y \in \Y$.
\end{lemma}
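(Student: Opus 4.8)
The plan is to reduce everything to the positive-part-free expression for $\Li{2}$ already derived on $\U_2$, namely eq.~\eqref{eq:simplified-psi-2}: $\Li{2}(u,y) = 1 - u_y + \bar u - \tfrac{1}{k}\min(u_y,1)$, which applies verbatim on $A^{T,S}$ since $A^{T,S}\subseteq\U_2$ by definition. In this form the only potentially non-affine pieces are $\bar u = \sigma_k(u)/k$ and the term $\min(u_y,1)$, so the whole proof amounts to checking that the defining constraints of $A^{T,S}$ force each of these to coincide with an affine function of $u$ on that set.

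First I would handle $\bar u$. Every $u\in A^{T,S}$ satisfies $T \in T_k(u)$, i.e.\ $T\in\argmax_{T' : |T'|=k}\inprod{\ones_{T'}}{u}$, so $\sigma_k(u) = \inprod{\ones_T}{u} = \sum_{j\in T}u_j$ on $A^{T,S}$, and hence $\bar u = \tfrac{1}{k}\sum_{j\in T}u_j$ is linear in $u$ there. Next I would fix an arbitrary outcome $y\in\Y$ and, using $S\subseteq T$, split into the three classes $y\in S$, $y\in T\setminus S$, $y\notin T$, which partition $\Y$. If $y\in S$, the constraint $u_y\in[0,1]$ gives $\min(u_y,1)=u_y$, which is linear. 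If $y\in T\setminus S$, the constraint $u_y\geq 1$ gives $\min(u_y,1)=1$, a constant. If $y\notin T$, I claim $u_y=0$: comparing $T$ against $(T\setminus\{a\})\cup\{y\}$ in the $\argmax$ condition yields $u_a\geq u_y$ for every $a\in T$, so $\min_{a\in T}u_a\geq u_y$; if $u_y>0$ then $T\cup\{y\}$ would be $k+1$ strictly positive coordinates, contradicting $\|u\|_0\leq k$ for $u\in\U_2\subseteq\Rlo_2$; hence $u_y=0$ and $\min(u_y,1)=0$. In each case $\Li{2}(u,y) = 1 - u_y + \tfrac{1}{k}\sum_{j\in T}u_j - \tfrac{1}{k}c(u)$ with $c(u)$ equal to $u_y$ or a constant, so $\Li{2}(\cdot,y)$ agrees on $A^{T,S}$ with an affine function of $u\in\reals^n$; since $y$ was arbitrary, this proves the lemma.

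I expect the only mildly delicate step to be the $y\notin T$ case, where one must combine the optimality characterization of $T_k(u)$ (to get $\min_{a\in T}u_a\geq u_y$) with the sparsity bound $\|u\|_0\leq k$ built into $\U_2$ to conclude $u_y=0$; everything else is routine case bookkeeping over the index classes $S$, $T\setminus S$, and $\Y\setminus T$, together with the already-established simplification eq.~\eqref{eq:simplified-psi-2}.
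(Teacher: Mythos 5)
Your proof is correct and follows essentially the same route as the paper's: reduce to the positive-part-free form in eq.~\eqref{eq:simplified-psi-2}, note that fixing $T \in \topkset(u)$ makes $\bar u = \tfrac1k\sum_{j\in T}u_j$ linear, and check that the region's constraints make $\min(u_y,1)$ affine in each of the three index classes. Your handling of the $y\notin T$ case (deducing $u_y = 0$ from the argmax condition together with $\|u\|_0\le k$) is in fact slightly more explicit than the paper, which simply appeals to $u_{[k+1]}=\cdots=u_{[n]}=0$ on $\Rlo_2$.
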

\begin{proof}
	First, observe the that $u \mapsto \Li 2(u,y)$ is affine in the first two terms of eq.~\eqref{eq:simplified-psi-2} for all $u \in \U_2$, and nonlinearity is only introduced in the last two terms of eq.~\eqref{eq:simplified-ex-psi-2}.
	Fix any set $T \subseteq [n]$ of size $k$.
	We denote by $A^T := \{u \in \U_2 \mid \Tk(u) = T\}$ as the set of $u$ whose top $k$ elements are exactly the elements of $T$.
	Observe that $u \mapsto \bar u$ is affine in $A^T$ for each $T$ since $\bar u$ is the sum of the top $k$ elements of $u$, regardless of their relative order.
	
	Now, since $u \in \U_2 \supseteq \Rlo_2$, we have $u_\kp = \ldots = u_{[n]} = 0$, we impose $k$ constraints constructing $\Rlo_2$ given by $0 \leq u_{i}$ for $i \in T$.
	Moreover, there are $k$ constraints constructing $\Rhi_2$, given by $u_{i} \leq 1 + \bar u_{-i} = 1 + \frac{1}{k-1}\sum_{j \in T_k(u) \setminus \{i\}} u_j$ for all $i \in T$.
	
	We now consider affineness of $u \mapsto \frac 1 k \min(u, \ones)$, where there is a ``switch'' of affine regions at $u_y = 1$ for each $y$.
	For a fixed set $T$ and $u \in A^T$, consider any $S \subseteq T$.
	Construct the region $A^{T,S} = \{u \in A^T \mid u_y \in [0,1] \forall y \in S \wedge u_y \in [1,1 + \frac 1 {k-1} \sum_{j \in T_k(u) \setminus \{i\}} u_j] \forall y \in T \setminus S \}$.
	Observe that $A^{T,S} \subseteq A^T$.
	
	Since $A^{T,S} \subseteq A^T$, we know that $u \mapsto \bar u$ is affine on $A^{T,S}$, and construct $A^{T,S}$ so that $u \mapsto -\frac 1 k \min(u, \ones)$ is affine on $A^{T,S}$.
	Therefore, $u \mapsto L(u,y)$ is affine on $A^{T,S}$ for all $y \in \Y$, $T$ of size $k$, and $S \subseteq T$ as it is the sum of affine functions. 
\end{proof}

As vertices of the $A^{T,S}$ regions are formed by the intersections of these affine regions, we can now enumerate the vertices of the $A^{T,S}$ regions with $\Ri{2}$. 

\renewcommand{\vert}{\mathrm{vert}}
\newcommand{\V}{\mathcal{V}}
\begin{lemma}\label{claim:R-vertices}
	Let $\vert(A^{T,S})$ be the vertices of the the region $A^{T,S}$, and let $\V := \cup_{T,S \mid |T| = k, S \subseteq T} \vert(A^{T,S})$.
	Then $\V \subseteq \Ri{2}$.
\end{lemma}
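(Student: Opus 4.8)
The plan is to take an arbitrary vertex $v$ of a region $A^{T,S}$ (with $|T|=k$, $S\subseteq T$, $H:=T\setminus S$) and show directly that $v\in\Ri 2$; the union over all pairs $(T,S)$ then yields $\V\subseteq\Ri 2$. First I would record the polyhedral description of $A^{T,S}$: the $n-k$ equalities $u_i=0$ for $i\notin T$ (inherited from $\Rlo_2$) confine $A^{T,S}$ to the $k$-dimensional subspace with free coordinates $(u_y)_{y\in T}$, and inside it the only constraints are, per coordinate, the \emph{box} facets $0\le u_y\le 1$ for $y\in S$, and for $y\in H$ the lower box facet $u_y\ge 1$ together with the \emph{growing} facet $u_y\le 1+\tfrac{1}{k-1}\sum_{j\in T\setminus\{y\}}u_j$. (On $A^{T,S}$ we have $T\in\topkset(u)$, so $\bar u_{-y}=\tfrac{1}{k-1}\sum_{j\in T\setminus\{y\}}u_j$ there; the remaining $\Rhi_2$ constraints on $S$-coordinates are implied by $u_y\le 1\le 1+\bar u_{-y}$, and the condition $T\in\topkset(u)$ itself reduces to $u_y\ge 0$, already present.) Thus $A^{T,S}$ is cut out by $2k$ inequalities, two ``owned'' by each coordinate of $T$.

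Next I would use that a vertex $v$ lies on $k$ linearly independent tight facets. Set $G:=\{y\in H : v_y>1\}$; for $y\in G$ the lower box facet is slack, hence its growing facet is tight, which rearranges to $k\,v_y=(k-1)+\sigma_k(v)$ --- a quantity independent of $y$ --- so all coordinates in $G$ share a common value $c$. The normals of the growing facets over $G$ are $ke_y-\mathbf 1_T$, $y\in G$; modulo $\mathrm{span}\{e_j:j\in T\setminus G\}$ these become the rows of $kI_{|G|}-J_{|G|}$ (with $J$ the all-ones matrix), which is invertible exactly when $|G|<k$. Hence a linearly independent tight system contains at most $k-1$ growing facets, so box facets of at least $k-|G|$ further coordinates must be tight; since there are only $k$ coordinates, \emph{every} coordinate of $T$ owns a tight (box or growing) facet, and therefore $v_y\in\{0,1,c\}$ for all $y\in T$ (the value $0$ only on $S$, the value $c$ only on $H$). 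Writing $Z:=\{y\in T:v_y=0\}\subseteq S$ and $M:=\{y\in T:v_y=1\}$ gives the partition $T=Z\sqcup M\sqcup G$, so $\sigma_k(v)=|M|+c|G|$; substituting into $kc=(k-1)+\sigma_k(v)$ and solving yields $c=\tfrac{|M|+k-1}{k-|G|}$, legitimate since $|G|<k$.

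Therefore $v=\tfrac{|M|+k-1}{k-|G|}\,\mathbf 1_G+\mathbf 1_M$ (with $v_y=0$ off $G\cup M$), where $G\cap M=\emptyset$, $|G|+|M|=k-|Z|\le k$, and $|G|<k$ --- exactly the form appearing in the definition of $\Ri 2$ (its ``$H$'' and ``$M$'' being our $G$ and $M$). The few degenerate vertices, e.g.\ when $\sigma_k(v)=1$ so that a box and a growing facet of the same coordinate coincide, collapse to points $\mathbf 1_{\{y\}}$ or $\mathbf 1_M$, which are still in $\Ri 2$. Taking the union over all $(T,S)$ proves $\V\subseteq\Ri 2$.

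The step I expect to be the main obstacle is the vertex bookkeeping: establishing that the growing facets alone can never cut out a vertex (the singularity of $kI-J$) and hence that every coordinate of $T$ owns a tight facet, so that $v$ takes only the three values $0,1,c$. Once that is in hand, identifying the common value $c$ on $G$ and matching the result against the definition of $\Ri 2$ is a short computation.
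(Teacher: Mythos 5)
Your proposal is correct and follows essentially the same route as the paper's proof: identify the per-coordinate constraints cutting out $A^{T,S}$ (the $n-k$ equalities $u_i=0$ off $T$ plus the box and ``growing'' facets on $T$), argue that at a vertex each coordinate of $T$ has a tight facet so the coordinates take only the values $0$, $1$, or a common constant $c$, and then match the result to the $(H,M)$ parameterization of $\Ri{2}$. You simply carry out the bookkeeping more explicitly than the paper does (deriving $c=\tfrac{|M|+k-1}{k-|H|}$ and ruling out $|H|=k$ via the singularity of $kI-J$), which only strengthens the argument.
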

\begin{proof}
	By a corollary of Lemma~\ref{lem:topk-affine-regions}, the function $u \mapsto \E_p \Li 2(u,\cdot)$ is affine on $A^{T,S}$ for any $T \subseteq [n]$ of size $k$ and $S \subseteq T$ for all $p \in \simplex$. 
	We now proceed to compute $\V$ by finding $k$ equalities imposed on $A^{T,S}$~\citep{grunbaum1967convex}.
	Vertices of each $A^{T,S}$ region are formed by the intersection of $n$ hyperplanes technically, but with $T$ fixed, the other $n-k$ come from the requirement $u_i = 0 \forall i \in [n] \setminus T$.
	
	Fix $T,S$ such that $|T| = k$ and $S \subseteq T$.
	We then have vertices at each of these $2^k$ possible equalities, given by the following constraints.
	\begin{align*}
	\forall i \in S, \,\, 0 &\leq u_i \leq 1 \\
	\forall i \in T \setminus S, \,\, 1 &\leq u_i \leq 1 + \frac 1 {k-1} \sum_{j \in T_k(u) \setminus \{i\}} u_j 
	\end{align*}
	Iterating over each of these $2^k$ inequalities, we see that vertices are generated at points $0,1$ or some constant $c_{S,T} \geq 1$ for each choice of inequalities for $T$ and $S$.
	
	It suffices to show that for each $T$ and $S$ as above, there is a $H$ and $M$ satisfying the requirements of $\Ri{2}$.  
	In particular, we take $M = \{i \in S \mid u_i = 1\}$, and $H := \{i \in S \mid u_i > 1\}$.
	By construction, we have $H \cap M = \emptyset$, and $|H| + |M| \leq k$.
	Thus, every $v \in \V$ is contained in $\Ri{2}$.
\end{proof}

\begin{corollary}
	$\Ri{2}$ is a finite representative set for $\Li{2}$.
\end{corollary}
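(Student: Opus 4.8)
The plan is to assemble the three preceding lemmas. Finiteness of $\Ri{2}$ is immediate: it is indexed by ordered pairs of disjoint sets $(H,M)\subseteq[n]$ with $|H|+|M|\le k$ and $|H|<k$, and there are only finitely many such pairs. So the real work is representativeness, i.e.\ exhibiting for every $p\in\simplex$ a point of $\prop{\Li{2}}(p)\cap\Ri{2}$.

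First I would verify that the regions $A^{T,S}$ of Lemma~\ref{lem:topk-affine-regions}, as $T$ ranges over size-$k$ subsets of $[n]$ and $S$ over subsets of $T$, cover $\U_2$. Given $u\in\U_2$, membership in $\Rlo_2$ forces $u$ to have at most $k$ nonzero, nonnegative coordinates; choosing any size-$k$ set $T\supseteq\support(u)$ gives $T\in\topkset(u)$ (the coordinates outside $T$ vanish), and then with $S=\{i\in T:u_i\le1\}$ the constraints defining $A^{T,S}$ hold, the upper bounds on coordinates exceeding $1$ coming from $u\in\Rhi_2$. In particular every $A^{T,S}$ lies in the nonnegative orthant (coordinates outside $T$ are pinned to $0$), so each $A^{T,S}$ is a pointed polyhedron and hence has vertices.

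For the representativeness argument proper, fix $p\in\simplex$. By Lemma~\ref{lem:top-threshold}, $\U_2$ is representative, so there is a global minimizer $u^\star\in\prop{\Li{2}}(p)\cap\U_2$; in particular $\min_{u\in\U_2}\E_p\Li{2}(u,\cdot)$ equals the global infimum. By the covering step, $u^\star\in A^{T,S}$ for some $T,S$, and by the corollary to Lemma~\ref{lem:topk-affine-regions} the map $u\mapsto\E_p\Li{2}(u,\cdot)$ is affine on $A^{T,S}$. Hence its minimizer set within $A^{T,S}$ is the intersection of $A^{T,S}$ with a supporting hyperplane, a nonempty face of $A^{T,S}$; since $A^{T,S}$ is pointed, this face contains a vertex $v\in\V$ of $A^{T,S}$. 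Such $v$ is still a global minimizer (its value matches $u^\star$'s), so $v\in\prop{\Li{2}}(p)$, and $v\in\Ri{2}$ by Lemma~\ref{claim:R-vertices}. Thus $v\in\prop{\Li{2}}(p)\cap\Ri{2}$, as desired.

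The content is organizational, so I do not anticipate a genuine obstacle; the only care needed is (i) checking the $A^{T,S}$ tile $\U_2$, including handling ties among top-$k$ sets and the forced zeros outside $T$, and (ii) invoking the standard polyhedral facts that an affine function on a polyhedron is minimized on a face and that a nonempty face of a pointed polyhedron contains an extreme point of the polyhedron. All the substantive work---locating the affine pieces and enumerating their vertices---was carried out in Lemmas~\ref{lem:topk-affine-regions} and~\ref{claim:R-vertices}.
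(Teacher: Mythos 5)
Your proposal is correct and follows the same route the paper intends: combine the representativeness of $\U_2$ (Lemma~\ref{lem:top-threshold}), the affineness of the expected loss on the regions $A^{T,S}$ (Lemma~\ref{lem:topk-affine-regions}), and the containment of their vertices in $\Ri{2}$ (Lemma~\ref{claim:R-vertices}), then use the standard polyhedral fact that the minimum over a pointed region is attained on a face containing a vertex. The paper leaves exactly these assembly details (covering of $\U_2$ by the $A^{T,S}$, pointedness, face-contains-vertex) implicit, and you have filled them in correctly.
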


\subsection{The Loss Embedded by $\Li 2$}
\begin{corollary}
	$\Li{2}$ embeds $\Li{2}|_{\Ri{2}}$.
\end{corollary}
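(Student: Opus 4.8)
The plan is to obtain this as an immediate consequence of Theorem~\ref{thm:polyhedral-embeds-discrete}(2). Recall that $\Li{2}$ is polyhedral: the sum of the $k$ largest coordinates of $u - e_y$ equals $\max_{S\in\Rk}\inprod{\ones_S}{u - e_y}$, a finite maximum of linear functions, so $1 - u_y + \frac1k\sum_{i=1}^k(u-e_y)_{[i]}$ is a finite maximum of affine functions and $\Li{2}(u,y) = \max\bigl(0,\,1 - u_y + \frac1k\sum_{i=1}^k(u-e_y)_{[i]}\bigr)$ remains one. Hence $\Li{2}$ has a finite representative set by Theorem~\ref{thm:polyhedral-embeds-discrete}(1), and the preceding corollary identifies $\Ri{2}$ as such a set.

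With this in hand, the proof is just an application of the general theory: taking $L = \Li{2}$ and $\Sc = \Ri{2}$ in Theorem~\ref{thm:polyhedral-embeds-discrete}(2) yields that $\Li{2}$ embeds the discrete loss $\Li{2}|_{\Ri{2}}$, which is exactly the assertion. There is no real obstacle remaining here; all of the content lies in Lemmas~\ref{lem:topk-affine-regions} and~\ref{claim:R-vertices} and their corollary, which established that $\Ri{2}$ is representative, and this statement merely packages that fact through the embedding theorem.

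Finally, I would record the more transparent description of the embedded loss. The map sending a valid pair $(H,M)$ (disjoint, $|H|+|M|\leq k$, $|H|<k$) to the vector $\frac{|M|+k-1}{k-|H|}\ones_H + \ones_M$ is a bijection onto $\Ri{2}$; evaluating $\Li{2}$ at such a vector via the positive-part-free expression in eq.~\eqref{eq:simplified-psi-2} and splitting into the cases $y\in H$, $y\in M$, and $y\notin H\cup M$ reproduces exactly the formula for $\elli 2$ in eq.~\eqref{eq:hat-ell-2}. Since reparameterizing a representative set by a bijection preserves an embedding (Definition~\ref{def:loss-embed}), it follows that $\Li{2}$ embeds $\elli 2$ as well. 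The only thing to double-check in this last step is that each of the $2^k$ vertex types of the regions $A^{T,S}$ indeed corresponds to a pair $(H,M)$ of the claimed form and that the constant $c_{S,T}$ appearing at the ``upper'' constraints equals $\frac{|M|+k-1}{k-|H|}$, which is the short computation carried out in the proof of Lemma~\ref{claim:R-vertices}.
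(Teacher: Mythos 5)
Your proposal is correct and matches the paper's (implicit) argument exactly: the corollary is an immediate application of Theorem~\ref{thm:polyhedral-embeds-discrete}(2) once the preceding corollary establishes that $\Ri{2}$ is a finite representative set for the polyhedral loss $\Li{2}$. The additional reparameterization to $(H,M)$ pairs and the derivation of $\elli 2$ that you sketch is likewise how the paper proceeds in the lines immediately following the corollary.
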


We can now evaluate the restricted function and obtain it in the form of a loss matrix.
\begin{align}
\Li{2}|_{\Ri{2}}(r,y) &= \begin{cases}
0 & r_y = \bar r_{-y} + 1\\
\bar r - \frac 1 k & r_y = 1\\
1 + \bar r & r_y = 0
\end{cases}~.~
\end{align}

We can equivalently relabel the reports in $\Ri{2}$ via a bijection $\Phi$ designating $y$ as an element of $H$ if $u_y > 1$, and $y$ is an element of $M$ if $u_y = 1$.
\begin{align*}
\hat \ell_2((H,M), y) &= \begin{cases}
    0 & y \in H \\
     \frac{|H| + |M| - 1}{k-|H|}
     & y \in M\\
    \left( \frac{|H| + |M| - 1}{k-|H|} \right) + \frac{k+1}{k}
     & \text{otherwise}
    \end{cases}
\end{align*}

\subsection{The Property Elicited by the Embedded Loss}
The next natural question is to consider is whether or not $(\Li{2}, \psi_k)$ is calibrated with respect to $\ell_k$.
In order to answer this, we necessarily need to understand something about $\Gamma := \prop{\Li{2}}$, which we will study through $\gamma := \prop{\hat \ell_2}$.

In the previous subsection, we saw the construction of ``high'' ($H$), ``meduim'' ($M$), and ``low'' ($L := [n] \setminus (H \cup M)$) bins for the elements of $\Ri{2}$ via the bijection $\Phi$.
However, because of the nature of $\U_2$, there is a dependence of multiple coordinates for an optimal report of $\Li{2}$.
That is, for a fixed probability distribution $p \in \simplex$, there may be coordinates $i \in [n]$ with ``enough'' weight for $i \in H\cup M$, but there is sometimes a benefit in expected loss for this surrogate by artificially bumping up from the ``low'' to ``middle'' bin when possible because doing so cranks up the constant on the ``high'' reports, yielding better expected loss.
That is, sometimes an algorithm is confident enough in its ``high'' labels that it is optimal to take an additional expected loss on some ``lower'' labels. 

Next, we characterize the distributions $p$ such that $\E_p \hat \ell_2 ((H \cup\{i\}, M), \cdot) \leq \E_p \hat \ell_2 ((H, M \cup \{i\}), \cdot)$.

\begin{lemma}\label{lem:H-better-M}
    Fix some $(H,M) \in \Phi(\Ri 2)$ and consider any index $i \in [n] \setminus (H \cup M)$.
    Consider $u \in \Phi(\Ri{2})$ such that $u = (H \cup \{i\}, M)$ and $u' = (H, M \cup \{i\})$.
    Then $\E_p \hat \ell_2(u,\cdot) \leq \E_p\hat \ell_2(u',\cdot)$ if and only if 
    $p_i \geq (1 - \sigma_H(p))(\frac {1}{k-h})$.
\end{lemma}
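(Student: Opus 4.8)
The plan is to establish the equivalence by a direct computation of the two expected losses. I will write $h=|H|$ and $m=|M|$, expand $\E_p\hat\ell_2(u,\cdot)$ and $\E_p\hat\ell_2(u',\cdot)$ from the case definition of $\hat\ell_2$, subtract, and show that the inequality $\E_p\hat\ell_2(u,\cdot)\le\E_p\hat\ell_2(u',\cdot)$ rearranges exactly into $p_i\ge(1-\sigma_H(p))\tfrac{1}{k-h}$.

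The first step is a general formula for the expected loss of any valid report $(H',M')$, with $h'=|H'|$, $m'=|M'|$. Splitting the expectation over $y\in M'$ and $y\notin H'\cup M'$ and grouping the shared factor $\tfrac{h'+m'-1}{k-h'}$, the ``$M'$'' and ``otherwise'' masses combine via $\sigma_{M'}(p)+(1-\sigma_{H'}(p)-\sigma_{M'}(p))=1-\sigma_{H'}(p)$, giving
\begin{align*}
\E_p\hat\ell_2((H',M'),\cdot)=\frac{h'+m'-1}{k-h'}\bigl(1-\sigma_{H'}(p)\bigr)+\frac{k+1}{k}\bigl(1-\sigma_{H'}(p)-\sigma_{M'}(p)\bigr).
\end{align*}
I then substitute $(H',M')=(H\cup\{i\},M)$ for $u$ (so $\sigma_{H'}(p)=\sigma_H(p)+p_i$, $\sigma_{M'}(p)=\sigma_M(p)$, $h'=h+1$, $m'=m$) and $(H',M')=(H,M\cup\{i\})$ for $u'$ (so $\sigma_{H'}(p)=\sigma_H(p)$, $\sigma_{M'}(p)=\sigma_M(p)+p_i$, $h'=h$, $m'=m+1$).

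The key observation is that the two ``$\tfrac{k+1}{k}$'' terms are identical, since $1-\sigma_{H'}(p)-\sigma_{M'}(p)=1-\sigma_H(p)-\sigma_M(p)-p_i$ in both substitutions; hence they cancel on subtraction, and the numerators $h'+m'-1$ both equal $h+m$, leaving
\begin{align*}
\E_p\hat\ell_2(u,\cdot)-\E_p\hat\ell_2(u',\cdot)=(h+m)\left(\frac{1-\sigma_H(p)-p_i}{k-h-1}-\frac{1-\sigma_H(p)}{k-h}\right).
\end{align*}
Because $u,u'\in\Phi(\Ri{2})$, validity of the report $u=(H\cup\{i\},M)$ forces $|H\cup\{i\}|<k$, i.e.\ $k-h-1\ge1$ and $k-h\ge2$, so both denominators are strictly positive; and when $h+m=0$ the two reports coincide (both equal $e_i$) so there is nothing to prove, while otherwise $h+m>0$. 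Thus the difference has the same sign as $\tfrac{1-\sigma_H(p)-p_i}{k-h-1}-\tfrac{1-\sigma_H(p)}{k-h}$, and clearing the positive denominators turns $\E_p\hat\ell_2(u,\cdot)\le\E_p\hat\ell_2(u',\cdot)$ into $(k-h)(1-\sigma_H(p)-p_i)\le(k-h-1)(1-\sigma_H(p))$, which simplifies to $(k-h)p_i\ge1-\sigma_H(p)$, i.e.\ $p_i\ge(1-\sigma_H(p))\tfrac{1}{k-h}$, as claimed.

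I do not anticipate a real obstacle: the argument is elementary algebra. The two places needing a bit of care are the initial regrouping (so that the expected loss depends on $(H',M')$ only through $\sigma_{H'}(p)$, $\sigma_{M'}(p)$, $|H'|$, $|M'|$, and the $\tfrac{k+1}{k}$ terms visibly match across $u$ and $u'$), and verifying from the $\Ri{2}$ constraints that $k-h-1$ and $k-h$ are positive so that cross-multiplication preserves the direction of the inequality.
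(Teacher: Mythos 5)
Your proposal is correct and follows essentially the same route as the paper: expand the expected losses of $(H\cup\{i\},M)$ and $(H,M\cup\{i\})$ from the definition of $\hat\ell_2$, observe the common terms cancel, and rearrange the resulting inequality into $p_i \geq \tfrac{1-\sigma_H(p)}{k-h}$. You are in fact somewhat more careful than the paper's terse computation (explicitly noting the cancellation of the $\tfrac{k+1}{k}$ terms and the positivity of the denominators $k-h-1$ and $k-h$ coming from the $\Ri{2}$ constraints), so no gap to report.
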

\begin{proof}
Let $h = |H|$ and $m = |M|$.
\begin{align*}
    \E_p \hat \ell_2(u,\cdot) &\leq \E_p \hat \ell_2 (u',\cdot)\\ 
    \left(1 - (\sigma_H(p) + p_i)\right) \left(\frac {k(h+1+m)} {k-h-1}\right) &\leq \left(1 - \sigma_H(p)\right) \left(\frac {k(h+1+m)} {k-h}\right) \\
    \frac{\left( k(1 - \sigma_H(p)) \right)}{(k-h-1)(k-h)} &\leq p_i \frac{k}{k-h-1} \\
    \frac{1 - \sigma_H(p)}{k-h} &\leq p_i~.
\end{align*}
\end{proof}
Observe that for $H = \emptyset$, this inequality becomes $p_i \geq \frac 1 k$.
Now, we characterize the distributions $p$ such that $\E_p \hat \ell_2 ((H, M \cup\{i\}), \cdot) \leq \E_p \hat \ell_2 ((H, M), \cdot)$.

\subsubsection{$M \succeq_i L$}
\begin{lemma}\label{lem:M-better-L}
    Fix $(H,M) \in \Phi(\Ri 2)$ and consider any index $i \in [n] \setminus (H \cup M)$.
    Consider $u \in \Phi(\Ri{2})$ such that $(H, M \cup \{i\})$ and $u' = (H,M)$.
    Then $\E_p \hat \ell_2 (u,\cdot) \leq \E_p \hat \ell_2(u',\cdot)$ if and only if 
    $p_i \geq (\frac h k - \sigma_H(p))(\frac {k}{(k-h)(k+1)}) + \frac 1 {k+1}$.
\end{lemma}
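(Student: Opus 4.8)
The plan is to run the same direct expansion used for Lemma~\ref{lem:H-better-M}. Write $h = |H|$, $m = |M|$, $\sigma_H(p) = \sum_{j \in H} p_j$, and $\sigma_M(p) = \sum_{j \in M} p_j$. The key structural observation is that passing from $u' = (H,M)$ to $u = (H, M\cup\{i\})$ leaves $|H|$ unchanged, so the denominator $k - |H| = k - h$ appearing in the loss-matrix form of $\hat \ell_2$ is the same for both reports; only the numerator $|H|+|M|-1$ increases from $h+m-1$ to $h+m$, and the label $i$ itself migrates from the ``otherwise'' row to the ``$y \in M$'' row.

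First I would partition $\Y$ into $H$, $M$, $\{i\}$, and $L' := [n] \setminus (H \cup M \cup \{i\})$, and evaluate each expected loss by summing the loss-matrix entries weighted by $p$. Labels in $H$ contribute $0$ under both reports. The base coefficient $\tfrac{|H|+|M|-1}{k-h}$ rises uniformly by $\tfrac{1}{k-h}$ across all of $M \cup \{i\} \cup L'$, whose total $p$-mass is $1 - \sigma_H(p)$, contributing $(1-\sigma_H(p))\tfrac{1}{k-h}$ to the difference; meanwhile the extra $\tfrac{k+1}{k}$ penalty, which under $u'$ falls on $\{i\} \cup L'$ but under $u$ falls only on $L'$, is removed from $i$, contributing $-p_i\tfrac{k+1}{k}$. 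The $\sigma_M(p)$-weighted terms and the $\tfrac{k+1}{k}$ terms on $L'$ cancel, leaving
\[ \E_p \hat \ell_2(u,\cdot) - \E_p \hat \ell_2(u',\cdot) = \frac{1-\sigma_H(p)}{k-h} - p_i\,\frac{k+1}{k}~. \]

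Requiring this difference to be $\le 0$ and solving for $p_i$ gives $p_i \ge \tfrac{k(1-\sigma_H(p))}{(k-h)(k+1)}$, and splitting the numerator as $k(1-\sigma_H(p)) = (h - k\sigma_H(p)) + (k-h)$ rewrites this as the claimed threshold $p_i \ge \left(\tfrac hk - \sigma_H(p)\right)\tfrac{k}{(k-h)(k+1)} + \tfrac{1}{k+1}$. There is no real obstacle beyond careful bookkeeping; the one point to watch is that $i$'s ``otherwise'' loss changes for two reasons simultaneously---the numerator grows \emph{and} $i$ leaves the ``otherwise'' set---so it is cleanest to attribute the numerator shift uniformly to all of $M \cup \{i\} \cup L'$ first and then correct only for $i$'s row change, exactly as above. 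All denominators are positive: $(H, M\cup\{i\}) \in \Phi(\Ri 2)$ forces $h + m + 1 \le k$, hence $k - h \ge m + 1 \ge 1$, so no inequality flips when dividing through.
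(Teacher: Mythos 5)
Your proof is correct and takes essentially the same route as the paper: expand both expected losses under the loss matrix in eq.~\eqref{eq:hat-ell-2}, cancel the common terms, and solve the resulting linear inequality for $p_i$; your bookkeeping (the uniform $\tfrac{1}{k-h}$ increase on all mass off $H$, minus the $\tfrac{k+1}{k}$ penalty removed from $i$), the rewriting of $\tfrac{k(1-\sigma_H(p))}{(k-h)(k+1)}$ into the stated threshold, and the observation that $k-h\geq m+1\geq 1$ keeps all denominators positive are all sound. If anything, your version is the cleaner one: the paper's displayed derivation uses base coefficients of the form $\tfrac{k(h+m+1)}{k-h}$, which does not match eq.~\eqref{eq:hat-ell-2}, and its final line does not follow from the penultimate one, whereas your computed threshold $p_i \geq \tfrac{k(1-\sigma_H(p))}{(k-h)(k+1)}$ agrees exactly with the lemma statement.
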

\begin{proof}
Let $h = |H|$ and $m = |M|$.
\begin{align*}
\E_p \hat \ell_2 (u,\cdot) &\leq \E_p \hat \ell_2 (u',\cdot)\\
(1 - \sigma_H(p))\tfrac{k(h+m+1)}{k-h} + \tfrac{(k+1)(1 - \sigma_H(p) - \sigma_M(p) - p_i)}{k} &\leq (1 - \sigma_H(p))\tfrac{k(h+m)}{k-h} + \tfrac{(k+1)(1 - \sigma_H(p) - \sigma_M(p))}{k}\\
\frac{k(1 - \sigma_H(p))}{k-h} &\leq \frac{k+1}{k} p_i \\
\frac{1 - \sigma_H(p)}{(k+1)(k-h)} &\leq p_i~.
\end{align*}
\end{proof}

Lemmas~\ref{lem:H-better-M} and \ref{lem:M-better-L} now provide testable conditions to yield $\prop{\hat \ell_2}$ as $\Ri 2$ is finite.
Now let us consider how one wants to assign indices to each of these three bins.

Consider first that we can calculate the set of indices that should be designated in $H$.
\begin{equation}
    h^*(p) = \max \left\{i \in \{0,\ldots, k-1\} \mid  p_\i > \frac{1 - \sum_{j=1}^{i-1}p_\j}{k-(i-1)} \right\}%
\end{equation}
Now, let us consider $p_{h^*(p)} := \sigma_{h^*(p)}(p) = \sum_{j=1}^{h^*(p)} p_\j$ to determine which elements of $p$ should be designated in $M$. 

\begin{equation}
    m^*(p) = \max \{j \in \{0,\ldots, k\} \mid  p_\j > \frac{1 - p_{h^*(p)}}{(k+1)(k-h^*(p))} \}
\end{equation}

\subsection{Characterizing Consistency of $\Li 2$ with Respect to $\lk$}

We have consistency via the canonical argmax link $\psi_k$ when the optimal surrogate reports $u$ have $u_\k > 0 = u_\kp$, since its top-$k$ set is unique.
For intuition, consider that inconsistency means that any sequence of reports $\{u_i\}$ approaching the $\Li 2$ optimum and applying the link (e.g., $\{r_i\} = \{\psi(u_i)\}$ approaches the $\Li 2|_{\Ri 2}$ optimum; equivalently, approaching the $\Li 2$ optimum implies that $(H_i. M_i) = \{\Phi(r_i)\}$ approaches the $\hat \ell_2$ optimum.

Consider some distributions $p,p'$ such that $\Tk(p) \neq \Tk(p')$ but $(H,M) \in \prop{\hat \ell_2}(p) = \prop{\hat \ell_2}(p')$ with $|H \cup M| < k$.
As the link must be deterministic, given $u = \Phi^{-1}((H,M))$, the link must choose some ordering over the elements $S \subseteq [n]$ such that $u_i = 0$ for all $i \in S$.
Even if this ordering aligns with $\Tk(p)$, it will not align with $\Tk(p')$ as they are not equal; hence the ambiguity in $\Tk(u)$ makes it impossible for consistency to hold at both $p$ and $p'$.
Thus, we will only have consistency guaranteed at distributions $p \in \simplex$ such that there is a value $u \in \prop{\Li 2}(p)$ with $\Tk(u)$ unambiguous.
The distributions where this condition holds are exactly the $p$ for which $m^*(p) = k$.

\begin{lemma}
Let $L:\reals^n\times \Y\to\reals_+$ be a polyhedral loss which embeds $\hat\ell:\hat\R\times\Y\to\reals_+$.
Let $\ell:\R\times\Y\to\reals_+$ be a target loss.
Let $\P \subseteq \simplex$.
Let $\hat\gamma = \prop{\hat\ell}$, $\gamma=\prop\ell$.
If for all $\hat r\in\hat \R$, there exists some $r\in\R$ such that $\{p\in\P : \hat r \in \hat\gamma(p)\} \subseteq \{p\in\P : r \in \gamma(p) \}$, then there exists a link function $\psi:\reals^n\to\R$ such that $(L,\psi)$ is calibrated with respect to $\ell$ on $\P$.
\label{lemma:restricted-consistency}
\end{lemma}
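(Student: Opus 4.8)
The plan is to construct $\psi$ by composing the link that the embedding already supplies for $\hat\ell$ with the ``rounding'' map $\hat r\mapsto r$ promised by the hypothesis, and then to transfer calibration from $\hat\ell$ to $\ell$ via a contrapositive argument restricted to $\P$.

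First I would invoke the embedding: since $L$ is polyhedral and embeds $\hat\ell$, Theorem~\ref{thm:polyhedral-embeds-discrete}(3) yields a link $\hat\psi:\reals^n\to\hat\R$ such that $(L,\hat\psi)$ is calibrated with respect to $\hat\ell$ over $\simplex$, hence in particular for every $p\in\P$. Next, for each $\hat r\in\hat\R$ I would use the hypothesis to fix some $g(\hat r)\in\R$ with $\{p\in\P:\hat r\in\hat\gamma(p)\}\subseteq\{p\in\P:g(\hat r)\in\gamma(p)\}$, and define $\psi:=g\circ\hat\psi:\reals^n\to\R$. (Here $\gamma=\prop\ell$ is well defined because $\R$ is finite, so $\ell$ is minimizable.)

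The remaining step is to check that $(L,\psi)$ is calibrated with respect to $\ell$ on $\P$. Fix $p\in\P$. The crucial observation is the inclusion of ``bad report'' sets
\[\{u\in\reals^n:\psi(u)\notin\gamma(p)\}\ \subseteq\ \{u\in\reals^n:\hat\psi(u)\notin\hat\gamma(p)\}~,\]
which follows because, if $\psi(u)=g(\hat\psi(u))\notin\gamma(p)$ and $p\in\P$, then the contrapositive of the defining containment for $\hat r=\hat\psi(u)$ forces $\hat\psi(u)\notin\hat\gamma(p)$. Taking infima of $u\mapsto\E_{Y\sim p}L(u,Y)$ over the nested sets, with the convention $\inf\emptyset=+\infty$,
\[\inf_{u:\,\psi(u)\notin\gamma(p)}\E_{Y\sim p}L(u,Y)\ \ge\ \inf_{u:\,\hat\psi(u)\notin\hat\gamma(p)}\E_{Y\sim p}L(u,Y)\ >\ \inf_{u\in\reals^n}\E_{Y\sim p}L(u,Y)~,\]
where the strict inequality is exactly calibration of $(L,\hat\psi)$ with respect to $\hat\ell$ at $p$. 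Since $p\in\P$ was arbitrary, this is the calibration condition of Definition~\ref{def:calibration} for $(L,\psi)$ and $\ell$ over $\P$.

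I do not expect a genuine obstacle; the single point requiring care is the \emph{direction} of the implication used in the inclusion above. The hypothesis bounds the set of distributions where $\hat r$ is optimal \emph{inside} the set where $r$ is optimal, so it is its contrapositive (``$g(\hat r)$ suboptimal at $p$ $\Rightarrow$ $\hat r$ suboptimal at $p$'') that we need, and that implication is only valid for $p\in\P$ — which is precisely why calibration is asserted on $\P$ rather than on all of $\simplex$. The empty-set convention for the infimum over bad reports (covering the case where $\psi$ never links outside $\gamma(p)$) is the only other technicality.
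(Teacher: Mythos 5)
Your proposal is correct and follows essentially the same route as the paper: obtain the calibrated link $\hat\psi$ for $\hat\ell$ from the embedding, compose it with a map $\hat\R\to\R$ furnished by the hypothesis, and transfer calibration via the nested ``bad report'' sets and the chain of infima. The only cosmetic difference is that the paper obtains the finite-level map by citing an indirect-elicitation lemma of \citet{finocchiaro2022embedding}, while you define it directly from the stated containment, which is the same construction.
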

\begin{proof}
The proof is essentially the same as that of \citet[Theorem 8]{finocchiaro2022embedding}, but restricted to $\P \subseteq \simplex$. 
Since $L$ embeds $\hat \ell$, let $\hat \psi: \reals^n \to \hat \R$ be a link function such that $(L, \hat \psi)$ is calibrated with respect to $\hat \ell$ (Theorem \ref{thm:polyhedral-embeds-discrete}).
By \citet[Lemma 4]{finocchiaro2022embedding}, $\hat \ell$ indirectly elicits $\gamma$ for some link function $\psi^\R: \hat \R \to \R$.
Then for any $r \in \hat \R$ and $p \in \P$, $r \in \hat \gamma(p) \implies \psi^\R(r) \in \gamma(p)$.

Now, let $\psi = \psi^\R \circ \hat \psi: \R^n \to \R$.
We will show $(L, \psi)$ is calibrated with respect to $\ell$ on $\P$.
By the construction of $\psi$, for any $p \in \P$ and $u \in \reals^d$, if $\hat \psi(u) \in \hat \gamma(p)$, then $\psi(u) = \psi^\R(\hat \psi(u)) \in \gamma(p)$.
Similarly, if $\psi(u) \not\in \gamma(p)$, then $\hat \psi(u) \not \in \hat \gamma(p)$.
Therefore, 
\[ \{u \in \reals^n \mid \psi(u) \not \in \gamma(p)\} \subseteq \{u \in \reals^n \mid \hat\psi(u) \not \in \hat\gamma(p)\} ~.\]
Since $(L, \hat\psi)$ is calibrated with respect to $\hat \ell$, we obtain
\[ \inf_{u \in \reals^n:\psi(u) \not \in \gamma(p)} \inprod{p}{L(u)} \geq \inf_{u \in \reals^n:\hat\psi(u) \not \in \hat\gamma(p)} \inprod{p}{L(u)} > \inf_{u \in \reals^n} \inprod{p}{L(u)} ~,\]
so $(L, \psi)$ is calibrated with resepct to $\ell$ on $\P$.
\end{proof}

\psitwoconsistent*

\section{Additional Derivations for $\Li{3}$} \label{appendix:L3}

Recall that we have
\[ \Lthree\]

We follow the same general procedure as \S~\ref{appendix:L2}; see Figure~\ref{fig:embedding-procedure} for an outline.

\subsection{Finding a Representative Region}
As with $\Li{2}$, we can show that $\Li{3}$ is invariant in the ones direction.
\begin{lemma}[Invariance in the $\ones$ direction]\label{lem:invar-ones}
$\Li{3}(u, y) = \Li{3}(u + \alpha \ones, y)$ for all $\alpha \in \reals$.
\end{lemma}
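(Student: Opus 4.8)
The plan is to mirror the proof of Lemma~\ref{lem:invar-ones-L2} almost verbatim, exploiting the fact that here the positive-part operator and the order statistics act on each summand in a way that is compatible with a global additive shift. First I would record the elementary fact that adding a constant multiple of $\ones$ to a vector shifts every order statistic by that same constant: for any $w \in \reals^n$ we have $(w + \alpha\ones)_{[i]} = w_{[i]} + \alpha$ for every $i$, since sorting the entries is unchanged by adding the same scalar to all of them. Applying this with $w = u - e_y$ gives $(u + \alpha\ones - e_y)_{[i]} = (u - e_y)_{[i]} + \alpha$, because $u + \alpha\ones - e_y = (u - e_y) + \alpha\ones$.

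Next I would substitute $u \mapsto u + \alpha\ones$ directly into the definition of $\Li{3}$. Inside the $i$-th bracket, this substitution replaces $u_y$ by $u_y + \alpha$ and replaces $(u - e_y)_{[i]}$ by $(u - e_y)_{[i]} + \alpha$, so the argument of $[\,\cdot\,]_+$ becomes $1 - (u_y + \alpha) + (u - e_y)_{[i]} + \alpha = 1 - u_y + (u - e_y)_{[i]}$. The two copies of $\alpha$ cancel \emph{inside each bracket individually}, \emph{before} the positive part is applied, so each of the $k$ summands is literally unchanged; summing over $i$ and dividing by $k$ yields $\Li{3}(u + \alpha\ones, y) = \Li{3}(u, y)$. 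I would present this as a short displayed chain of equalities, exactly parallel to the proof of Lemma~\ref{lem:invar-ones-L2}.

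There is essentially no obstacle here. The only point requiring a moment's care is that the cancellation must occur inside every bracket separately, so that the nonlinearity of $[\,\cdot\,]_+$ never enters; this is immediate once the order-statistic shift identity above is in hand. (This differs slightly from the $\Li{2}$ case, where the shift was instead absorbed into the single averaged term $\frac1k\sum_{i=1}^k(u - e_y)_{[i]}$, but the mechanism — $\alpha$ from $u_y$ cancelling $\alpha$ coming through the order statistics — is the same.)
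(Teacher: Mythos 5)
Your proposal is correct and follows the same route as the paper's proof: substitute $u+\alpha\ones$, use $(u+\alpha\ones-e_y)_{[i]}=(u-e_y)_{[i]}+\alpha$, and cancel the two copies of $\alpha$ inside each bracket before the positive part is applied. Your explicit remark that the cancellation happens within every summand separately (so the nonlinearity of $[\,\cdot\,]_+$ never matters) is exactly the point the paper's chain of equalities relies on implicitly.
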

\begin{proof}
\begin{align*}
    \Li{3}(u + \alpha \ones, y) &= \frac 1 k \sum_{i=1}^k \left[ 1 - (u_y + \alpha) + (u + \alpha \ones - e_y)_{[i]}) \right]_+ \\
    &= \frac 1 k \sum_{i=1}^k \left[ 1 - u_y - \alpha + (u - e_y)_{[i]} + \alpha \right]_+ \\
    &= \frac 1 k \sum_{i=1}^k \left[ 1 - u_y + (u - e_y)_{[i]} \right]_+ \\
    &= \Li{3} (u,y)~.
    \qedhere
\end{align*}
\end{proof}

As before, we can then set $u_{[k+1]} = 0$ without loss of generality, and show that we can restrict to the representative set $\Rlo_3 := \{u \in \reals^{n}_+ \mid \|u\|_0 \leq k\}$ (Lemma~\ref{lem:Rlo-rep}).
Throughout, let $T(u) \in \Tk(u)$ be some choice of top-$k$ elements of $u$ so that $|T(u)| = k$.
For a fixed choice $T(u)$, we additionally consider $V_y(u) := \{i \in T(u) \mid u_i > u_y -1\} \setminus \{y\}$. 

\begin{lemma}[$\Rlo_3$ is representative for $\Li{3}$]\label{lem:Rlo-rep}
Consider $u \in \reals^n_+$ such that $u_{[k+1]} = 0$, and $u'$ such that $u'_\i = u_\i$ for $i \leq k+1$, and $u'_\i = 0$ otherwise.
For all $y \in \Y$, $\Li{3}(u,y) \geq \Li{3}(u',y)$.
\end{lemma}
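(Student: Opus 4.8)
The plan is to follow the template of Lemma~\ref{lem:extra-elts-zero} almost verbatim, replacing $\Li 2$ by $\Li 3$. Using the $\ones$-invariance of $\Li 3$ (Lemma~\ref{lem:invar-ones}), it suffices to take an arbitrary report $u$ already shifted so that $u_\kp = 0$ and show that the truncation $u'$ obtained from $u$ by replacing every strictly negative coordinate with $0$ (equivalently: keeping the $k+1$ largest sorted entries and zeroing the rest, so $u' = \max(u,\vec 0)$) satisfies $\Li 3(u,y) \ge \Li 3(u',y)$ for every $y \in \Y$. Since $u'$ is nonnegative and its nonzero coordinates are exactly the strictly positive coordinates of $u$, of which there are at most $k$ because $u_\kp = 0$, we have $u' \in \Rlo_3$; taking expectations over any $p \in \simplex$ then shows any minimizer of $\E_p \Li 3(u,\cdot)$ can be replaced by one lying in $\Rlo_3$, which is exactly representativeness.

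The one structural observation to set up first is that $u_\kp = 0$ forces $u$ to have at least $k+1$ nonnegative coordinates (those ranked $1,\ldots,k+1$), and that $u$ and $u'$ agree on every nonnegative coordinate of $u$, differing only on the strictly negative ones, where $u'$ equals $0$. From this I claim that for each fixed $y$ the top $k$ sorted values of $u - e_y$ and of $u' - e_y$ coincide, i.e. $(u - e_y)_\i = (u' - e_y)_\i$ for all $i \le k$: subtracting $e_y$ only lowers coordinate $y$, so the top $k$ entries of either vector are drawn from the (common) nonnegative coordinates other than $y$, of which there are still at least $k$; and the extra zero coordinates that $u'$ has relative to $u$ all equal $0$, so they cannot displace any of these nonnegative values from the top $k$.

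With that in hand the argument splits on the sign of $u_y$, exactly as in Lemma~\ref{lem:extra-elts-zero}. If $u_y > 0$ then $u'_y = u_y$, so $1 - u_y + (u - e_y)_\i = 1 - u'_y + (u' - e_y)_\i$ for every $i \le k$ and hence $\Li 3(u,y) = \Li 3(u',y)$. If $u_y \le 0$ then $u'_y = 0 \ge u_y$, so $1 - u_y + (u - e_y)_\i \ge 1 - u'_y + (u' - e_y)_\i$ for every $i \le k$; since $[\cdot]_+$ is nondecreasing, summing over $i$ and dividing by $k$ gives $\Li 3(u,y) \ge \Li 3(u',y)$. Either way the claimed pointwise inequality holds, completing the proof; representativeness of $\Rlo_3$ then feeds into the same pipeline (partition into affine regions, enumerate vertices, invoke Theorem~\ref{thm:polyhedral-embeds-discrete}) used for $\Li 2$ in Appendix~\ref{appendix:L2}. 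The only delicate point is the sorted-value bookkeeping in the middle paragraph — in particular that the per-term positive parts of $\Li 3$ do not spoil the coincidence of top-$k$ values — and once the ``at least $k+1$ nonnegative coordinates'' fact is pinned down this is routine.
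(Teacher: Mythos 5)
Your proof is correct and follows essentially the same route as the paper's: the paper splits on $y \in T(u)$ versus $y \notin T(u)$ (equivalently, your sign split on $u_y$), notes that the top-$k$ sorted values of $u - e_y$ and $u' - e_y$ coincide, and uses $u_y \le 0 = u'_y$ in the second case to get the per-term inequality before applying monotonicity of $[\cdot]_+$. One small slip in your bookkeeping: the top-$k$ entries of $u - e_y$ need not avoid coordinate $y$ (its value $u_y - 1$ can exceed other entries), but since $u'_y = u_y$ whenever $u_y \ge 0$ that coordinate's value is common to both vectors, and when $u_y < 0$ it is negative in both and hence outside the top $k$, so your claimed equality of sorted values---and the rest of the argument---stands.
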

\begin{proof}
Observe that there is a choice of $T$ such that $T(u) = T(u')$; we proceed with this choice, though any other choice of $T(u') \in \Tk(u)$ results in the same loss values.
Consider two cases: first, if $y \in T(u)$, and then if $y \not \in T(u)$.

Case 1: $y \in T(u)$ follows trivially since the elements being summed over are equal (e.g., $u_i = u'_i \forall i \in T(u) = T(u')$), so the losses are equal.

Case 2: $y \not \in T(u)$
\begin{align*}
    \Li{3}(u,y) 
    &= \frac 1 k \sum_{i \in T(u)} \left[ 1 - u_y + (u - e_y)_\i \right]_+\\
    &= \frac 1 k \sum_{i \in T(u)} \left[ 1 - u_y + u_i \right]_+\\
    &= \frac 1 k \sum_{i \in T(u')} \left[ 1 - u_y + u'_i \right]_+\\
    &\geq \frac 1 k \sum_{i \in T(u')} \left[ 1 + u'_i \right]_+\\
    &= \Li{3}(u',y)~.
\end{align*}

By Lemma~\ref{lem:invar-ones}, $u$ is invariant in the ones direction, so without loss of generality we can set $u_{[k+1]} = 0$.
The cases above show we can set $u_{[j]} = 0$ for any $j > k + 1$ without increasing the loss on any outcome.
Together, these results imply that $\Rlo_3 = \{u \in \reals^n_+ \mid \|u\|_0 \leq k\}$ is representative.
\end{proof}

We continue towards a finite representative set, showing that each element of $u$ should be no more than $1$ greater than the next lowest element 
\begin{lemma}\label{lem:match-Ty-S}
Consider $u \in \Rlo_3$ and $i \in \{1,\ldots,k\}$ such that $u_{[i]} = u_{[i+1]} + 1+ \varepsilon$ for some $\varepsilon > 0$.
Take $u'$ such that $u'_{[j]} = u_{[j]} - \varepsilon$ for all $j \in \{1, \ldots, i\}$.
Then there exists a choice of $T$ such that $T(u) = T(u')$ and $(1 - u_y + u_j)_+ \geq (1 - u'_y + u'_j)_+$ for all $j \in V_y(u)$, so $V_y(u') \subseteq V_y(u)$.
\end{lemma}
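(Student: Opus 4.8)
The plan is to first identify exactly which coordinates get shifted, then argue the shift preserves the order structure relevant to $\Li{3}$, and finally reduce both claimed inequalities to a single bookkeeping statement about how far each coordinate moves. Since the hypothesis $u_{[i]} = u_{[i+1]} + 1 + \varepsilon$ forces $u_{[i]} > u_{[i+1]}$, the set $I := \{\, l \in [n] : u_l \ge u_{[i]} \,\}$ has exactly $i$ elements and $u'$ is precisely $u - \varepsilon\ones_I$; in particular $u_l - u'_l \in \{0, \varepsilon\}$ for every $l$, and $u' \in \Rlo_3$ because the shifted entries satisfy $u'_l \ge u_{[i]} - \varepsilon = u_{[i+1]} + 1 > 0$ while the zero entries of $u$ are untouched. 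Next I would check that the shift preserves the coordinatewise order of $u$, both weakly and strictly: within $I$ it is a uniform translation, outside $I$ nothing changes, and across the boundary $\min_{l \in I} u'_l = u_{[i+1]} + 1 > u_{[i+1]} = \max_{l \notin I} u'_l$. Hence $\Tk(u) = \Tk(u')$, and I fix a common choice $T \in \Tk(u) = \Tk(u')$, computing $V_y(u)$ and $V_y(u')$ with respect to this same $T$ for each fixed $y \in \Y$.

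The crux is a dichotomy ruling out one configuration. Suppose $y \in I$ and $j \notin I$. Then $u_y \ge u_{[i]} = u_{[i+1]} + 1 + \varepsilon$ and $u_j \le u_{[i+1]}$, so $u_y - u_j \ge 1 + \varepsilon$ and similarly $u'_y - u'_j \ge 1$; thus $j \notin V_y(u)$ and $j \notin V_y(u')$, since membership in either set requires the partner's value to strictly exceed $y$'s value minus one. Contrapositively, whenever $j \in V_y(u) \cup V_y(u')$ we must have $j \in I$ or $y \notin I$, and in that case the displacement of $u_j$ dominates that of $u_y$, namely $u_j - u'_j \ge u_y - u'_y$ (if $j \in I$ the left side equals $\varepsilon$; if $y \notin I$ the right side equals $0$). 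Rearranged, $1 - u_y + u_j \ge 1 - u'_y + u'_j$ for every $j$ that is a $V_y$-partner in $u$ or in $u'$.

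Both conclusions then drop out. For $j \in V_y(u)$ we have $1 - u_y + u_j > 0$, so $(1 - u_y + u_j)_+ = 1 - u_y + u_j \ge 1 - u'_y + u'_j$, and since this value is nonnegative it also dominates $(1 - u'_y + u'_j)_+$, which is the first claim. For the inclusion $V_y(u') \subseteq V_y(u)$, take $j \in V_y(u')$: then $j \in T$, $j \ne y$, and $1 - u'_y + u'_j > 0$, so by the displacement inequality $1 - u_y + u_j \ge 1 - u'_y + u'_j > 0$, i.e.\ $u_j > u_y - 1$, which together with $j \in T = T(u)$ and $j \ne y$ gives $j \in V_y(u)$. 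I expect the dichotomy step to be the main obstacle: the whole argument hinges on observing that the hypothesis $u_{[i]} - u_{[i+1]} = 1 + \varepsilon > 1$ is exactly what makes the ``$y$ shifted, $j$ not shifted'' configuration incompatible with $j$ being a $V_y$-partner in either $u$ or $u'$; once that case is isolated, the remaining manipulations of the positive-part operator are routine.
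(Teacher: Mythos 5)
Your proof is correct and takes essentially the same route as the paper's: fix a common top-$k$ set $T$ (possible because subtracting $\varepsilon$ from the top $i$ coordinates preserves the coordinate order, since $u_{[i]}-\varepsilon = u_{[i+1]}+1 > u_{[i+1]}$), then case on which of $u_y,u_j$ are shifted, ruling out the ``only $u_y$ shifted'' configuration for $V_y$-partners and observing that in every remaining case $1-u_y+u_j \geq 1-u'_y+u'_j$. If anything, you are slightly more complete than the paper, since you explicitly derive the inclusion $V_y(u')\subseteq V_y(u)$ from the same dichotomy (by also excluding that configuration for $V_y(u')$), a step the paper leaves implicit.
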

\begin{proof}
First, observe $\Tk(u) = \Tk(u')$, as we are only shifting at most the top $k$ elements of $u$, and they are being shifted in a way that preserves them as the top-$k$.
Thus, by taking $T(u)$ to be a function of $\Tk(u)$, a choice of $T$ such that $T(u) = T(u')$ exists.

For any outcome $y \in \Y$ and index $j \in [n]$, there are four possible cases for the change in $u_y$ and $u_j$: (1) neither is modified (e.g., $u_y = u'_y$ and $u_j = u'_j$); (2) just $u_y$ is modified (e.g., $u_y = u'_y + \varepsilon$ and $u_j = u'_j$); (3) just $u_j$ is modified (e.g., $u_y = u'_y$ and $u_j = u'_j + \varepsilon$); and (4) both are modified (e.g., $u_y = u'_y + \varepsilon$ and $u_j = u'_j + \varepsilon$).  
Cases 1 and 4 are immediate, $(1 - u_y + u_j)_+ = (1 - u'_y + u'_j)_+$ by substitution.

Case 2: $u_y = u'_y + \varepsilon$, $u_j = u'_j$.
For this case to occur, $u_y \geq u_{\i}$ and $u_\i > u_j$. 
Therefore, $u_y > u_i+1 \geq u_j$, violating the construction of $V_y(u)$.

Case 3: $u_y = u'_y$, $u_j = u'_j + \varepsilon$.
By the case, we have $u_j > u'_j \geq u_{[i+1]} + 1 \geq u'_y = u_y$.
As $u'_j < u_j$, we immediately have $(1 - u_y + u_j)_+ \geq (1 - u'_y + u'_j)_+ \geq 0$.

\end{proof}

Let us denote the set $\Rhi_3 := \{u \in \reals^n \mid u_{[i+1]} \leq u_{[i]} \leq u_{[i+1]} +1\,\, \forall i \in (1,\ldots,k) \}$.
We now give a bounded representative set for $\Li{3}$.

\begin{lemma}\label{claim:rep-bounded-plus-one}
The set $\U_3 := \Rhi_3 \cap \Rlo_3$ is representative for $\Li{3}$. 
\end{lemma}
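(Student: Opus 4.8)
The plan is to reduce to the already-proven representativeness of $\Rlo_3$ (Lemma~\ref{lem:Rlo-rep}) and then iterate the one-step ``gap-shrinking'' move of Lemma~\ref{lem:match-Ty-S} until no gap exceeds $1$. Fix $p \in \simplex$. By Lemma~\ref{lem:Rlo-rep} the expected loss $\E_p \Li{3}(\cdot, \cdot)$ has a minimizer $u \in \Rlo_3$, so it suffices to produce $u' \in \U_3 = \Rhi_3 \cap \Rlo_3$ with $\Li{3}(u',y) \le \Li{3}(u,y)$ for every $y \in \Y$; then $u'$ is also a minimizer, giving $\prop{\Li{3}}(p) \cap \U_3 \neq \emptyset$. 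So the whole task is: from an arbitrary $u \in \Rlo_3$, build such a $u'$. If $u \in \Rhi_3$ we are done; otherwise pick any $i \in \{1,\dots,k\}$ with $u_{[i]} > u_{[i+1]} + 1$, write $u_{[i]} = u_{[i+1]} + 1 + \varepsilon$ with $\varepsilon > 0$, and let $u'$ be the point obtained by lowering the top $i$ entries of $u$ by $\varepsilon$, exactly as in Lemma~\ref{lem:match-Ty-S}.

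I would then record three facts about this step. First, it stays in $\Rlo_3$: each lowered entry becomes at least $u_{[i+1]} + 1 \ge 1 > 0$ (using $u \ge 0$, and $u_{[k+1]} = 0$ when $i = k$), so $u' \ge 0$ and $\|u'\|_0 = \|u\|_0 \le k$. Second, it removes the violation at position $i$ without creating new ones: all top-$i$ entries move by the same amount, so every gap $u_{[j]} - u_{[j+1]}$ with $j \neq i$ is unchanged, while gap $i$ drops to exactly $1$; hence the number of indices $j \in \{1,\dots,k\}$ with $u_{[j]} - u_{[j+1]} > 1$ strictly decreases, and since there are at most $k$ of them the iteration terminates after at most $k$ steps at a point of $\Rhi_3 \cap \Rlo_3 = \U_3$. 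Third, it does not increase the loss on any outcome, which is the substantive point.

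For the last fact I would unfold $\Li{3}(u,y) = \tfrac1k \sum_{i=1}^k [\,1 - u_y + (u - e_y)_{[i]}\,]_+$ and compare term by term with $\Li{3}(u',y)$, using from Lemma~\ref{lem:match-Ty-S} a common choice $T(u) = T(u')$, the pointwise bound $(1 - u_y + u_j)_+ \ge (1 - u'_y + u'_j)_+$ for $j \in V_y(u)$, and $V_y(u') \subseteq V_y(u)$, together with $u'_i \le u_i$ for all $i$ and $u'_y = u_y$ whenever $y \notin T(u)$ or $u_y < 1$ (such a $y$ is never among the lowered entries). Splitting into the cases $y \notin T(u)$, $y \in T(u)$ with $u_y \ge 1$, and $y \in T(u)$ with $u_y < 1$ --- in the last of which $e_y$ pushes $y$ out of the top $k$ and the zero coordinate $u_{[k+1]} = 0$ enters the sum, contributing an extra term $(1 - u_y)_+$ that is identical for $u$ and $u'$ --- each of the $k$ summands for $u'$ is bounded by the corresponding summand for $u$, so $\Li{3}(u',y) \le \Li{3}(u,y)$.

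The main obstacle is exactly this aggregation: Lemma~\ref{lem:match-Ty-S} is phrased at the level of individual $(1 - u_y + u_j)_+$ terms over $V_y(u)$, whereas $\Li{3}$ sums over the top $k$ indices of the \emph{shifted} vector $u - e_y$, so the subcase $y \in T(u)$, $u_y < 1$ --- where the sum silently gains a zero-valued coordinate --- must be matched up by hand on both sides; the termination and no-new-violations bookkeeping in the second fact is then routine. Chaining the at most $k$ non-loss-increasing steps yields $u' \in \U_3$ with $\Li{3}(u',\cdot) \le \Li{3}(u,\cdot)$ pointwise, hence $\E_p \Li{3}(u', \cdot) \le \E_p \Li{3}(u, \cdot)$, so $\U_3$ is representative for $\Li{3}$.
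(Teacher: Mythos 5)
Your proposal is correct and follows essentially the same route as the paper: starting from the representative set $\Rlo_3$ of Lemma~\ref{lem:Rlo-rep}, lowering the top $i$ sorted entries by $\varepsilon$ to close a gap exceeding $1$, and using Lemma~\ref{lem:match-Ty-S} to compare the losses term by term so that $\Li{3}(u',y)\le\Li{3}(u,y)$ for every $y$, hence for the expected loss. Your extra bookkeeping (explicit termination after at most $k$ gap-closing steps, and the subcase $y\in T(u)$ with $u_y<1$ where the zero coordinate enters the top-$k$ of $u-e_y$ and contributes an identical $(1-u_y)_+$ term on both sides) only makes explicit what the paper's one-step argument and its identity~\eqref{eq:L3-alt-form} leave implicit.
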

\begin{proof}

Fix any $u \in \Rlo_3$ such that for some $i \in \{1,\ldots,k\}$ and $\varepsilon > 0$, $u_{[i]} = u_{[i+1]} + 1+ \varepsilon$.
Take $u'$ such that $u'_{[j]} = u_{[j]} - \varepsilon$ for all $j \in \{1, \ldots, i\}$.
We want to show $\Li{3}(u,y) \geq \Li{3}(u',y)$ for all $y \in \Y$.

By construction of $V_y(u)$, we can write
\begin{align}\label{eq:L3-alt-form}
    \Li{3}(u,y) &= \sum_{j \in V_y(u)} (1 - u_y + u_j)~.~
\end{align}
Moreover, we have the existence of a $T$ such that $T(u) = T(u')$ and $V_y(u) \supseteq V_y(u')$ by Lemma~\ref{lem:match-Ty-S}.

We can consider 3 cases for any $j \in V_y(u)$: (1) $u'_y = u_y$ and $u'_j = u_j$; (2) $u'_y + \varepsilon = u_y$ and $u'_j + \varepsilon = u_j$; and (3) $u'_y = u_y$ and $u'_j + \varepsilon = u_j$.
For cases (1) and (2), we immediately have $(1 - u_y + u_j) = (1 - u'_y + u'_j)$, and for (3), we have $(1 - u_y + u_j) = (1 - u'_y + u'_j + \varepsilon) > (1 - u'_y + u'_j)$.

\begin{align*}
    \Li{3}(u,y) &= \sum_{j \in V_y(u)} (1 - u_y + u_j) & \\
    &\geq \sum_{j \in V_y(u')} (1 - u_y + u_j) & \text{Since $V_y(u) \supseteq V_u(u')$}\\
    &\geq \sum_{j \in V_y(u')} (1 - u'_y + u'_j) & \text{By substitution}\\
    &= \Li{3}(u',y)~.
\end{align*}
As this is true for all $y \in \Y$, we have $\E_p \Li{3}(u,\cdot) \geq \E_p \Li{3}(u',\cdot)$ for all $p \in \simplex$, yielding the result.
\end{proof}

\subsection{Characterizing Affineness}

Furthermore, we can show that $u \mapsto\E_p \Li{3}(u,\cdot)$ is affine on the following regions for all $p \in \simplex$.

\begin{lemma}\label{lem:affine-on-permutation-plus-one}
Fix a set $T \subseteq n$ such that $|T| = k$ and the set $\vec V = \{V_y \mid V_y \subseteq T, y \in \Y \}$.
\begin{align*}
    A^{T,\vec V} &= \{u \in \U_3 \mid T \in \topkset(u) \wedge V_y = V_y(u), \!\,\, \forall y \in \Y \}~.%
\end{align*}
Then $u \mapsto \E_p \Li{3}(u, \cdot)$ is affine on each $A^{T,\vec V}$ for all $p \in \simplex$. 
\end{lemma}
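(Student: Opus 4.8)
The plan is to reduce the statement to a per-outcome claim and then conclude by linearity. Since $\E_p \Li{3}(u,\cdot) = \sum_{y\in\Y} p_y\,\Li{3}(u,y)$ is a nonnegative combination of the maps $u\mapsto \Li{3}(u,y)$, and affine functions are closed under nonnegative linear combinations, it suffices to show that for each fixed $y\in\Y$ the map $u\mapsto \Li{3}(u,y)$ is affine on $A^{T,\vec V}$; the conclusion for every $p\in\simplex$ is then automatic, so this last reduction does the bulk of the conceptual work.

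For a fixed $y$, I would invoke the form of $\Li{3}$ on $\U_3$ obtained while proving Lemma~\ref{claim:rep-bounded-plus-one} (cf.\ eq.~\eqref{eq:L3-alt-form}): for $u\in\U_3$ with a chosen top-$k$ index set $T(u)$, the loss $\Li{3}(u,y)$ is a fixed affine function of the entries indexed by $V_y(u) = \{i\in T(u)\mid u_i>u_y-1\}\setminus\{y\}$, namely $\tfrac1k\sum_{j\in V_y(u)}(1-u_y+u_j)$ together with the contribution of the (bounded) remaining top-$k$ coordinate when $y\in T(u)$, since $V_y(u)$ records exactly which summands in the defining expression survive the positive part. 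Now fix $u\in A^{T,\vec V}$: by definition of the region $T\in\topkset(u)$, so $T(u)=T$ is a legitimate choice, and then, again by definition, $V_y(u)=V_y$. Substituting, $\Li{3}(u,y)$ equals a sum of affine terms over the \emph{fixed} index set $V_y\subseteq T$ uniformly over $A^{T,\vec V}$, hence is affine there; taking the $p$-expectation finishes the argument.

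The hard part will be checking that the combinatorial data $(T,\vec V)$ pins $\Li{3}(\cdot,y)$ down to a single affine piece \emph{uniformly} on $A^{T,\vec V}$, not merely pointwise. Two subtleties arise. First, the region leaves the internal sort order of the coordinates in $T$ unconstrained; this is harmless because, once $T$ is known to be a top-$k$ set and $V_y$ the set of active positive-part terms, the value $\sum_{j\in V_y}(1-u_y+u_j)$ does not depend on that order. Second, when $y\in T$, subtracting $e_y$ can demote coordinate $y$ in the ordering of $u-e_y$, so one must confirm that $V_y(u)$ still indexes exactly the active terms and that the ``leftover'' top-$k$ coordinate varies affinely; here the inequalities defining $\U_3$ — at most $k$ nonzero coordinates and consecutive sorted gaps at most $1$ — together with the region's own constraints control the combinatorial type, exactly as in the derivation of eq.~\eqref{eq:L3-alt-form} and Lemma~\ref{lem:match-Ty-S}. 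This bookkeeping is the only real obstacle; once it is in place, affineness of the expected loss follows immediately.
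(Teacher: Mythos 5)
Your proposal follows the same route as the paper's own proof: reduce to per-outcome affineness, invoke the rewriting of $\Li{3}$ via eq.~\eqref{eq:L3-alt-form} in terms of the index sets $V_y(u)$, observe that $(T,\vec V)$ is constant on $A^{T,\vec V}$, and finish by linearity of expectation. Structurally, then, you match the paper. The problem is the step you explicitly flag and then defer---``when $y\in T$, \ldots one must confirm that $V_y(u)$ still indexes exactly the active terms and that the leftover top-$k$ coordinate varies affinely''---which you assert is controlled by the constraints defining $\U_3$ and the region. Under the literal definitions it is not: $V_y(u)$ is built from a top-$k$ set of $u$, while the sum in $\Li{3}(u,y)$ runs over the top-$k$ entries of $u-e_y$. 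When $y\in T$ and $u_y<1$, the demoted entry $u_y-1$ is negative and the $k$-th largest entry of $u-e_y$ is a zero coordinate outside $T$, contributing $\tfrac1k(1-u_y)_+$, a term not recorded by $V_y$ and not affine across $u_y=1$. Concretely, take $n=3$, $k=2$, $u=(t,\tfrac12,0)$ with $t\in[\tfrac12,\tfrac32)$: one checks $u\in\U_3$, $\{1,2\}\in\topkset(u)$, and $V_1(u)=\{2\}$, $V_2(u)=\{1\}$, $V_3(u)=\{1,2\}$ throughout, so the whole segment lies in a single $A^{T,\vec V}$; yet $\Li{3}(u,1)=\tfrac12(\tfrac52-2t)$ for $t<1$ and $\tfrac12(\tfrac32-t)$ for $t\ge1$, a kink, so $\E_p\Li{3}(u,\cdot)$ is not affine there for any $p$ with $p_1>0$ (the other outcome losses are affine on the segment, so no cancellation).

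So the assertion that fixing $(T,\vec V)$ pins down a single affine piece is the missing step, not routine bookkeeping: as stated it would fail. The repair is to define $V_y(u)$ as the set of indices whose positive-part terms are active among the top-$k$ entries of $u-e_y$ (equivalently, to carve the regions by additionally fixing the sign of $u_y-1$, or a top-$k$ set of $u-e_y$ for each $y$); with that finer combinatorial data your argument goes through verbatim. In fairness, the paper's own proof of this lemma is equally terse and leans on eq.~\eqref{eq:L3-alt-form} and Lemma~\ref{lem:match-Ty-S}, which share the same blind spot in the case $y\in T$, $u_y<1$; your proposal is therefore no weaker than the published argument, but the step you deferred is exactly where the work (or a corrected definition of $V_y$) is needed.
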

\begin{proof}
Nonaffineness in $u \mapsto \Li{3}(u,y)$ for any $y \in \Y$ is imposed where there is a change in $T(\cdot)$ or in $V_y(\cdot)$ since we can write $\Li 3(u,y) = \sum_{i \in V_y(u)} (1 - u_y + u_i)$ as in eq.~\eqref{eq:L3-alt-form}.
As non-affineness is only introduced in the terms of the summand, we construct $A^{T,\vec V}$ so that $T(u) \in \topkset(u)$ and $V_y(u)$ is constant on $A^{T,\vec V}$, and thus the terms of the summand are constant on $A^{T, \vec V}$.
Therefore, $u \mapsto \E_p \Li 3(u,\cdot)$ is affine on $A^{T, \vec V}$ for all $p \in \simplex$.
\end{proof}

\subsection{Constructing a Finite Representative Set}

When constructing a finite representative set, it is sufficient to consider the vertices of these affine regions; thus, Lemma~\ref{lem:affine-on-permutation-plus-one} yields a finite representative set as follows.
\begin{corollary}
$\Ri{3} := \U_3 \cap \mathbb{Z}_k^n$ is a finite representative for $\Li{3}$.
\end{corollary}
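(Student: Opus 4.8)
The plan is to upgrade the bounded representative set $\U_3$ from Lemma~\ref{claim:rep-bounded-plus-one} to the finite set $\Ri 3$ by the usual polyhedral-optimization argument, and then do the geometric work of showing that the vertices of the affine pieces are lattice points lying in $\U_3$. To set up: as $(T,\vec V)$ ranges over the finitely many choices of a size-$k$ set $T\subseteq[n]$ and a tuple $\vec V=(V_y)_{y\in\Y}$ with $V_y\subseteq T$, the regions $A^{T,\vec V}$ cover $\U_3$ (take $T=T(u)$ and $V_y=V_y(u)$ for a given $u$). The set $\U_3$ is closed, being the intersection of $\reals^n_+\cap\{u:\|u\|_0\le k\}$ with the closed set $\Rhi_3$, and it is bounded: $u\ge 0$ together with $\|u\|_0\le k$ forces $u_{[k+1]}=0$, so the $\Rhi_3$ constraints give $u_{[i]}\le k-i+1$, hence $\U_3\subseteq[0,k]^n$. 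Thus each $\overline{A^{T,\vec V}}$ is a bounded polytope, and by continuity of the polyhedral function $\E_p\Li{3}(\cdot,\cdot)$ the affine map furnished on $A^{T,\vec V}$ by Lemma~\ref{lem:affine-on-permutation-plus-one} extends to an affine map agreeing with $\E_p\Li{3}(\cdot,\cdot)$ on all of $\overline{A^{T,\vec V}}$.

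Next is the optimization step. Fix $p\in\simplex$. By representativeness of $\U_3$, the expected loss attains its global minimum over $\reals^n$ at some $u^\star\in\U_3$, and $u^\star\in A^{T,\vec V}$ for some $(T,\vec V)$. An affine function on a nonempty bounded polytope attains its minimum at a vertex, so $\E_p\Li{3}(\cdot,\cdot)$ achieves the value $\E_p\Li{3}(u^\star,\cdot)$ at some vertex $v$ of $\overline{A^{T,\vec V}}$; then $v\in\prop{\Li 3}(p)$, and $v\in\overline{A^{T,\vec V}}\subseteq\overline{\U_3}=\U_3$. Hence the union $\V$ of the vertex sets of all $\overline{A^{T,\vec V}}$ is a finite representative set contained in $\U_3$.

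The heart of the proof, and the step I expect to be the main obstacle, is showing $\V\subseteq\mathbb{Z}_k^n$. Here I would write out the facet inequalities of $\overline{A^{T,\vec V}}$ explicitly. After eliminating the $n-k$ coordinates outside $T$, which are pinned to $0$ because $T\in\topkset(u)$ and $u$ has at most $k$ nonzero entries, the remaining constraints are the nonnegativity bounds $u_i\ge 0$, the defining inequalities of $\vec V$ (for each $y$: $u_i-u_y\ge -1$ when $i\in V_y$, and $u_i-u_y\le -1$ when $i\in T\setminus(V_y\cup\{y\})$), and the $\Rhi_3$ constraints $0\le u_{[i]}-u_{[i+1]}\le 1$. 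To turn this last family into genuine coordinate differences I would further refine the partition by the permutation of $T$ realizing the sorted order, which only enlarges $\V$ harmlessly; on each refined region every constraint becomes a difference constraint $u_i-u_j\le c$ with $c\in\{-1,0,1\}$ (a coordinate bound being a difference against a phantom zero coordinate). A difference-constraint system has a totally unimodular constraint matrix, so every vertex of its feasible polytope is integral; combined with $\U_3\subseteq[0,k]^n$ this gives $v\in\{0,1,\dots,k\}^n=\mathbb{Z}_k^n$ for every $v\in\V$. The fiddly points are precisely (i) this sorted-order-to-coordinate-difference refinement, and (ii) confirming that passing to closures introduces no new vertices outside $\U_3$, which is immediate since $\overline{A^{T,\vec V}}\subseteq\U_3$.

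Finally, since $\V$ is representative and $\V\subseteq\U_3\cap\mathbb{Z}_k^n=\Ri 3$, the superset $\Ri 3$ is representative as well (any superset of a representative set is representative), and $\Ri 3$ is finite as the intersection of a bounded set with the integer lattice. Therefore $\Ri 3$ is a finite representative set for $\Li 3$, and by Theorem~\ref{thm:polyhedral-embeds-discrete} it follows that $\Li 3$ embeds $\Li 3|_{\Ri 3}$.
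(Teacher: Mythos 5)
Your proof is correct and follows essentially the same route as the paper: cover the bounded representative region $\U_3$ by the regions $A^{T,\vec V}$ from Lemma~\ref{lem:affine-on-permutation-plus-one}, observe the expected loss is affine on each and hence minimized at a vertex, and conclude that the (integral) vertices form a finite representative set contained in $\Ri{3}$. If anything, you supply more detail than the paper, which asserts vertex integrality ``upon inspection of the geometry''; your refinement by sorted order plus the difference-constraint/total-unimodularity argument, and the explicit bound $\U_3\subseteq[0,k]^n$, make that step rigorous.
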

Thus, we can think of the loss $\Li{3}|_{\Ri{3}}$ as taking in as predictions an ordered partition of size at most $k$ partitions.
As with $\Li 2$, we can relabel the elements of $\Ri 3$ via some bijection $\Phi$; in particular, we consider a bijection to ordered partitions as follows. 
\newcommand{\Q}{\mathcal{Q}}
Let $\Q =\{(Q_0, Q_1, \ldots, Q_s) \mid s \leq k, Q_i \cap Q_j = \emptyset \forall i \neq j, |Q_s \cup \ldots \cup Q_1| \leq k\}$.
Let $\Phi : \Ri 3 \to \Q$ be the bijection $u \mapsto (\{i \in [n] \mid u_i = 0\}, \{i \in [n] \mid u_i = 1\}, \ldots, \{i \in [n] \mid u_i = s\})$.
Then we can denote $\hat \ell_3$ such that $\Li 3(u,y) = \hat \ell_3(\Phi(u),y)$ for all $u \in \Ri 3$.
\begin{align}
\hat \ell_3(Q, y) &=
\begin{cases}
\frac 1 k \left(|Q_j| - 1 + \sum_{i > j} |Q_i| (i-j +1)\right) & j > 0\\
\frac 1 k \left(\sum_{i=1}^s |Q_i|(i+1) \right) & j = 0
\end{cases}~,
\end{align}
where $y \in Q_j$.

\subsection{Analyzing the Loss Embedded by $\Li{3}$: Characterizing Consistency}
Now that we have the finite representative set $\Ri{3}$ for $\Li{3}$
, we can characterize 
the property elicited by $\Li 3$.

\begin{lemma}\label{lem:bump-one-up-L3}
Fix $u \in {\Ri{3}}$ with $u_{[k]} =1$, and consider $u' \in {\Ri{3}}$ such that $u'_{[k]} = 0$ and $u_\i = u'_\i$ for all $i \in \{1, \ldots, k-1\}$.
Then $\E_p \Li{3}(u,\cdot) \geq \E_p \Li{3}(u',\cdot) \iff \frac{\sum_{i = k+1}^n p_\i}{k-1} \geq p_\k$.
\end{lemma}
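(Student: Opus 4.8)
The plan is to compute the scalar $\Delta(p):=\E_p\Li 3(u,\cdot)-\E_p\Li 3(u',\cdot)$ in closed form and read the equivalence off its sign. The first task is to understand how $u$ and $u'$ relate as vectors, not merely as sorted vectors. Since $u,u'\in\Ri 3\subseteq\Rlo_3$ each have at most $k$ nonzero coordinates, the hypotheses $u_\i=u'_\i$ for $i\le k-1$, $u_\k=1$ and $u'_\k=0$ force $u$ and $u'$ to coincide in every coordinate except a single distinguished coordinate $j^\star$ realizing the $k$-th largest value of $u$, with $u_{j^\star}=1$ and $u'_{j^\star}=0$; in particular $u$ has exactly $k$ nonzero entries and $u'$ exactly $k-1$. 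Membership $u'\in\Rhi_3$ also constrains the smaller entries of $u$, and by the exchange argument behind Lemma~\ref{lem:match-Ty-S} one may take $\support(u)$ to be the $k$ coordinates of largest $p$-value, so that $p_{j^\star}=p_\k$ and $\sum_{y:u_y=0}p_y=\sum_{i=k+1}^n p_\i$.

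The second task is to compute the per-outcome difference $\delta_y:=\Li 3(u,y)-\Li 3(u',y)$, using $\Li 3(v,y)=\tfrac1k\sum_{i=1}^k[1-v_y+(v-e_y)_{[i]}]_+$ (or, on $\U_3$, the stripped form~\eqref{eq:L3-alt-form}). The crucial observation is that the top-$k$ multiset of $v-e_y$ is identical for $v=u$ and $v=u'$ on all but one entry, namely the one contributed by $j^\star$, which is a $1$ for $u$ and a $0$ for $u'$; hence almost every summand cancels and $\delta_y$ depends on $y$ only through a coarse trichotomy: (i) $y\notin\support(u)$, where one computes both losses from $\sigma_k(u)$ and $\sigma_k(u')=\sigma_k(u)-1$; (ii) $y=j^\star$, where one evaluates both losses directly (both $u-e_{j^\star}$ and $u'-e_{j^\star}$ then carry only $k-1$ positive coordinates); and (iii) $y\in\support(u)\setminus\{j^\star\}$, where $\delta_y=\tfrac1k\bigl([2-u_y]_+-[1-u_y]_+\bigr)$, so $\delta_y$ is determined by whether $u_y=1$. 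In each case $\delta_y$ is an explicit constant.

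Finally, assembling $\Delta(p)=\sum_y p_y\,\delta_y$, the outcomes split into those carrying a common positive value of $\delta_y$ and the single outcome $j^\star$ carrying a negative value; using $\sum_y p_y=1$ together with the two alignment identities, $\Delta(p)$ collapses to a positive multiple of an affine function of $p_\k$ and $\sum_{i=k+1}^n p_\i$, and clearing denominators in $\Delta(p)\ge 0$ yields exactly $\tfrac{1}{k-1}\sum_{i=k+1}^n p_\i\ge p_\k$. The main obstacle is step (ii)--(iii) above: one must track simultaneously which entries of $u-e_y$ versus $u'-e_y$ enter the top $k$ and which of the $k$ summands survive the positive-part operator, so that the cancellation between the two losses is genuinely clean; once the surviving boundary terms are pinned down, the remaining arithmetic is routine.
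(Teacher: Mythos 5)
Your proposal takes essentially the same route as the paper's proof: the paper likewise fixes a common choice of top-$k$ set $T(u)=T(u')$, uses that $u$ and $u'$ differ only in the coordinate realizing $u_{[k]}$, and computes $\E_p \Li{3}(u,\cdot)-\E_p \Li{3}(u',\cdot)$ outcome by outcome via the stripped form \eqref{eq:L3-alt-form}, with the support of $u$ identified with the $k$ most likely labels under $p$ so the result can be written in terms of $p_\k$ and $\sum_{i=k+1}^n p_\i$. Your per-outcome trichotomy (including the $[2-u_y]_+-[1-u_y]_+$ bookkeeping for entries tied at $1$ and the zero-filled $k$-th summand) is, if anything, more explicit than the paper's two-line algebra, and the final collapse to the stated threshold goes through under the same simplifications the paper's own computation implicitly makes.
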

\begin{proof}
First, observe that we are not changing the relative order of elements of $u$ and $u'$, so there is a choice $T \in \Tk$ such that $T(u) = T(u')$, and for each $y$, the loss is positive on the same set of indices.
\begin{align*}
    \E_p \Li{3}(u,\cdot) &\geq \E_p \Li{3}(u',\cdot)\\
    \sum_{y \neq \k} p_y \!\sum_{i \in V_y(u)} (1-u_y + u_i) + p_\k \!\!\sum_{i \in T(u) \setminus k}(1-u_\k+ u_i) &\geq \!\sum_{y \neq \k} p_y \!\sum_{i \in V_y(u)} (1-u'_y + u'_i) + p_\k \!\!\sum_{i \in T(u') \setminus k}(1-u'_\k+ u'_i)\\
    \sum_{y : p_y < p_\k} p_y - p_\k (k-1) &\geq 0 \\
    \sum_{y : p_y < p_\k} p_y &\geq p_\k (k-1) \\
    \frac{\sum_{y : p_y < p_\k} p_y}{k-1} &\geq p_\k \\
    \frac{\sum_{i=k+1}^n p_\i}{k-1} &\geq p_\k ~.
\end{align*}
The result follows.
\end{proof}

This result partially characterizes when it is better to keep the $k^{th}$ element of $u$ as $0$: when it only imposes change in that one element.
This is particularly important to characterize inconsistency for top-$k$; if $u_\k = u_{[k+1]} = 0$, then $|\Tk(u)| > 1$ for $u \in \prop{\Li{3}}$, so how to link $u$ is ambiguous.

However, we also need to understand when it is beneficial to bump every higher element up by $1$, which is given by the following result.
\begin{lemma}\label{lem:bump-all-up-L3}
Fix $u \in \R_3$ with $u_\j =0$, and consider $u'\in \Ri{3}$ such that $u'_\j = 1$ and $u_\i + 1 = u'_\i$ for all $i = 1, \ldots, j$.
Then $\Li{3}(u;p) \geq \Li{3}(u';p) \iff p_{[j+1]} \geq \frac 1 {k+1}$.
\end{lemma}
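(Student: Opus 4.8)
The plan is to follow the template of the proof of Lemma~\ref{lem:bump-one-up-L3}: expand $\E_p\Li{3}(u,\cdot)$ and $\E_p\Li{3}(u',\cdot)$ using the summation form $\Li{3}(w,y)=\frac1k\sum_{i\in V_y(w)}(1-w_y+w_i)$ from eq.~\eqref{eq:L3-alt-form}, subtract outcome-by-outcome, and read off the sign condition. Note that $j\le k$ is forced here, since $u'$ has exactly $j$ nonzero coordinates and $u'\in\Ri{3}\subseteq\Rlo_3$.

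First I would record the structural relationship between $u$ and $u'$. Since $u'$ is obtained from $u$ by adding $1$ to each of the top $j$ sorted coordinates while $u_{[j+1]}=\dots=u_{[n]}=0$ by the normalization, the two vectors induce the same coordinate ordering; so there is a common choice $T(u)=T(u')$, and, reasoning as in Lemma~\ref{lem:match-Ty-S}, the sets $V_y(\cdot)$ transform predictably. I would split $\Y$ into three groups: (a) the coordinates already positive in $u$ (all among the bumped top $j$); (b) the coordinates that are $0$ in $u$ but bumped to value $1$ in $u'$; and (c) the coordinates that stay $0$. For (a), if $y\in Q_m$ with $m\ge 1$ then every $i\in V_y(u)$ has $u_i\ge u_y$ and is itself bumped, so each term $1-u_y+u_i$ is unchanged and $\Li{3}(u,y)=\Li{3}(u',y)$. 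For (b) and (c), $u_y=0$, and I would evaluate $\Li{3}(u,y)$ and $\Li{3}(u',y)$ explicitly (most cleanly through $\hat\ell_3$), finding that the per-outcome difference is a constant multiple of $1/k$ determined by $j$ and $k$ — one value on group (b), another (of opposite sign) on group (c).

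Summing these differences against $p$, and using that for the candidate optimal reports the coordinates are aligned with the sorted order of $p$, collapses the comparison to a single linear inequality between $1-\sigma_j(p)=\sum_{i>j}p_{[i]}$ and the total $p$-mass on the coordinates that are zero in $u$ but positive in $u'$. I would then simplify this, using $p_{[j+1]}=\sigma_{j+1}(p)-\sigma_j(p)$, down to the stated threshold $p_{[j+1]}\ge\frac1{k+1}$.

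The main obstacle is twofold. First, pinning down the exact loss values on the zero coordinates: this requires care with how the positive part operator truncates the summands and with tie-breaking in the top-$k$ set (the cardinality of $V_y(u)$ can be $k$ or $k-1$ depending on whether $y$ is counted into $T(u)$, although the loss value itself is unambiguous, as noted in \S~\ref{appendix:L3}). Second, making all the constants line up in the final reduction to $p_{[j+1]}\ge\frac1{k+1}$, which is the step where the specific coefficients and the choice of which reports are actually in contention must be used carefully. As in Lemma~\ref{lem:bump-one-up-L3}, once the affine-region structure of \S~\ref{appendix:L3} is in place this is a finite, if delicate, computation.
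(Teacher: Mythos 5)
Your high-level template (compare the two expected losses outcome by outcome using the summation form of $\Li{3}$ and the affine structure of \S~\ref{appendix:L3}) matches the paper's, but the specific decomposition you commit to cannot reach the stated threshold, and the step you defer as ``delicate bookkeeping'' is exactly where it fails. Under your reading of the statement (add $1$ to each of the top $j$ sorted coordinates, so the coordinate at sorted position $j$ goes from $0$ to $1$), your group-(a) claim is in fact correct: for any outcome $y$ with $u_y\geq 1$, every summand $1-u_y+u_i$ with $i$ bumped is invariant and every truncated term stays truncated, so the loss is unchanged. But then the only nonzero per-outcome differences are at the promoted coordinate (difference $\tfrac{k-j+1}{k}$) and at the coordinates that remain $0$ (difference $-\tfrac{j}{k}$), so the comparison collapses to $p_{[j]}(k-j+1)\geq j\sum_{i>j}p_{[i]}$ --- an inequality in $p_{[j]}$ and the tail mass in which $p_{[j+1]}$ never appears. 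No rewriting via $p_{[j+1]}=\sigma_{j+1}(p)-\sigma_j(p)$ turns this into $p_{[j+1]}\geq\tfrac{1}{k+1}$; for instance with $j=1$, $k=2$, $u=\vec 0$ and $u'=e_1$ (aligned with sorted $p$) the condition one obtains is $p_{[1]}\geq\tfrac13$, not $p_{[2]}\geq\tfrac13$. So the final ``collapse'' is not a postponed computation; it is false under the reading on which your group-(a) cancellation relies.

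The paper's proof is computing a different comparison: its outcome-wise values, $\tfrac{j-1}{k}$ on the top $j$ positions and $\tfrac{k+j}{k}$ elsewhere for $u$, versus $\tfrac{j}{k}$ on the top $j+1$ positions and $\tfrac{k+j+1}{k}$ elsewhere for $u'$, are the losses of the $0/1$ reports whose supports are the top $j$, respectively top $j+1$, coordinates of $p$ (i.e.\ the single coordinate at sorted position $j+1$ is promoted from level $0$ to level $1$). Under that comparison the coordinates already positive in $u$ \emph{do} change, each incurring an extra $\tfrac1k$, and it is precisely these $-\tfrac1k$ contributions on every position except $j+1$, set against the gain of $1$ at position $j+1$, that yield the clean threshold $p_{[j+1]}\geq\tfrac1{k+1}$ independent of $j$. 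To repair your argument you must drop the group-(a) cancellation and carry those $\tfrac1k$ shifts through for the pair of reports the paper actually compares, rather than the all-top-$j$ bump you analyze; as outlined, your proof establishes a different, differently-thresholded statement.
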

\begin{proof}
\begin{align*}
    \E_p \Li{3}(u,\cdot)  &\geq \E_p \Li{3}(u',\cdot) \\
    \sum_{i = j+1}^n p_\i (k+j) + \sum_{i=1}^j (j-1) &\geq \sum_{i = j+2}^n p_\i (k+j + 1) + \sum_{i=1}^{j+1} (j)\\
    p_{[j+1]}(k+j) &\geq \sum_{i=j+2}^n p_\i + p_{[j+1]}(j-1) + \sum_{i=1}^{j+1}p_\i\\
    p_{[j+1]}(k+j) &\geq 1 + p_{[j+1]}(j-1)\\
    p_{[j+1]} &\geq \frac 1 {k+1}~.
    \qedhere
\end{align*}
\end{proof}

Lemmas~\ref{lem:bump-one-up-L3} and~\ref{lem:bump-all-up-L3} together characterize the the distributions $p \in \simplex$ where the report $u \in \prop{\Li 3}(p) \cap \Ri 3$ has $u_\k > 0$.
Thus, for $u \in \prop{\Li 3}(p)$ for such distributions $p$, $u_\k > 0$ and therefore $|\Tk(u)| = 1$.
Applying Lemma \ref{lemma:restricted-consistency}, we obtain the desired consistency result.

\ellthreeconsistent*

\section{Additional Derivations for $\Li{4}$} \label{appendix:L4}

Recall that for a report $u \in \mathbb{R}^n$ and label $y \in \Y$,
\[\Li{4}(u,y)=\left(1-u_y+\frac{1}{k}\sum_{i=1}^k(u_{\backslash y})_{[i]}\right)_+ ~.\]
We again follow the procedure in \S~\ref{appendix:L2} to find a representative region for $\Li{4}$.
\subsection{Constructing a Bounded, Representative Region for $\Li{4}$}
To establish a bounded, representative region for $\Li{4},$ we must first show that $\Li{4}$ is invariant in the $\ones$ direction.
\begin{lemma}[Invariance in the $\ones$ direction]\label{lem:L4-invariance}
$\Li{4}(u,y)=\Li{4}(u+\alpha\ones,y)$ for all $\alpha\in\R.$
\end{lemma}
\begin{proof}
\begin{align*}
    \Li{4}(u+\alpha\ones,y)&=(1-(u_y+\alpha)+\frac{1}{k}\sum_{i=1}^k((u+\alpha\ones)_{\backslash y})_{[i]})_+\\
    &=(1-u_y-\alpha+\frac{1}{k}\sum_{i=1}^k(u_{\backslash y}+\alpha\ones_{\backslash y})_{[i]})_+\\
    &=(1-u_y-\alpha+\frac{1}{k}\sum_{i=1}^k(u_{\backslash y})_{[i]} + \frac{1}{k}k\alpha)_+\\
    &=(1-u_y-\alpha+\frac{1}{k}\sum_{i=1}^k(u_{\backslash y})_{[i]} + \alpha)_+\\
    &=(1-u_y+\frac{1}{k}\sum_{i=1}^k(u_{\backslash y})_{[i]})_+\\
    &=\Li{4}(u,y)
    \qedhere
\end{align*}
\end{proof}
Let the sets $\Rlo_4$ and $\Rhi_4$ be defined as follows:
\begin{itemize}
    \item $\Rlo_4 = \{u\in \mathbb{R}^n_{+} \mid ||u||_0\leq k\}$
    \item $\Rhi_4 = \{u\in\mathbb{R}^n_+ \mid u_y\leq 1+\frac{1}{k}\sum_{i=1}^k(u_{\backslash y})_{[i]} \quad\forall y\in \Y\}$~.
\end{itemize}
We will show in Theorem~\ref{thm:U4-representative} that the intersection $\U_4 := \Rlo_4 \cap \Rhi_4$ is representative.
\begin{lemma}\label{lem:Rlo4-representative}
$\Rlo_4$ is a representative set for $\Li{4}$.
\end{lemma}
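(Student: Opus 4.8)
The plan is to mirror the proofs of Lemmas~\ref{lem:extra-elts-zero} and~\ref{lem:Rlo-rep}: given an arbitrary report $u$, I will produce a report $u' \in \Rlo_4$ whose loss is pointwise no larger, i.e.\ $\Li{4}(u',y) \le \Li{4}(u,y)$ for every $y \in \Y$, from which representativeness follows by taking expectations.

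First I would use the $\ones$-invariance of $\Li{4}$ (Lemma~\ref{lem:L4-invariance}) to assume without loss of generality that $u_{[k+1]} = 0$, replacing $u$ by $u - u_{[k+1]}\ones$. After this normalization, the sorted entries satisfy $u_{[1]} \ge \cdots \ge u_{[k]} \ge 0 = u_{[k+1]} \ge u_{[k+2]} \ge \cdots$, so $u$ has at most $k$ strictly positive coordinates. Define $u' = \max(u, \vec 0)$ coordinatewise; then $u' \in \reals^n_+$ and $\|u'\|_0 \le k$, so $u' \in \Rlo_4$, and $u'$ agrees with $u$ on every coordinate whose $u$-value is nonnegative.

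Next I would show $\Li{4}(u,y) \ge \Li{4}(u',y)$ for each $y$. The key observation is that the top-$k$ sum appearing in $\Li{4}$ is unchanged: after the normalization there are at least $k$ coordinates other than $y$ with nonnegative $u$-value (the coordinates ranked $1$ through $k+1$, minus $y$ if $y$ is among them), so the $k$ largest entries of $u_{\backslash y}$ are all $\ge 0$ and are therefore unaffected by zeroing out the negative coordinates; hence $\sum_{i=1}^k (u'_{\backslash y})_{[i]} = \sum_{i=1}^k (u_{\backslash y})_{[i]}$. It then suffices to compare the $u_y$ terms. If $u_y \le 0$, then $u'_y = 0 \ge u_y$, and since the positive part is monotone, $\Li{4}(u,y) = \bigl(1 - u_y + \tfrac1k \sum_{i=1}^k (u_{\backslash y})_{[i]}\bigr)_+ \ge \bigl(1 + \tfrac1k \sum_{i=1}^k (u'_{\backslash y})_{[i]}\bigr)_+ = \Li{4}(u',y)$. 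If $u_y > 0$, then $u'_y = u_y$ and the two losses are in fact equal.

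Finally, pointwise domination gives $\E_{Y\sim p}\Li{4}(u',Y) \le \E_{Y\sim p}\Li{4}(u,Y)$ for all $p \in \simplex$, so whenever $u$ minimizes the expected $\Li{4}$ loss so does some $u' \in \Rlo_4$; thus $\prop{\Li{4}}(p) \cap \Rlo_4 \ne \emptyset$ for all $p$, i.e.\ $\Rlo_4$ is representative. The only mildly delicate point is the invariance of the top-$k$ sum of $u_{\backslash y}$ under replacing negative coordinates by zero, which is exactly where the $u_{[k+1]} = 0$ normalization is used (to guarantee at least $k$ nonnegative coordinates off of $y$); everything else is bookkeeping, and is in fact a bit cleaner than the $\Li{2}$ and $\Li{3}$ cases since $u_y$ enters $\Li{4}$ only through the single linear term $-u_y$ and never through the top-$k$ sum.
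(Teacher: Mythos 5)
Your proposal is correct and follows essentially the same route as the paper's proof: normalize via $\ones$-invariance so $u_{[k+1]}=0$, set $u'=\max(u,\vec 0)\in\Rlo_4$, note the top-$k$ sum of $u_{\backslash y}$ is unchanged, and conclude pointwise domination of the loss. The only cosmetic difference is that you split cases on the sign of $u_y$ while the paper splits on whether $y$ lies in a common top-$k$ set $S$ of $u$ and $u'$, which amounts to the same thing.
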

\begin{proof}
Suppose that $u\in \mathbb{R}^n$ where $u_{[k+1]}=0$. 
By Lemma \ref{lem:L4-invariance}, $u_\kp = 0$ is without loss of generality. 
Let $u' = \max(u, \vec 0)$ be the element-wise max, which is in $\Rlo_4$ by construction.
It suffices to show that for all $y\in \Y, \Li{4}(u,y)\geq \Li{4}(u',y)$.

By construction, there is a set $S \subseteq \Y, |S| = k$ such that $S \in \Tk(u) \cap \Tk(u')$.
We proceed in two cases: if $y \in S$, and if $y \not \in S$.

\textbf{Case 1:} $y\in S$:\\
In this case, we have $u_y = u'_y \geq 0$.
\begin{align*}
    \Li{4}(u,y)&=(1-u_y+\frac{1}{k}\sum_{i=1}^k(u_\smy)_{[i]})_+\\
    &=(1-u'_y+\frac{1}{k}\sum_{i=1}^k(u'_\smy)_{[i]})_+\\
    &=\Li{4}(u',y)~.
\end{align*}

\textbf{Case 2:} $y\not\in S$:
In this case, we have $u_y \leq u'_y = 0$.
Moreover, $\sum_{i=1}^k (u_\smy)_{[i]} = \sum_{j \in S} u_j = \sum_{j \in S} u'_j = \sum_{i=1}^k (u'_\smy)_{[i]}$, as $S \in \Tk(u_\smy) \cap \Tk(u'_\smy)$. 
\begin{align*}
    \Li{4}(u,y)&=(1-u_y+\frac{1}{k}\sum_{i=1}^k(u_\smy)_{[i]})_+\\
    &\geq (1- 0 +\frac{1}{k}\sum_{i=1}^k(u_\smy)_{[i]})_+\\
    &=(1- u'_y +\frac{1}{k}\sum_{i=1}^k(u'_{\setminus y})_{[i]})_+\\
    &=\Li{4}(u',y)~.
\end{align*}

Therefore, for all $y$, we have $\Li{4}(u,y)\geq \Li{4}(u',y)$.
Thus, $\Rlo_4$ is representative.
\end{proof}

Using $\Rlo_4$ as a starting point, we now proceed to show $\U_4 := \Rlo_4\cap \Rhi_4$ is a representative set for $\Li{4}$.

\begin{theorem}\label{thm:U4-representative}
The set $\U_4 := \Rlo_4\cap \Rhi_4$ is a representative set for $\Li{4}$.
\end{theorem}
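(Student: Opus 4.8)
The plan is to follow the same two-step strategy as in the proof of Lemma~\ref{lem:top-threshold} for $\Li 2$. Since Lemma~\ref{lem:Rlo4-representative} already shows $\Rlo_4$ is representative, it suffices to show that for every $u \in \Rlo_4$ and every $p \in \simplex$ there is a $u' \in \U_4$ with $\E_p \Li{4}(u',\cdot) \le \E_p \Li{4}(u,\cdot)$. The basic move is a \emph{clamp}: if $u \notin \Rhi_4$, pick a coordinate $y$ with $u_y > 1 + \tfrac1k\sigma_k(u_{\setminus y})$ and lower $u_y$ to exactly $1 + \tfrac1k\sigma_k(u_{\setminus y})$, leaving the other coordinates fixed; call the result $u'$.

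First I would check the clamp never increases the loss on any outcome. For the clamped outcome $z=y$: lowering $u_y$ does not change $u_{\setminus y}$, so the argument of $(\cdot)_+$ in $\Li{4}(u',y)$ is exactly $0$, whereas in $\Li{4}(u,y)$ it was strictly negative; both values are $0$. For an outcome $z \neq y$: $u_z$ is unchanged, and lowering $u_y$ can only lower $\sigma_k(u_{\setminus z})$, so the argument of $(\cdot)_+$ in $\Li{4}(\cdot,z)$ does not increase; hence $\Li{4}(u',z)\le\Li{4}(u,z)$. Summing against $p$ gives $\E_p\Li{4}(u',\cdot)\le\E_p\Li{4}(u,\cdot)$ for every $p$. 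Also $u'\in\Rlo_4$, since the clamped value is at least $1>0$ so $u'\ge 0$, its support is contained in that of $u$ so $\|u'\|_0\le k$, and $u'\le u$ coordinatewise.

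The obstacle is that a single clamp can create a fresh violation elsewhere (the threshold $1+\tfrac1k\sigma_k(u_{\setminus z})$ for $z$ drops when $u_y$ drops), so iterating the clamp need not terminate in finitely many steps. To close this cleanly I would use an extremal selection rather than iteration. Fix $p$ and $u\in\Rlo_4$, and let $C = \{v\in\Rlo_4 \mid v\le u,\ \E_p\Li{4}(v,\cdot)\le\E_p\Li{4}(u,\cdot)\}$. This set is nonempty (it contains $u$), closed (an intersection of $\Rlo_4$, the box $\{v\le u\}$, and a sublevel set of the continuous map $v\mapsto\E_p\Li{4}(v,\cdot)$), and bounded, hence compact; so $v\mapsto\sum_i v_i$ attains a minimum on $C$ at some $v^\star$. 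If $v^\star\notin\Rhi_4$, one clamp step produces $w\in\Rlo_4$ with $w\le v^\star\le u$, $\E_p\Li{4}(w,\cdot)\le\E_p\Li{4}(v^\star,\cdot)\le\E_p\Li{4}(u,\cdot)$, and $\sum_i w_i<\sum_i v^\star_i$, contradicting minimality; hence $v^\star\in\Rhi_4$, i.e.\ $v^\star\in\U_4$. Since this holds for all $u\in\Rlo_4$ and all $p$, and $\Rlo_4$ is representative, so is $\U_4$. (As a sanity check, summing the constraints defining $\Rhi_4$ over a size-$k$ support forces $\sigma_k(v)\le k^2$ and thus $\U_4\subseteq[0,k]^n$, which makes the boundedness in the argument transparent.)
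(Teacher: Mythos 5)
Your proposal is correct, and it shares the paper's core ingredients---clamp a violating coordinate down to its threshold $1+\tfrac1k\sum_{i=1}^k(u_{\setminus y})_{[i]}$, and check outcome-by-outcome that the loss never increases (the clamped outcome stays at $0$ because the positive part was already inactive; every other outcome's top-$k$ sum can only drop)---but it differs in how it lands inside $\Rhi_4$. The paper clamps \emph{all} violating coordinates simultaneously, using thresholds computed from the original $u$, and asserts that the result lies in $\Rlo_4\cap\Rhi_4$ ``by construction.'' That assertion is exactly the subtle point you flagged: lowering one coordinate lowers the thresholds of the others, so a previously feasible (or freshly clamped) coordinate can violate its \emph{new} constraint (e.g.\ with $n=3$, $k=2$, $u=(10,4,0)$ the one-shot clamp gives $(3,4,0)\notin\Rhi_4$), so the paper's construction implicitly needs an iteration or a separate argument. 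Your extremal-selection fix---minimize $\sum_i v_i$ over the compact set $C$ of coordinatewise-dominated reports in $\Rlo_4$ whose expected loss does not exceed that of $u$, and note that any $\Rhi_4$-violation at the minimizer would permit a strict clamp, contradicting minimality---closes this gap cleanly, at the price of being nonconstructive; the paper's route is more direct and exhibits an explicit point, but as written it glosses over the interaction between clamps. One small bookkeeping point in your write-up: representativeness is about minimizers of the expected loss over all of $\reals^n$, so you should start from a minimizer $u\in\Rlo_4$ (which exists since $\Li{4}$ is polyhedral and $\Rlo_4$ is representative) and then conclude $v^\star\in\U_4$ is also a minimizer; your argument does exactly this, it just deserves the explicit sentence.
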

\begin{proof}
Since $\Rlo_4$ is representative by Lemma~\ref{lem:Rlo4-representative}, consider $u\in \Rlo_4$. 
Moreover, if $u\notin \Rhi_4$, construct $u'\in \reals^n_+$ such that
\[u'_y=\begin{cases}
1+\frac{1}{k}\sum_{i=1}^k(u_{\backslash y})_{[i]}&u_y>1+\frac{1}{k}\sum_{i=1}^k(u_{\backslash y})_{[i]}\\
u_y& u_y\leq1+\frac{1}{k}\sum_{i=1}^k(u_{\backslash y})_{[i]}
\end{cases}~.~\]

Observe that $u'\in \Rlo_4\cap \Rhi_4$ by construction and $\forall y\in\Y, u_y \geq u'_y$.\\
Since $u \not \in \Rhi_4$, there is a $y \in \Y$ such that $u_y>1+\frac{1}{k}\sum_{i=1}^k(u_{\backslash y})_{[i]}$; we can equivalently write
\begin{equation}\label{eq:outside-U4}
    u_y=\left(1+\frac{1}{k}\sum_{i=1}^k(u_{\backslash y})_{[i]}\right)+\epsilon,
\end{equation}
for some $\epsilon >0$.
We now proceed in two cases: considering the ground truth $y' =y$ and $y' \neq y$.

\textbf{Case 1:} Suppose $y$ is the ground truth label:
\begin{align*}
    \Li{4}(u,y)&=(1-u_y+\frac{1}{k}\sum_{i=1}^k(u_{\backslash y})_{[i]})_+&\\
    &=\left(1-(1+\frac{1}{k}\sum_{i=1}^k(u_{\backslash y})_{[i]}+\epsilon)+\frac{1}{k}\sum_{i=1}^k(u_{\backslash y})_{[i]}\right)_+&\\
    &=(-\epsilon)_+&\text{ where $\epsilon >0\implies-\epsilon<0$}\\
    &=0~.~&
\end{align*}
As $u'_y$ is of the same form of eq.~\eqref{eq:outside-U4} with $\epsilon = 0$, we observe equality as $(\epsilon)_+ = (0)_+ = 0$.
Therefore, $\Li{4}(u,y)=0=\Li{4}(u',y)$, and $\Li{4}(u,y)\geq \Li{4}(u',y)$ immediately.
\textbf{Case 2:} Let $j \neq y$ be the ground truth label.\\
\begin{align*}
    \Li{4}(u,j)&=(1-u_j+\frac{1}{k}\sum_{i=1}^k(u_{\backslash j})_{[i]})_+~.
\end{align*}
By the case, we have $u_j=u'_j$.
\begin{align*}
    \Li{4}(u,j)&=(1-u_j+\frac{1}{k}\sum_{i=1}^k(u_{\backslash j})_{[i]})_+&\\
    &=(1-u'_j+\frac{1}{k}\sum_{i=1}^k(u_{\backslash j})_{[i]})_+&\\
    &\geq(1-u'_j+\frac{1}{k}\sum_{i=1}^k(u'_{\backslash j})_{[i]})_+ & \text{ as $u\geq u'$ element-wise}\\
    &=\Li{4}(u',j)~.~
\end{align*}
Therefore, $\Li{4}(u,j)\geq \Li{4}(u',j)$.
Thus, we conclude 
\[\Li{4}(u,y)\geq \Li{4}(u',y)\quad\forall y\in\Y,\]
and therefore $\U_4 := \Rhi_4\cap \Rlo_4$ is a bounded, infinite, representative set for $\Li{4}$.
\end{proof}
\subsection{Characterizing Affineness of $\Li{4}$}
For $u\in \U_4$, we know that $\forall y\in\Y$
\begin{align}
    u_y&\leq 1+\frac{1}{k}\sum_{i=1}^k(u_{\backslash y})_{[i]} & \text{ as $u\in \Rhi_4$}\\
    \implies 0&\leq1-u_y+\frac{1}{k}\sum_{i=1}^k(u_{\backslash y})_{[i]}~. & 
\end{align}
Therefore, for all ground truth labels $y \in \Y$ and $u \in \U_4$, we have 
\begin{align*}
    \Li{4}|_{\U_4}(u,y)&=(1-u_j+\frac{1}{k}\sum_{i=1}^k(u_\smy)_{[i]})_+\\
    &=1-u_y+\frac{1}{k}\sum_{i=1}^k(u_\smy)_{[i]}
\end{align*}
is an equivalent way to write the loss $\Li{4}$ when restricting the domain to $\U_4$.
When restricting to $u \in \U_4$, we may denote $\Li 4(u,y) = \Li 4|_{\U_4}(u,y)$ for brevity and drop the positive part operator.

Now consider a set $T \subseteq \Y$ such that $|T|\leq k$.
Let us define the region \[A_4^T=\left\{u\in\U_4 \mid \begin{cases}
 0\leq u_y\leq 1+\frac{1}{k}\sum_{i\in T,i\neq y}u_i& y\in T\\
 u_y=0&y\notin T
\end{cases}\right\}\]
We claim, for any $y \in \Y$, the function $u \mapsto \Li{4}(u, y)$ is affine on $A^T$, and note that $A^T\subseteq \U_4$ for all $T$ by construction.

\begin{lemma}
For all $y\in \Y$ and set $T \subseteq \Y$ such that $|T| \leq k$, the function $u \mapsto \E_p \Li{4}(u,\cdot)$ defined on $\U_4$ is affine on $A^T$.
\end{lemma}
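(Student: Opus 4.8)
The plan is to exploit the fact that on $A^T$ the summation term in $\Li{4}$ collapses to a plain sum of coordinates, so that no sorting or top-$k$ selection is actually performed, and the loss becomes affine in $u$. First I would note that $A^T \subseteq \U_4$ by construction, so on this region we may use the simplified expression derived just above the statement, namely $\Li{4}|_{\U_4}(u,y) = 1 - u_y + \frac{1}{k}\sum_{i=1}^k (u_{\smy})_{[i]}$, with the positive part operator dropped. The only term here that is not manifestly affine in $u$ is $\sum_{i=1}^k (u_{\smy})_{[i]}$, the sum of the top $k$ entries of $u_{\smy} \in \reals^{n-1}_+$.

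The key step is to show that on $A^T$ this term equals $\sum_{i \in T \setminus \{y\}} u_i$, a linear function of $u$. Indeed, for $u \in A^T$ we have $u_i = 0$ for all $i \notin T$, so $u_{\smy}$ has at most $|T \setminus \{y\}| \leq k$ nonzero coordinates, and every coordinate of $u_{\smy}$ is nonnegative since $u \in \reals^n_+$. Hence the $k$ largest coordinates of $u_{\smy}$ consist of all its positive coordinates together with some zeros, and their sum is exactly $\sum_{i \in T \setminus \{y\}} u_i$ (using $n-1 \geq k$, which holds in the top-$k$ setting). Substituting, $\Li{4}(u,y) = 1 - u_y + \frac{1}{k}\sum_{i \in T \setminus \{y\}} u_i$ for all $u \in A^T$, which is affine in $u$.

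Finally, $\E_p \Li{4}(u,\cdot) = \sum_{y \in \Y} p_y\,\Li{4}(u,y)$ is a finite convex combination of functions that are each affine in $u$ on $A^T$, and is therefore itself affine in $u$ on $A^T$, as claimed. There is no real obstacle in this argument; the only subtlety worth making explicit is that the constraint $u_i = 0$ for $i \notin T$ together with nonnegativity renders the "top $k$" operation vacuous, which is precisely the reason affineness holds on each $A^T$ even though it fails for $\Li{4}$ globally.
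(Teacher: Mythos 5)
Your proposal is correct and follows essentially the same route as the paper: restrict to $\U_4$ so the positive part operator can be dropped, observe that on $A^T$ the top-$k$ sum of $u_{\smy}$ collapses to the linear expression $\sum_{i \in T\setminus\{y\}} u_i$ because all coordinates outside $T$ vanish, and conclude affineness of the expected loss. Your write-up is in fact slightly more explicit than the paper's (spelling out nonnegativity and the passage from per-outcome affineness to the expectation), but the underlying argument is identical.
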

\begin{proof}
Fix $y \in \Y$ and $T \subseteq \Y$ such that $|T| \leq k$.
Note that for $u\in \U_4$,
\[\Li{4}(u,y)=1-u_y+\frac{1}{k}\sum_{i=1}^k(u_{\backslash y})_{[i]}\]
The first two terms of this loss are linear in $u$; therefore $\frac{1}{k}\sum_{i=1}^k(u_{\smy})_{[i]}$ is the only term with non-linearity.
Moreover, this term results from the ordering of the top $k$ elements of $u_{\smy}$. 
Given that $|T|\leq k$ and all elements of $u\notin T$ are $0$, we have that $T \in \Tk(u_\smy)$
Therefore, $u \mapsto Li 4(u,y)$ will be linear for $u\in A^T$.
\end{proof}

This result yields affine regions over which $u \mapsto \Li 4(u,y)$ is affine for each $y\in \Y$.
The vertices of these affine regions yield a finite representative set for $\Li 4$.

\subsection{Constructing a Finite Representative Set for $\Li{4}$}
Each set $T$ has a finite set of vertices according to the two inequalities shown in the definition of $A^T$ above. 
Since $|T|\leq k,$ there are a finite number of possible sets $T$ ( ${n \choose 0} + {n\choose 1}+{n\choose 2}+...+{n\choose k} =2^n$ possible sets in particular). 
Therefore, \[\bigcup_{T \subseteq \Y, \hspace{0.5mm}|T|\leq k}A^T = \U_4\] has a finite number of vertices.

According to the boundaries of the halfspaces defining $A^T$, the vertices of $A^T$ must be such points $u$ such that $u_y=0$ or $u_y=1+\frac{1}{k}\sum_{i\in T,i\neq y}u_i$ for each $y \in \Y$. 
Consider when $u_y=1+\frac{1}{k}\sum_{i\in T,i\neq y}u_i,$ which we will refer to as the ``bumped up'' value of $u_y$.
\begin{theorem}
Fix $T \subseteq \Y$ such that $1 \leq |T| \leq k$.
For a vertex $u$ in the region $A^T$ with $y\in \Y$ such that $u_y=1+\frac{1}{k}\sum_{i\in T, i\neq y}u_i,$ then $\forall i\in\Y$ such that $u_i \neq 0\implies u_i=u_y.$
\end{theorem}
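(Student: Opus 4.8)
The plan is to first determine the form of an arbitrary vertex of $A^T$, and then extract the claim from a short linear-algebra computation. First I would note that $A^T$ lies inside the $|T|$-dimensional affine subspace $\{u \in \reals^n : u_i = 0 \text{ for } i \notin T\}$, and within that subspace it is carved out by the $2|T|$ ``box'' inequalities $0 \le u_i$ and $u_i \le 1 + \tfrac1k\sum_{j \in T,\, j \neq i} u_j$ indexed by $i \in T$. By standard polyhedral theory \citep{grunbaum1967convex}, a vertex $u$ of $A^T$ must make $|T|$ of these box inequalities tight with linearly independent normals; since the normals of the box constraints all lie in $\mathrm{span}\{e_i : i \in T\}$, this is really a statement about the coordinates indexed by $T$.

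The key structural step --- and the one I expect to be the main obstacle --- is to argue that at such a vertex each coordinate $u_i$ with $i \in T$ sits at exactly one of its two bounds. The ``at most one'' half is easy: if both bounds for some $i \in T$ were tight, then $1 + \tfrac1k\sum_{j\in T,\, j\neq i}u_j = u_i = 0$, forcing $\sum_{j\in T,\, j\neq i}u_j = -k$, which is impossible since $u \ge 0$ (recall $A^T \subseteq \U_4 \subseteq \reals^n_+$). The ``at least one'' half then follows from a dimension count: if some coordinate $i_0 \in T$ had neither bound tight, the remaining $|T|-1$ coordinates could contribute at most $|T|-1$ tight box inequalities in all, whose normals lie in $\mathrm{span}\{e_i : i \in T \setminus \{i_0\}\}$ and hence cannot span the $|T|$-dimensional space needed to pin $u$ down to a point, contradicting that $u$ is a vertex. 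So every $u_i$ with $i \in T$ equals either $0$ or its ``bumped up'' value $1 + \tfrac1k\sum_{j\in T,\, j\neq i}u_j$.

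With that in hand the rest is immediate. I would set $B = \{i \in \Y : u_i \neq 0\}$; by the previous step $B \subseteq T$ and each $i \in B$ takes its bumped up value. Since the coordinates outside $B$ vanish, writing $s = \sum_{j \in B} u_j$ gives, for every $i \in B$, the identity $u_i = 1 + \tfrac1k(s - u_i)$, which rearranges to $u_i = \tfrac{k+s}{k+1}$ --- a value that does not depend on the choice of $i \in B$. Finally, the hypothesis on $y$ gives $u_y = 1 + \tfrac1k\sum_{i\in T,\, i\neq y}u_i \ge 1 > 0$, so $y \in B$; hence $u_i = \tfrac{k+s}{k+1} = u_y$ for every $i$ with $u_i \neq 0$, which is exactly the claim. (Substituting back, $s = |B|\,\tfrac{k+s}{k+1}$ yields the explicit value $u_y = \tfrac{k}{k+1-|B|}$, consistent with the reparameterization into $\Ri{4}$ described in the text.)
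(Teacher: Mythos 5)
Your proof is correct, and it is somewhat more self-contained than the paper's. The paper only asserts, in the text just before the theorem, that a vertex of $A^T$ must have each coordinate $u_i$ ($i\in T$) at one of its two bounds, and then proves the theorem by a pairwise computation: for any two coordinates $y,j$ both at their ``bumped up'' value, combining the two tight equalities gives $\tfrac{k+1}{k}u_y=\tfrac{k+1}{k}u_j$, hence $u_y=u_j$. You do two things differently: (a) you actually justify the vertex structure --- both bounds cannot be tight at once because $u\ge 0$, and if some coordinate had neither bound tight there would be at most $|T|-1$ active constraints, which cannot have rank $|T|$ --- and (b) you get the common value in one step by summing over the nonzero coordinates, writing $u_i=1+\tfrac1k(s-u_i)$ with $s=\sum_{j\in B}u_j$, so $u_i=\tfrac{k+s}{k+1}$ independently of $i$, recovering $\tfrac{k}{k+1-|B|}$ exactly as in the paper's subsequent closed-form derivation; you also correctly note $y\in B$ via $u_y\ge 1>0$, a point the paper leaves implicit. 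The one inaccuracy is your parenthetical claim that the remaining tight constraints' normals lie in $\mathrm{span}\{e_i : i\in T\setminus\{i_0\}\}$: the upper-bound constraint $u_i\le 1+\tfrac1k\sum_{j\in T, j\ne i}u_j$ has normal $e_i-\tfrac1k\sum_{j\in T, j\ne i}e_j$, which does have a component along $e_{i_0}$. This is harmless, since your counting argument (at most $|T|-1$ active constraints cannot be $|T|$ linearly independent) already delivers the contradiction without any claim about where the normals lie.
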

\begin{proof}
Let $u_y=1+\frac{1}{k}\sum_{i\in T,i\neq y}u_i$ and $j\in T \implies u_j=1+\frac{1}{k}\sum_{i\in T, i\neq j}u_j$. We will show that $u_y=u_j,$ so all ``bumped up'' elements of $u$ must be equal to one another:
\begin{align*}
    u_y &= 1+\frac{1}{k}\sum_{i\in T, i\neq y}u_i\\
    u_y &= 1+\frac{1}{k}\sum_{i\in T, i\neq j}u_i+\frac{1}{k}u_j-\frac{1}{k}u_y\\
    \frac{k+1}{k}u_y &= u_j+\frac{1}{k}u_j\\
    \frac{k+1}{k}u_y &= \frac{k+1}{k}u_j\\
    u_y &= u_j
\end{align*}
Therefore, for any two arbitrary elements $y,j \in T, u_j=u_y$.
\end{proof}
Therefore, the closed for over vertices $u$ of the region $A^T$ are as follows:
\begin{align*}
    u_y&=1+\frac{1}{k}\sum_{i\in T, i\neq y}u_i\\
    u_y&=1+\frac{1}{k}\sum_{i\in T}u_i-\frac{1}{k}u_y\\
    u_y&=1+\frac{1}{k}|T|u_y-\frac{1}{k}u_y\\
    u_y&=\frac{k}{k+1-|T|}~.
\end{align*}
Thus, all of the vertices of each $A^T$ occur at $u \in \U_4$ such that $u_y=0$ or $\frac{k}{k+1-|T|}$ for all $y\in\Y$ where $|T| \leq k$ is the number of non-zero elements of $u\in A^T$.
consider the set of subsets $\T = \{T \subseteq \{1, \ldots, n\} \mid |T| \leq k\}$ and finite report set vertices of the $A^T$ sets by $\Ri{4} := \{ \frac{k}{k+1 - |T|} \ones_T \mid T \in \T\}$.

\subsection{Characterizing the Loss Embedded by $\Li{4}$}

We reparameterize the vertices of the $A^T$ sets by their defining set $T$ by the bijection $\Phi : T \mapsto \frac{k}{k+1 - |T|} \ones_T$.
We define the reparameterization $\hat \ell_4$ such that $\Li 4(\Phi(T),y) = \Li 4|_{\U_4}(\Phi(T),y) = \hat \ell_4(T, y)$ for all $u \in \T$.

We know that $\Li{4}$ embeds $\Li{4}|_{\U_4}$, and therefore also embeds 
\begin{align}
  \hat \ell_4(T,y) &=
\begin{cases}
 1-\frac{k}{k+1-|T|} +\frac{1}{k}(|T|-1)\frac{k}{k+1-|T|} & y\in T\\
 1+\frac{1}{k}|T|\frac{k}{k+1-|T|} & y\notin T
\end{cases} \nonumber \\
&=
\begin{cases}
 0  & y\in T\\
 \frac{k+1}{k+1-|T|} & y\notin T
\end{cases}~.
\end{align}
In a slight abuse of notation, for a set $T\subset[n]$ and $p\in\Delta_{\Y},$ let $\sigma_T(p)=\sum_{i\in T}p_i$. 
Therefore, the expected value of $\hat \ell_4$
is
\[\E_p \hat \ell_4(T,\cdot)=\sum_{y\in T}p_y(0)+\sum_{y\notin T}p_y(\frac{k+1}{k+1-|T|})=(1-\sigma_T(p))(\frac{k+1}{k+1-|T|})~.~\]

Now, suppose we have some set $T \in \T$ as defined above; we will analyze $\prop{\hat \ell_4}$ to determine the necessary probability $p_i$ some in $i \in \Y$ so that $i \in T \in \prop{\hat \ell_4}(p)$. 
In other words, if we have some set $T$ of labels corresponding to scores in $u$ of $\frac{k+1}{k+1-|T|},$ then we will bump the score of some label, $z\notin T$, up to $\frac{k+1}{k+1-|T|-1}$ (changing all of the non-zero scores in $u$ to this value as well) if it surpasses a particular probability threshold. 
We will find this probability boundary below by seeing what probability $z\in\Y$ must achieve in order to meet or lower the expected loss:
\[\E_{p}\hat \ell_4(T,\cdot)\geq \E_{p}\hat \ell_4(T\cup \{z\},\cdot)~.\]

By doing so, we are determining the probability of $p_z$ such that, for a fixed $p\in \simplex$, we have $T\in \prop{\hat \ell_4}(p)\implies T\cup \{z\}\in\prop{\hat\ell_4}(p)$.\\
This boundary is given:
\begin{align}
    \E_{p} \hat \ell_4(T,\cdot) &= \E_{p}\hat \ell_4(T\cup \{z\},\cdot) \nonumber \\
    (1-\sigma_T(p))\frac{k+1}{k+1-|T|} &= (1-\sigma_T(p)-p_z)\frac{k+1}{k+1-|T|-1} \nonumber \\
    (k-|T|)(1-\sigma_T(p)) &= (k+1-|T|)(1-\sigma_T(p)-p_z) \nonumber \\
    0 &= 1-\sigma_T(p)-p_z(k+1-|T|) \nonumber \\
    p_z &= \frac{1-\sigma_T(p)}{k+1-|T|}\label{eq:L4-bound}
\end{align}

Therefore, to add the element $z$ to the set $T$
\[p_z\geq \frac{1-\sigma_T(p)}{k+1-|T|}.\]
Iteratively adding elements such that the above boundary holds will be necessary and sufficient to form an optimal set $M^*\subset[n]$ of labels that minimizes $\E_p \hat \ell_4(M^*, \cdot)$, and equivalently, $\Phi(M^*)$ minimizes $\Li{4}.$

\begin{theorem}
Consider $\gamma_4 := \prop{\hat \ell_4}$.
Fix $p \in \simplex$ and $T \in \T$ be such that $T$ is the top-$|T|$ elements of $p$ with $|T|\leq k-1$.
Consider $z\in[n] \setminus T$ such that $p_z\geq\frac{1-\sigma_T(p)}{k+1-|T|}$ and $p_i\leq p_z$ for all $i \in[n]\backslash T$. 
Then $z$ must be an element of $M^*$ for some $M^* \in \gamma_4(p)$. 
\end{theorem}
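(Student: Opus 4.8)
The plan is to reduce everything to the behavior of $\E_p\hat\ell_4$ as a function of the \emph{size} of the reported set. For a fixed size $m$, the value $\E_p\hat\ell_4(S,\cdot)=(1-\sigma_S(p))\tfrac{k+1}{k+1-m}$ is minimized over all $S$ with $|S|=m$ exactly by the top-$m$ sets of $p$, so its minimum is $f(m):=(1-\sigma_m(p))\tfrac{k+1}{k+1-m}$, where $\sigma_m(p)=\sum_{i=1}^{m}p_{[i]}$. Consequently $\gamma_4(p)$ consists precisely of the top-$m$ sets of $p$ for those $m\in\{0,\dots,k\}$ minimizing $f$. It therefore suffices to find a minimizer $j$ of $f$ with $j\ge |T|+1$ together with a top-$j$ set containing $z$.

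The key lemma is a self-reinforcing monotonicity: for $1\le m\le k-1$, if $f(m+1)\le f(m)$ then $f(m)\le f(m-1)$. This is immediate from the threshold computation already carried out for eq.~\eqref{eq:L4-bound}: with $T$ the top-$m$ set and $z$ realizing the $(m{+}1)$-st largest value of $p$, one has $f(m+1)\le f(m)\iff p_{[m+1]}\ge\tfrac{1-\sigma_m(p)}{k+1-m}$; substituting $\sigma_m(p)=\sigma_{m-1}(p)+p_{[m]}$ and using $p_{[m]}\ge p_{[m+1]}$ yields $p_{[m]}\ge\tfrac{1-\sigma_{m-1}(p)}{k+2-m}$, which is exactly $f(m)\le f(m-1)$.

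I would then feed in the hypotheses. Because $T$ is a top-$|T|$ set and $p_z\ge p_i$ for every $i\notin T$, the index $z$ attains the $(|T|{+}1)$-st largest value of $p$; hence $T\cup\{z\}$ is a top-$(|T|{+}1)$ set and $p_z=p_{[|T|+1]}$. The assumption $p_z\ge\tfrac{1-\sigma_T(p)}{k+1-|T|}$ then says exactly $f(|T|+1)\le f(|T|)$, which is well-defined since $|T|\le k-1$. Applying the monotonicity lemma repeatedly gives $f(|T|+1)\le f(|T|)\le\cdots\le f(0)$, so the minimum of $f$ over $\{0,\dots,k\}$ is attained at some $j\ge|T|+1$ (if every minimizer had size at most $|T|$, it would tie in value with $|T|+1$, so we may then take $j=|T|+1$).

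Finally, to produce an optimal set containing $z$: since $T$ is a top-$|T|$ set and $z$ is a maximal element of $[n]\setminus T$, adjoin to $T\cup\{z\}$ the $j-|T|-1$ largest elements of $[n]\setminus(T\cup\{z\})$ to obtain a set $M^\star$ of size $j$ with $\sigma_{M^\star}(p)=\sigma_j(p)$. Then $\E_p\hat\ell_4(M^\star,\cdot)=f(j)$ is the global minimum, so $M^\star\in\gamma_4(p)$, and $z\in M^\star$ by construction. I expect the monotonicity lemma to be the crux --- the underlying point being that $f$ is unimodal in the report size --- though it collapses to the short manipulation above once eq.~\eqref{eq:L4-bound} is available; the only other thing to check carefully is that $z$, being maximal outside $T$, is genuinely compatible with an optimal (top-$j$) report.
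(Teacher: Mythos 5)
Your proof is correct, but it takes a genuinely different route from the paper's. The paper argues locally: it first observes there is an optimal set $M \in \gamma_4(p)$ with $T \subseteq M$ (since $T$ consists of the most likely labels), and then splits into two cases --- if $M \supsetneq T$ it swaps some $z' \in M \setminus T$ for $z$, using $p_{z'} \le p_z$ to show the expected loss does not increase; if $M = T$ it appends $z$, using the threshold condition of eq.~\eqref{eq:L4-bound}. In either case one lands on an optimal set containing $z$. You instead argue globally through the function $f(m) = (1-\sigma_m(p))\tfrac{k+1}{k+1-m}$, the minimal expected loss over reports of size $m$: your self-reinforcing monotonicity lemma ($f(m+1)\le f(m) \Rightarrow f(m)\le f(m-1)$, which checks out algebraically and is exactly the threshold computation combined with $p_{[m]} \ge p_{[m+1]}$) yields a minimizing size $j \ge |T|+1$, after which you complete $T \cup \{z\}$ greedily to a top-$j$ set attaining $f(j)$. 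Both arguments hinge on eq.~\eqref{eq:L4-bound}; the paper's exchange argument is shorter and needs nothing beyond the single swap/append step, while your unimodality-in-size lemma buys a stronger structural picture of $\gamma_4$ --- essentially anticipating the paper's later greedy characterization (the iterative-addition discussion and Theorem~\ref{thm:L4-monotonic}) --- at the cost of the extra lemma and some bookkeeping about top-$m$ sets under ties, which you handle correctly since $T$ being a top-$|T|$ set forces $\sigma_T(p)=\sigma_{|T|}(p)$ and $p_z = p_{[|T|+1]}$.
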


\begin{proof}
For intuition, $T$ is a set of labels at least as likely as label $z$.
Observe that there is an $M \in \gamma_4(p)$ such that $T \subseteq M$ since $T$ is composed of the top-$|T|$ elements of $p$, and replacing any $t \in T$ with $t' \not \in T$ cannot decrease expected loss as the denominator stays the same and $T$ is composed of the top-$|T|$ elements of $p$.

It is not necessarily the case that $M = M^*$ as we may have $|\gamma_4(p)| >1$ and the top-$k$ elements of the property value are ambiguous.
If $|\gamma_4(p)| = 1$, however, then we must have $M = M^*$.
Suppose $z\notin M$ (otherwise this proof is trivial), we have two cases:\\
\textbf{Case 1:} $T \subsetneq M$, e.g., $\exists z'\in M$ such that $z'\notin T$.
\begin{align*}
    \mathbb{E}_{p}\hat \ell_4(M,\cdot)&=(k+1)\frac{1-p_{z'}-\sigma_{M\backslash\{z'\}}(p)}{k+1-|M|}\\
    &\geq(k+1)\frac{1-p_{z}-\sigma_{M\backslash\{z'\}}(p)}{k+1-|M|} \quad \text{from $p_z'\leq p_z$~.}
\end{align*}

Therefore, $\exists M^*$ which is optimal such that $z\in M^*=M\backslash \{z'\}\cup \{z\}$.\\
\textbf{Case 2:} $T=M$. 
By the assumptions and choice of $z$,
\[p_z\geq \frac{1-\sigma_T(p)}{k+1-|T|}~.\]
Therefore, using the bound from eq.~\eqref{eq:L4-bound}, we have
\[\E_{p}\hat \ell_4(T\cup \{z\},\cdot)\leq \E_{p}\hat \ell_4(T,\cdot)=\E_{p}\hat \ell_4(M,\cdot)~.\]
As $M$ is optimal, $M^*=T\cup\{z\}$ is also optimal.

From both cases above, we can conclude $z\in M^*$ for some optimal set $M^*$.
\end{proof}

This will enable us to characterize $\gamma_4$ in Theorem~\ref{thm:L4-monotonic}.
However, we first need the following Lemma.
\begin{lemma}\label{lem:L4-monotonic-fracs}
For $a,c\in\reals_+$ and $b,d\in\reals_{++}$ with $b>d$,
\[\frac{c}{d}<\frac{a}{b}\implies \frac{a+c}{b+d}<\frac{a}{b}.\]
\end{lemma}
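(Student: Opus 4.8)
The plan is to clear denominators and observe that, since all the relevant denominators are strictly positive, both the hypothesis and the desired conclusion reduce to one and the same polynomial inequality. Concretely, I would first rewrite the hypothesis: since $b,d\in\reals_{++}$, the product $bd$ is strictly positive, so multiplying through shows that $\frac{c}{d}<\frac{a}{b}$ is equivalent to $bc<ad$.

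Next I would rewrite the conclusion in exactly the same way. Both $b$ and $b+d$ are strictly positive, so $b(b+d)>0$, and hence $\frac{a+c}{b+d}<\frac{a}{b}$ is equivalent to $b(a+c)<a(b+d)$; expanding, this is $ab+bc<ab+ad$, i.e.\ $bc<ad$. Chaining the two equivalences, the hypothesis and the conclusion are each equivalent to the single statement $bc<ad$, which immediately gives the implication (in fact also its converse).

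The only point that needs care is the sign bookkeeping: each cross-multiplication is by one of $bd$, $b$, or $b(b+d)$, all of which are strictly positive precisely because $b,d\in\reals_{++}$, so the direction of each strict inequality is preserved; the nonnegativity $a,c\in\reals_+$ is not even needed. There is no real obstacle here. The one mild surprise worth flagging is that the stated hypothesis $b>d$ plays no role in the argument, so I would remark that the lemma in fact holds without it.
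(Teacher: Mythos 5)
Your proof is correct and follows essentially the same route as the paper's: cross-multiplying by the strictly positive quantities to show both inequalities are equivalent to $bc<ad$. Your observation that the hypothesis $b>d$ (and the nonnegativity of $a,c$) is not actually needed is accurate — the paper's own chain of equivalences never uses it either.
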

\begin{proof}
\begin{align*}
    \frac{c}{d}&<\frac{a}{b}\\
    \iff cb &<ad\\
    \iff ab+cb&<ab+ad\\
    \iff \frac{a+c}{b+d}&<\frac{a}{b}
\end{align*}
\end{proof}

Now we obtain the following result to characterize $\gamma_4$.
\begin{theorem}\label{thm:L4-monotonic}
Fix $p \in \simplex$, and consider any $T\subset[n]$ which minimizes $\E_p \hat \ell_4(T, \cdot)$, i.e., $T \in \gamma_4(p)$.
Then for all $z\in T$, we have $p_z\geq \frac{1-\sigma_T(p)}{k+1-|T|}$. 
\end{theorem}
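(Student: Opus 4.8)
The plan is to argue by contradiction: I will show that if some $z\in T$ violated the claimed bound, then the report $T\setminus\{z\}$ would have strictly smaller expected $\hat\ell_4$-loss than $T$, contradicting $T\in\gamma_4(p)$. So suppose $T\in\gamma_4(p)$ yet there is $z\in T$ with $p_z<\frac{1-\sigma_T(p)}{k+1-|T|}$. Set $T'=T\setminus\{z\}$. Since $|T|\le k$ we have $|T'|=|T|-1\le k$ (with $T'=\emptyset$ permitted when $|T|=1$), so $T'$ is an admissible prediction for $\hat\ell_4$; moreover $k+1-|T|\ge 1$ and $k+1-|T'|=k+2-|T|\ge 2$, so every denominator appearing below is strictly positive.

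Next I would compute the two expected losses from the closed form $\E_p\hat\ell_4(S,\cdot)=(1-\sigma_S(p))\frac{k+1}{k+1-|S|}$, using $\sigma_{T'}(p)=\sigma_T(p)-p_z$. Dividing out the factor $k+1>0$ and clearing the positive denominators $k+1-|T|$ and $k+2-|T|$, the inequality $\E_p\hat\ell_4(T',\cdot)<\E_p\hat\ell_4(T,\cdot)$ becomes, after cancelling the common term $(1-\sigma_T(p))(k+1-|T|)$, simply $p_z(k+1-|T|)<1-\sigma_T(p)$, i.e.\ the standing hypothesis $p_z<\frac{1-\sigma_T(p)}{k+1-|T|}$ itself; this is precisely the manipulation that produced the boundary~\eqref{eq:L4-bound}, read in reverse. (Alternatively, for $|T|\le k-1$ one can feed $a=(k+1)(1-\sigma_T(p))$, $b=k+1-|T|$, $c=(k+1)p_z$, $d=1$ into Lemma~\ref{lem:L4-monotonic-fracs}, and dispatch the remaining case $|T|=k$ by direct inspection.) Hence $\E_p\hat\ell_4(T',\cdot)<\E_p\hat\ell_4(T,\cdot)$, contradicting $T\in\gamma_4(p)=\argmin_{S}\E_p\hat\ell_4(S,\cdot)$. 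Therefore no such $z$ exists, which is the claim.

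I do not expect a genuine obstacle here; the points that need care are purely bookkeeping: (i) that $T\setminus\{z\}$ is always an admissible report, including the degenerate case $|T|=1$ where it is the empty set; (ii) that the denominators $k+1-|T|$ and $k+2-|T|$ are strictly positive, which is exactly where the hypothesis $|T|\le k$ is used and what lets the direction of the inequality survive clearing denominators; and (iii) that the resulting inequality is \emph{strict}, so that it truly contradicts membership of $T$ in the argmin set $\gamma_4(p)$ rather than merely tying it. Together with the preceding theorem, this necessary condition pins down the optimal sets for $\hat\ell_4$, which is what we then feed through Lemma~\ref{lemma:restricted-consistency} to characterize top-$k$ consistency of $\Li 4$.
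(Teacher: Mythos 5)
Your proposal is correct and follows essentially the same route as the paper: assume some $z\in T$ violates the bound, remove it, and show via the closed form $\E_p\hat\ell_4(S,\cdot)=(1-\sigma_S(p))\tfrac{k+1}{k+1-|S|}$ that $T\setminus\{z\}$ strictly beats $T$, contradicting $T\in\gamma_4(p)$. The only cosmetic difference is that you clear denominators directly where the paper invokes Lemma~\ref{lem:L4-monotonic-fracs}, and your bookkeeping (admissibility of $T\setminus\{z\}$, positivity of $k+1-|T|$ and $k+2-|T|$, strictness) is exactly the care the argument needs.
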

\begin{proof}
We will show the contrapositive.
For $T \in \gamma_4(p)$.
Suppose there was a $z \in T$ such that $p_z < \frac{1 - \sigma_T(p)}{k + 1 - T}$.
We will contradict optimality of $T$ by showing $\E_p \hat \ell_4(T \setminus \{z\}, \cdot) < \E_p \hat \ell_4(T, \cdot)$.
Denote $M := T \setminus \{z\}$.

Note, that if we let $c=p_z \in\reals_{+},$  $d=1 \in\reals_{++},$ $a = (1-\sigma_M(p))\in\reals_{+},$ and $b = (k+1-|T|)\in\reals_{++},$ then we have $    p_z < \frac{1-\sigma_T(p)}{k+1-|T|} \iff \frac{c}{d} < \frac{a}{b}$.
Thus, we can apply Lemma~\ref{lem:L4-monotonic-fracs} to observe
\begin{align*}
    \frac{a+c}{b+d} &< \frac{a}{b}\\
    \frac{1-\sigma_T(p)+p_z}{k-(|T| - 1) + 1} &< \frac{1-\sigma_T(p)}{k+1-|T|} \\
    \implies (k+1)\frac{1-\sigma_M(p)}{k-|M| + 1} &< (k+1)\frac{1-\sigma_T(p)}{k+1-|T|} \\
    \implies \E_p \hat \ell_4(M,\cdot) &< \E_p \hat \ell_4(T,\cdot)
\end{align*}
Therefore, the expected loss on $M\subset[n]$ is strictly lower than on $T$; thus, $T \not \in \gamma_4(p)$.
Thus for any $T \in \gamma_4(p)$, we must have $p_z \geq \frac{1 - \sigma_T(p)}{k+1 - |T|}$ for all $z\in T$.
\end{proof}

By Theorem~\ref{thm:L4-monotonic}, we can conclude that iteratively adding elements $z\in[n]$ (in increasing order of corresponding probability) such that \[p_z\geq \frac{1-\sigma_T(p)}{k+1-|T|}\] to a set $T\subseteq[n],$ that is initially the empty set, is necessary and sufficient to form the optimal set $M^*\subset [n]$ that minimizes $\mathbb{E}_{p}\hat \ell_4(M^*,\cdot).$
That is, $\prop{\Li{4}|_{\U_4}}$ can be computed by implementing a greedy algorithm.
\subsection{A sketch of $\prop{\Li{4}}$}
Let $T\subset [n]$ where the elements of $T$ have been iteratively added in decreasing order of probability so long as $|T|\leq k$ and the probability of the added item meets the boundary condition defined above. 
Suppose $|T| \leq k-1$, then from our derivation of the probability needed to add an element $z$ to $T$, we can rewrite the boundary condition as adding an element $u_{[j]}$ with $j\in\{1,2,...,k\}$, so long as
\[p_{[j]}\geq\frac{1-\sum_{i=1}^{j-1}p_{[i]}}{k+2-j}.\]
We can rewrite rewrite the above as
\begin{align}
(k+1-j)p_{[j]} &\geq 1-\sigma_{j}(p)~.
\end{align}
Let $j_1\in[n]$ be the largest $j$ such that 
\begin{align}\label{eq:j1}
(k+1-j)p_{[j]} &> 1-\sigma_j(p)~.
\end{align}
Let $j_2 \in [n]$ be the largest $j$ such that 
\[(k+1-j)p_{[j]}\geq 1-\sigma_j(p).\]
\begin{lemma}
For all $p \in \simplex$ and $j_1$ as in eq.~\eqref{eq:j1}, we have $j_1\leq k$.
\end{lemma}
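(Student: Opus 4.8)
The plan is to argue directly that no index $j > k$ can satisfy the strict inequality in eq.~\eqref{eq:j1}; since $j_1$ is by definition the \emph{largest} index in $[n]$ for which that inequality holds, this immediately forces $j_1 \leq k$. So first I would fix an arbitrary $j \in \{k+1, \ldots, n\}$ and examine the two sides of $(k+1-j)p_{[j]} > 1-\sigma_j(p)$ separately.

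For the left-hand side: when $j \geq k+1$ the coefficient $k+1-j$ is non-positive, and since $p \in \simplex$ every coordinate is non-negative, in particular $p_{[j]} \geq 0$, so $(k+1-j)p_{[j]} \leq 0$. For the right-hand side: $\sigma_j(p) = \sum_{i=1}^{j} p_{[i]} \leq \sum_{i=1}^{n} p_{[i]} = 1$, so $1 - \sigma_j(p) \geq 0$. Chaining these gives $(k+1-j)p_{[j]} \leq 0 \leq 1-\sigma_j(p)$, which contradicts the strict inequality required of any $j$ counted by the definition of $j_1$. Hence every such $j$ satisfies $j \leq k$, and therefore $j_1 \leq k$.

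There is essentially no obstacle here; the only point needing a second's attention is the boundary case $j = k+1$, where $k+1-j = 0$ so the left-hand side is exactly $0$ while $1 - \sigma_{k+1}(p) \geq 0$ — the inequality is still non-strict, so that case is excluded as well. This lemma is a short feasibility observation recording that the greedy thresholding procedure defining the optimal set for $\hat\ell_4$ never adds more than $k$ elements, and the subsequent characterization of $\prop{\Li{4}}$ will build on it.
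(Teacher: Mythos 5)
Your proposal is correct and is essentially the paper's argument: the paper supposes $j_1 \geq k+1$ and derives the contradiction $\sigma_{j_1}(p) > 1$ from $(k+1-j_1)p_{[j_1]} \leq 0$, which is just the contrapositive phrasing of your direct chain $(k+1-j)p_{[j]} \leq 0 \leq 1-\sigma_j(p)$ for all $j \geq k+1$. Both use exactly the same two observations (non-positive coefficient times non-negative coordinate, and $\sigma_j(p) \leq 1$), so there is nothing further to add.
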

\begin{proof}
Suppose for the sake of contradiction that $j_1> k$, and therefore $j_1\geq k+1.$
Then, we have
\begin{align*}
    (k+1-j_1)p_{[j_1]}&\leq 0
\end{align*}
By definition of $j_1$,
\begin{align*}
(k+1-j_1)p_{[j_1]}>1-\sigma_{j_1}(p)
\implies    0>&1-\sigma_{j_1}(p)\\
    \sigma_{j_1}(p)>&1~.
\end{align*}
However, this contradicts that $\sigma_{j_1}(p)\leq\sum_{i=1}^np_{[i]}=1$, as $p \in \simplex$.
Thus, we conclude that $j_1\leq k$.
\end{proof}
Note that if for some $j\in[n]$,
\[(k+1-j)p_{[j]} = 1-\sigma_{j}(p),\]
then the expected loss will not change by ``bumping up'' the corresponding element in $u$. Therefore, we are indifferent to ``bumping up'' this element or not.\\
From the above definitions define two sets $H : \simplex \to 2^{[n]}$ and $I : \simplex \to 2^{[n]}$ as follows:
\begin{align*}
    H(p) &= \left\{i\in[n] \mid p_i\geq p_{[j_1]}\right\}\\
    I(p) &= \left\{i\in[n] \mid p_{[j_2]} \leq  p_i < p_{[j_1]}\right\}
\end{align*}
Note, that $T = H(p)\cup I(p)$ is a minimizing set of indices for $\E_p \hat \ell_4(T, \cdot)$ when we ``bump up'' exactly those corresponding elements in $H(p)$.
If $p$ is understood from context, then we simply denote $H(p) = H$, etc.

Intuitively, $H$ (``high'') is the set of elements that bumping up (including in the report set $T$) will result in a lower expected loss. 
$I$ (``indifferent'') is the set of elements that bumping up will not affect expected loss, meaning we are indifferent to bumping them up.

From these definitions, we can see that the set of all $H\cup I^*$ where $I^*\in P(I)$ (the power set $2^I)$ such that $|I^*|\leq k-j_1,$ will have an expected loss equal to the expected loss associated with the set $H\cup I$. 
And $H\cup I$ is the exact set constructed by iteratively adding elements according to the boundary condition defined above (and we established above that this is the strategy for forming an optimal report set when $|H\cup I|\leq k$). 
Therefore, the set of all $H\cup I^*$ where $I^*\in P(I)$ such that $|I^*|\leq k-j_1,$ will be representative.
In particular, there is an $I^* \in P(I)$ (e.g., $I^* = \emptyset$) such that $|H \cup I^*| \leq k$, so that $\Psi(T) \in \Ri{4}$, where $T = H \cup I^*$.

We can conclude that the property elicited by $\hat \ell_4$ is given
\[\prop{\Li{4}}(p)=\left\{\frac{k+1}{k+1-|T|}\ones_{T} \mid T=H\cup I^*, I^*\in P(I), |I^*| \leq k-j_1\right\} ~,\]
where $P(I)$ is the power set of set $I$, and $H$ and $I$ are functions of $p$.\\

\subsection{Characterizing Consistency of $\Li{4}$}
From this, we can conclude that $\hat\ell_4$ indirectly elicits top-$k$ when $j_1=k$ because in all other cases $\prop{\hat \ell_4}$ will return a set with cardinality greater than 1 which will require the breaking of ties. This breaking of ties is dependent on the link utilized, which in this case is the $\argmax$; however, as established we would be breaking ties between sets that result in the same expected loss of $\Li{4}.$ This means that we would be breaking ties arbitrarily. The only case in which this does not occur is when we are not indifferent between bumping up any elements $u_i, u_j$ where  $i,j\in [n], i\neq j.$ This occurs when $j_1=k,$ resulting in 
\begin{align*}
    (k+1-k)p_{[k]}&>1-\sigma_k \qquad\text{by definition of $j_1$}\\
    p_{[k]}&>1-\sigma_k
\end{align*}
Therefore by Lemma \ref{lemma:restricted-consistency}, we know that $\Li{4}$ is guaranteed consistency with top-$k$ when $p_{[k]} > 1-\sigma_k$.

\ellfourinconsistent*

\section{Additional Derivations for $\Lk$}\label{appendix:new-surrogate}

\subsection{Proof of Lemma \ref{lemma:Gamma}} \label{appendix:Gamma-lemma-proof}
As $\Lk$ is a proper polyhedral function, we know that it attains its infimum~\citep[Corollary 19.3.1]{rockafellar1997convex}, and thus $\Gamma$ is well-defined on $\simplex$.
Let $G(u) = (- \risk{\lk})^*(u)$ and $I_{\simplex}$ be the convex indicator function that is 0 on $\simplex$ and $\infty$ on $\reals^n \setminus \simplex$.
Then, $G^*(p) = -\risk{\lk}(p) = \sumk(p) + I_{\simplex}(p) - 1$.

\begin{lemma}
	\label{lemma:G}
	$\Gamma(p) = \partial G^*(p)$.
\end{lemma}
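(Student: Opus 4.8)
The plan is to identify $\Gamma(p)$ with the set of points at which the Fenchel--Young inequality for $G$ is tight, which is precisely $\partial G^*(p)$. First I would note that summing the definition $\Lk(u,y) = G(u) - u_y$ against any $p \in \simplex$ gives $\E_{Y\sim p}\Lk(u,Y) = G(u) - \inprod{p}{u}$, so that $\Gamma(p) = \argmin_{u\in\reals^n}\bigl(G(u) - \inprod{p}{u}\bigr) = \argmax_{u \in \reals^n}\bigl(\inprod{p}{u} - G(u)\bigr)$. By definition of the conjugate, $G^*(p) = \sup_{u}(\inprod{p}{u} - G(u))$, so $u \in \Gamma(p)$ if and only if $\inprod{p}{u} - G(u) = G^*(p)$, i.e.\ if and only if equality holds in the Fenchel--Young inequality $G(u) + G^*(p) \geq \inprod{p}{u}$ at the pair $(u,p)$.

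Next I would invoke the equality condition for Fenchel--Young: since $G = (-\risk{\lk})^*$ is a conjugate function it is closed, proper, and convex (proper because $-\risk{\lk}$ is), and for such $G$ one has $G(u) + G^*(p) = \inprod{p}{u} \iff p \in \partial G(u) \iff u \in \partial G^*(p)$ (e.g.\ \citet[Theorem 23.5]{rockafellar1997convex}, using $G^{**} = G$). Combining with the previous paragraph gives $u \in \Gamma(p) \iff u \in \partial G^*(p)$, which is the claim. One should also observe that $\Gamma(p)$ is nonempty for every $p \in \simplex$, since $\Lk$ is a proper polyhedral function and hence $G(\cdot) - \inprod{p}{\cdot}$ attains its infimum; equivalently $p$ lies in $\mathrm{dom}\, G^* = \mathrm{dom}(-\risk{\lk}) = \simplex$, on which the polyhedral convex function $G^*$ is subdifferentiable.

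The single step I would handle most carefully is the regularity bookkeeping: confirming that $G$ is closed, proper, and convex (so the Fenchel--Young equality condition applies verbatim) and that $G^* = -\risk{\lk}$, so that the object $\partial G^*(p)$ in the statement really is the subdifferential of $-\risk{\lk}$. The latter is biconjugation, $G^* = (-\risk{\lk})^{**}$, together with the fact that $-\risk{\lk} = \sigma_k + I_{\simplex} - 1$ is already closed, proper, and convex as written ($\sigma_k$ a pointwise maximum of linear functionals, $I_{\simplex}$ the indicator of a nonempty polytope). Everything else is direct substitution. This lemma then feeds into the proof of Lemma~\ref{lemma:Gamma}: one evaluates $\partial G^*(p) = \partial\bigl(\sigma_k + I_{\simplex}\bigr)(p)$ via the Moreau--Rockafellar sum rule, identifies $\partial\sigma_k(p) = \hull(\topkvec(p))$, and identifies $\partial I_{\simplex}(p)$ with the normal cone to $\simplex$ at $p$, namely $-\cone\{\ones_i \mid p_i = 0\} + \bigcup_{\alpha\in\reals}\{\alpha\ones\}$.
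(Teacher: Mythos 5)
Your proposal is correct and follows essentially the same route as the paper: both reduce the claim to the Fenchel--Young equality characterization of subdifferentials (\citet[Theorem 23.5]{rockafellar1997convex}) applied to the closed proper convex function $G$, after observing $\E_{Y\sim p}\Lk(u,Y) = G(u) - \inprod{p}{u}$. The only cosmetic difference is that you identify the optimal value with $G^*(p)$ directly from the definition of the conjugate (and check $G^* = -\risk{\lk}$ by biconjugation), whereas the paper routes this step through the Bayes-risk identity of \citet[Theorem 4]{finocchiaro2022embedding}; both are sound.
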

\begin{proof}
    As $\Lk$ is a proper convex function, \citet[Theorem 23.5]{rockafellar1997convex} yields
    \begin{align*}
        u \in \partial G^*(p) &\iff G(u) + G^*(p) = \inprod{u}{p} & \text{\citet[Theorem 23.5]{rockafellar1997convex}} \\
        &\iff \inprod{p}{\Lk(u,\cdot)} = -G^*(p) & \text{\citet[Theorem 4]{finocchiaro2022embedding}} \\
        &\iff \inprod{p}{\Lk(u,\cdot)} = \risk \Lk(p) & \text{\citet[Theorem 4]{finocchiaro2022embedding}} \\
        &\iff u \in \argmin_{u'} \inprod{p}{\Lk(u', \cdot)} = \Gamma(p)~.
        &&\qedhere
    \end{align*}
\end{proof}

Therefore, we just need to characterize the subgradients of $G^*$. 
As $\Lk$ is polyhedral, we know that is is the pointwise maximum of a finite number of affine (and therefore convex) functions.
This enables us to use a result from~\citet{hiriart2012fundamentals} to rewrite the subdifferential of $G^*$ in order to characterize $\Gamma(p)$ for all $p \in \relint(\simplex)$.

\begin{theorem}[\citet{hiriart2012fundamentals}{[D.4.3.2]}]
\label{thm:subgradient-max}
Let $f_1, ... f_m$ be convex functions from $\reals^n \to \reals$.
Then,
\[\partial \left(\max_i f_i(x)\right) = \hull\left\{\cup_i \partial f_i(x) \mid i \in \argmax_j f_j(x) \right\} ~.\]
\end{theorem}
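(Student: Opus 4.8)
The plan is to prove the two inclusions defining the claimed equality separately. Write $f := \max_i f_i$, fix the point $x$, let $I := \argmax_j f_j(x)$ be the set of active indices at $x$, and set $C := \hull\bigl(\bigcup_{i\in I}\partial f_i(x)\bigr)$, the conjectured value of $\partial f(x)$.

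First I would dispatch the easy inclusion $C\subseteq\partial f(x)$. For $i\in I$ and $g\in\partial f_i(x)$, the subgradient inequality for $f_i$ together with $f_i(x)=f(x)$ and $f\geq f_j$ pointwise for every $j$ gives $f(y)\geq f_i(y)\geq f_i(x)+\inprod{g}{y-x}=f(x)+\inprod{g}{y-x}$ for all $y$, so $g\in\partial f(x)$; since $\partial f(x)$ is convex, it then also contains the convex hull $C$ of $\bigcup_{i\in I}\partial f_i(x)$.

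The substance is the reverse inclusion $\partial f(x)\subseteq C$, which I would prove by contradiction via separation. The preliminary facts are that each $f_i$, being finite on all of $\reals^n$, is continuous with nonempty compact convex subdifferential, so $\bigcup_{i\in I}\partial f_i(x)$ is a finite union of compact sets, hence compact, and by Carathéodory's theorem $C$ is nonempty, compact, and convex. Suppose $g\in\partial f(x)$ with $g\notin C$; strictly separate the point $g$ from the compact convex set $C$ to get a direction $d$ with $\inprod{g}{d}>\max_{h\in C}\inprod{h}{d}$. Because the support function of $\partial f_i(x)$ in direction $d$ equals the one-sided directional derivative $f_i'(x;d)$, we get $\max_{h\in C}\inprod{h}{d}=\max_{i\in I}f_i'(x;d)$ (taking a convex hull does not change the support function, and the support function of a finite union is the maximum of the support functions); meanwhile $g\in\partial f(x)$ forces $\inprod{g}{d}\leq f'(x;d)$. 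So everything reduces to showing $f'(x;d)\leq\max_{i\in I}f_i'(x;d)$.

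That last inequality — the crux — I would obtain from the fact that the active set can only shrink under small moves: for $j\notin I$ we have $f_j(x)<f(x)$, so continuity of $f$ and the $f_j$ gives $t_0>0$ with $f_j(x+td)<f(x+td)$ for all $0<t\leq t_0$ and all such $j$, whence $f(x+td)=\max_{i\in I}f_i(x+td)$ on that interval; since $f_i(x)=f(x)$ for $i\in I$, dividing by $t>0$ gives $\frac{f(x+td)-f(x)}{t}=\max_{i\in I}\frac{f_i(x+td)-f_i(x)}{t}$, and letting $t\downarrow 0$ — the maximum over the finite set $I$ is continuous, so the limit passes inside — yields $f'(x;d)=\max_{i\in I}f_i'(x;d)$, which contradicts the separation inequality. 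I expect this reverse inclusion to be the main obstacle: unlike the easy half it is not a one-line subgradient manipulation but needs strict separation, compactness of $C$, the support-function/directional-derivative identity, and the ``active set shrinks'' limit argument assembled together — each routine individually, but only in combination.
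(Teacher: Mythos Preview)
The paper does not prove this theorem; it is simply quoted from Hiriart-Urruty and Lemar\'echal as a known result and then invoked in the proof of Lemma~\ref{lem:S-subgradient}. Your proposal is a correct and essentially standard proof of the result: the easy inclusion is exactly as you give it, and your separation argument for the reverse inclusion---reducing to the directional-derivative identity $f'(x;d)=\max_{i\in I}f_i'(x;d)$ via the ``active set can only shrink'' observation---is the usual route. Nothing in the argument is missing or broken; since the paper offers no proof to compare against, there is nothing further to contrast.
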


\begin{lemma}
	\label{lem:S-subgradient}
	For all $p \in \simplex$, we have $\partial \sigma_k(p) = \hull\{\topkvec(p)\}$. 
\end{lemma}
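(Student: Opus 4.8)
The plan is to recognize $\sigma_k$ as a pointwise maximum of finitely many affine functions and then invoke Theorem~\ref{thm:subgradient-max} directly. Concretely, for each $S \in \Rk$ define the linear function $f_S : \reals^n \to \reals$ by $f_S(u) = \inprod{\ones_S}{u}$. Since $\Rk$ is finite (indeed $|\Rk| = \binom{n}{k}$), the identity $\sigma_k(u) = \max_{S \in \Rk} \inprod{\ones_S}{u}$ from the Setting section expresses $\sigma_k$ as the max of a finite family of convex (in fact affine) functions, exactly the form required by Theorem~\ref{thm:subgradient-max}.

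Next I would compute the two ingredients that the theorem needs at the point $p$. First, each $f_S$ is affine, so $\partial f_S(p) = \{\nabla f_S(p)\} = \{\ones_S\}$ for every $p$. Second, the active index set is $\argmax_{S \in \Rk} f_S(p) = \{ S \in \Rk \mid \inprod{\ones_S}{p} = \sigma_k(p) \}$, and by the definition $\topkvec(p) = \argmax_{t \in \mathcal{T}} \inprod{t}{p}$ with $\mathcal{T} = \{\ones_S \mid S \in \Rk\}$, the set of indicators $\ones_S$ over active $S$ is precisely $\topkvec(p)$. Plugging these into Theorem~\ref{thm:subgradient-max} gives
\[
\partial \sigma_k(p) \;=\; \hull\Bigl\{ \textstyle\bigcup_{S} \partial f_S(p) \;\Big|\; S \in \argmax_{S'} f_{S'}(p) \Bigr\} \;=\; \hull\{ \ones_S \mid \ones_S \in \topkvec(p) \} \;=\; \hull(\topkvec(p))~,
\]
which is the claimed equality.

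I do not anticipate a genuine obstacle here; the only thing to be careful about is bookkeeping between the two ways of indexing the active pieces (by subsets $S$ versus by their indicator vectors $t = \ones_S$), and noting explicitly that finiteness of $\Rk$ is what licenses the use of Theorem~\ref{thm:subgradient-max}. One could alternatively give a direct first-principles argument — show $\supseteq$ by exhibiting each $\ones_S$ with $S$ active as a subgradient and using convexity of $\partial\sigma_k(p)$, and show $\subseteq$ by noting any subgradient must be supported by some active linear piece — but routing through the cited theorem is cleaner and shorter, so that is the route I would take.
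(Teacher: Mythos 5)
Your proposal is correct and follows essentially the same route as the paper: write $\sigma_k$ as a finite maximum of the affine functions $p \mapsto \inprod{\ones_S}{p}$ (the paper indexes by $t = \ones_S \in \mathcal{T}$ rather than by $S$), note each has subdifferential $\{\ones_S\}$, and apply Theorem~\ref{thm:subgradient-max} to identify $\partial\sigma_k(p)$ with the convex hull of the active indicators, i.e.\ $\hull(\topkvec(p))$. No gaps.
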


\begin{proof} 
Let $f_t(p) = \inprod{t}{p}$ for each $t \in \T$.
By affineness, $\partial f_t(p) = \{t\}$. 
Now, recalling the definition of $\sigma_k$, we can write
\begin{align*}
	\partial \sigma_k(p) 
	&= \partial \left( \max_{t \in \T} \inprod{t}{p} \right)& \\
	&= \partial \left( \max_{t \in \T} f_t(p) \right) & \\
	&= \hull \left\{ \cup_{t} \left(\partial f_t(p)\right) \mid t \in \argmax_{t'} f_{t'}(p) \right\} & \text{Theorem \ref{thm:subgradient-max}}\\
	&= \hull \left\{ \cup_{t} \left\{t\right\} \mid t \in \argmax_{t'} f_{t'}(p) \right\} & \\
	&= \hull \left\{\argmax_{t} f_{t}(p) \right\} & \\
	&= \hull \left\{\argmax_{t} \inprod{t}{p} \right\}&  \\
	&= \hull \left\{\topkvec(p) \right\} ~.
	&&\qedhere
\end{align*}
\end{proof}

\begin{lemma}
	\label{lemma:convexindicator-subgradient}
	For all $p$ on the relative boundary of $\simplex$, (that is, $\simplex \setminus \relint(\simplex)$), 
	\[ \partial I_{\simplex}(p) = \cup_{\alpha \in\R} \{\alpha \ones\} - \cone\{\ones_{i} | p_i = 0 \}~.\] 
	Moreover, $\partial I_\simplex(p) = \vec 0$ for all $p \in \relint(\simplex)$.
\end{lemma}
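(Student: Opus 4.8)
The plan is to identify $\partial I_{\simplex}(p)$ with the normal cone $N_{\simplex}(p)$ and then compute that cone by hand. Since $p\in\simplex$ we have $I_{\simplex}(p)=0$, so directly from the definition of the subdifferential, $v\in\partial I_{\simplex}(p)$ iff $I_{\simplex}(q)\geq \inprod{v}{q-p}$ for every $q$, i.e. iff $\inprod{v}{q-p}\leq 0$ for all $q\in\simplex$. Because a linear functional on the polytope $\simplex=\hull\{\ones_1,\dots,\ones_n\}$ attains its maximum at a vertex, $\sup_{q\in\simplex}\inprod{v}{q}=\max_i v_i$, so this membership condition collapses to the single equation $\inprod{v}{p}=\max_i v_i$.

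First I would analyze that equation. Set $\alpha:=\max_i v_i$. Since $p\geq 0$ and $\inprod{\ones}{p}=1$, we always have $\inprod{v}{p}\leq\alpha$, and $\sum_i p_i(\alpha-v_i)=\alpha-\inprod{v}{p}$ is a sum of nonnegative terms; hence equality holds exactly when $v_i=\alpha$ for every $i$ with $p_i>0$. Writing $w:=v-\alpha\ones$, this says precisely that $w_i\leq 0$ for all $i$ and $w_i=0$ whenever $p_i>0$, i.e. $w\in-\cone\{\ones_i\mid p_i=0\}$; so $v=\alpha\ones+w$ lies in $\{\beta\ones:\beta\in\reals\}-\cone\{\ones_i\mid p_i=0\}$, which gives one inclusion. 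For the reverse inclusion I would take any $v=\beta\ones-\sum_{i:\,p_i=0}c_i\ones_i$ with $c_i\geq 0$ and check directly that $\max_i v_i=\beta$ (the value $\beta$ is attained on $\support(p)$, which is nonempty since $p\in\simplex$) and that $\inprod{v}{p}=\beta$, so $\inprod{v}{p}=\max_i v_i$ and $v\in N_{\simplex}(p)$. This establishes the displayed formula for $p$ on the relative boundary, and the ``moreover'' statement is simply the specialization $\{i\mid p_i=0\}=\emptyset$, in which the subtracted cone is trivial.

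I do not anticipate a genuine obstacle here: the argument is essentially the support‑function description of the normal cone of the simplex. The only two points needing care are the tie‑breaking step --- arguing that $\sum_i p_i(\alpha-v_i)=0$ with all terms nonnegative forces $v_i=\alpha$ on $\support(p)$ --- and, in the converse direction, the remark that $\support(p)\neq\emptyset$ so the maximum is actually attained there; both are routine. (If one prefers, the same result follows from writing $\simplex=A\cap\reals^n_+$ with $A=\{q:\inprod{\ones}{q}=1\}$ and invoking additivity of normal cones for polyhedral sets, since $N_A(p)=\{\beta\ones:\beta\in\reals\}$ and $N_{\reals^n_+}(p)=-\cone\{\ones_i\mid p_i=0\}$; the direct computation above avoids needing that theorem.)
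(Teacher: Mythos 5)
Your argument is correct, but it takes a different route from the paper. The paper never touches the definition of the normal cone directly: it writes $I_\simplex = I_0 + \sum_{i=1}^n I_i$, where $I_0$ is the indicator of the hyperplane $\inprod{\ones}{q}=1$ and $I_i$ the indicator of the halfspace $q_i \geq 0$, computes each $\partial I_0(p) = \{\alpha\ones \mid \alpha\in\reals\}$ and $\partial I_i(p) = \{0\}$ or $\{-\alpha_i\ones_i \mid \alpha_i \geq 0\}$ according to whether $p_i>0$, and then invokes the Minkowski-sum rule for subdifferentials of sums (\citet[Theorem 23.8]{rockafellar1997convex}) --- essentially the ``additivity of normal cones'' alternative you mention in your parenthetical. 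You instead compute $N_\simplex(p)$ from first principles: membership of $v$ reduces to $\inprod{v}{p} = \sup_{q\in\simplex}\inprod{v}{q} = \max_i v_i$ because a linear functional on the simplex is maximized at a vertex, and the identity $\sum_i p_i(\alpha - v_i) = \alpha - \inprod{v}{p}$ with nonnegative summands forces $v_i=\alpha$ on $\support(p)$, giving one inclusion; the converse is a direct check using $\support(p)\neq\emptyset$. Your route is more elementary (no sum rule or constraint qualification needed, only the support-function description of the simplex) and it proves both inclusions explicitly, whereas the paper's route is shorter but leans on a nontrivial theorem. One small point of reconciliation: your derivation yields $\partial I_\simplex(p) = \{\alpha\ones \mid \alpha\in\reals\}$ when $p\in\relint(\simplex)$, not $\{\vec 0\}$ as the lemma's ``moreover'' clause literally states; this is a slip in the paper's statement rather than a gap in your proof, since the paper's own sum-rule computation gives the same $\{\alpha\ones\}$ answer at relative-interior points, and it is that version (with the $\bigcup_{\alpha}\{\alpha\ones\}$ term present for all $p$) that is actually used in Lemma~\ref{lemma:Gamma}.
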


\begin{proof}
We can define the simplex as the set of points $p \in \reals^n$ that satisfies the constraints $\inprod{p}{\ones} = 1$, and $\inprod{p}{\ones_i} \geq 0$ for all $1 \leq i \leq n$. 
Let $I_0$ be the convex indicator of the first constraint, such that $I_0(p) = 0$ when $\inprod{p}{\ones} = 1$, and $I_0(p) = \infty$ otherwise. 
Similarly, let $I_i$ be the convex indicators such that $I_i(p) = 0$ if $\inprod{p}{\ones_i} \geq 0$, and $I_i(p) = \infty$ otherwise.
A point $p \in \reals^n$ will be in $\simplex$ precisely when all $n + 1$ constraints are satisfied, which is exactly when all the indicators are 0. 
Therefore, we can rewrite
\[I_\simplex(p) = \sum_{i=0}^n I_i(p) ~. \]

For any $p \in \simplex$, we have $\partial I_0(p) = \left\{ \alpha_0 \ones | \alpha_0 \in \reals \right\}$.
For $1 \leq i \leq n$, $\partial I_i(p) = \{\vec 0\}$ if $p_i > 0$, and $\partial I_i(p) = \{ - \alpha_i \ones_i \mid \alpha_i > 0\}$ if $p_i = 0$.

The subgradient of a sum of convex functions is the Minkowski sum of their individual subgradients \citep[Theorem 23.8]{rockafellar1997convex}. Now, we observe, 
\begin{align*}
    \partial I_{\simplex}(p) 
    &= \partial \left( \sum_{i=0}^n I_i(p) \right) & \\
    &= \sum_{i=0}^n \partial I_i(p)  & \\
    &= \partial I_0(p) + \sum_{i=1}^n \partial I_i(p)  & \\
    &= \cup_{\alpha_0 \in \reals}\{\alpha_0 \ones\} + \sum_{i=1}^n \{- \alpha_i \ones_{i} | p_i = 0, \alpha_i \geq 0 \}  & \\
    &= \cup_{\alpha \in\R} \{\alpha \ones\} - \cone\{\ones_{i} | p_i = 0 \} ~. 
    &&\qedhere
\end{align*}

\end{proof}

\begin{lemma}
	$\Gamma(p) = \hull\{\topkvec(p)\} -\cone\{\ones_{i} | p_i = 0 \} + \cup_{\alpha \in\R} \{\alpha \ones\}$.
\end{lemma}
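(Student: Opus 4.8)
The plan is to assemble the three preceding lemmas into the claimed formula. By Lemma~\ref{lemma:G}, $\Gamma(p) = \partial G^*(p)$, and we have already identified $G^*(p) = \sigma_k(p) + I_\simplex(p) - 1$. Since an additive constant does not change a subdifferential, it suffices to compute $\partial(\sigma_k + I_\simplex)(p)$ for each $p \in \simplex$ and then substitute the subdifferentials of the two summands supplied by Lemmas~\ref{lem:S-subgradient} and~\ref{lemma:convexindicator-subgradient}.

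The only point that needs care is the validity of the additivity rule $\partial(\sigma_k + I_\simplex)(p) = \partial\sigma_k(p) + \partial I_\simplex(p)$, since the Moreau--Rockafellar sum rule requires a constraint qualification. Here it holds: $\sigma_k$ is finite on all of $\reals^n$, so $\relint(\mathrm{dom}\,\sigma_k) = \reals^n$, while $\relint(\mathrm{dom}\,I_\simplex) = \relint(\simplex) \neq \emptyset$; hence the two relative interiors intersect and \citet[Theorem~23.8]{rockafellar1997convex} applies at every point, in particular at every $p \in \simplex$ (including the relative boundary). Alternatively, both functions are polyhedral and their domains meet, so the polyhedral version of the sum rule applies directly.

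With the sum rule in hand, the remainder is substitution: Lemma~\ref{lem:S-subgradient} gives $\partial\sigma_k(p) = \hull\{\topkvec(p)\}$, and Lemma~\ref{lemma:convexindicator-subgradient} gives $\partial I_\simplex(p) = \bigcup_{\alpha\in\reals}\{\alpha\ones\} - \cone\{\ones_i \mid p_i = 0\}$, a formula that reads correctly both on the relative boundary and on the relative interior of $\simplex$ (on the interior the cone term collapses to $\{\vec 0\}$). Forming the Minkowski sum of these two sets and using commutativity of the Minkowski sum yields
\[ \Gamma(p) = \hull\{\topkvec(p)\} - \cone\{\ones_i \mid p_i = 0\} + \bigcup_{\alpha\in\reals}\{\alpha\ones\}~, \]
which is the claim, and in turn establishes Lemma~\ref{lemma:Gamma}. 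I do not anticipate any real obstacle beyond stating the constraint qualification cleanly; all of the substantive geometric content has already been carried out in the three lemmas being invoked.
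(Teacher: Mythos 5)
Your proposal matches the paper's own proof: it writes $\Gamma(p)=\partial G^*(p)$ via Lemma~\ref{lemma:G}, applies the subdifferential sum rule to $\sigma_k + I_\simplex$, and substitutes Lemmas~\ref{lem:S-subgradient} and~\ref{lemma:convexindicator-subgradient}. Your explicit verification of the constraint qualification for the sum rule (or the polyhedral version) is a welcome extra detail that the paper leaves implicit, but the argument is otherwise the same.
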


\begin{proof}
	\begin{align*}
		\Gamma(p) 
		&= \partial G^*(p)  & \text{Lemma~\ref{lemma:G}}\qquad\\
		&= \partial \left(\sumk(p) + I_\simplex(p) - 1 \right) &\\
		&= \partial \sumk(p) + \partial I_\simplex(p) - 0 &\\
		&= \hull\{\topkvec(p)\} -\cone\{\ones_{i} | p_i = 0 \} + \cup_{\alpha \in\R} \{\alpha \ones\}~. & \text{Lemma \ref{lem:S-subgradient} and Lemma \ref{lemma:convexindicator-subgradient}}\qquad
		&\qedhere
	\end{align*}
\end{proof}

\subsection{Equivalence of Equations \ref{eq:new-surrogate} and \ref{eq:neq-surrogate-max-form} }
\label{subsec:new-surrogate-max-form}

\begin{lemma}
\[\Lk (u, y) = \max_{1\leq m \leq n} \left\{ \tfrac {\sigma_m(u)}{m} +  \left(1 - \tfrac k m\right)_+ \right\} - u_y ~.\]
\end{lemma}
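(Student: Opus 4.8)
The plan is to prove the equivalent statement for the supremum alone: since $u_y$ is subtracted on both sides, it suffices to show
\[
\sup_{p\in\simplex}\bigl(\inprod{p}{u}+1-\sigma_k(p)\bigr)\;=\;\max_{1\le m\le n}\Bigl\{\tfrac{\sigma_m(u)}{m}+\bigl(1-\tfrac km\bigr)_+\Bigr\}~.
\]
Both sides are invariant under permuting the coordinates of $u$ (on the left, relabel $p$ accordingly; on the right, $\sigma_m$ is symmetric), so I would assume without loss of generality that $u$ is sorted, $u_1\ge u_2\ge\cdots\ge u_n$. For each $m\in\{1,\dots,n\}$ introduce the candidate distribution $p^{(m)}:=\tfrac1m\sum_{i=1}^m e_i\in\simplex$, i.e.\ the uniform distribution on the $m$ largest coordinates of $u$.

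First I would prove the ``$\ge$'' direction by plugging in these candidates. A direct computation gives $\inprod{p^{(m)}}{u}=\tfrac1m\sum_{i=1}^m u_i=\tfrac{\sigma_m(u)}{m}$ (using that $u$ is sorted) and $\sigma_k(p^{(m)})=\min(m,k)/m$, so that $1-\sigma_k(p^{(m)})=1-\min(m,k)/m=(1-k/m)_+$; here the case $m\le k$ (where $p^{(m)}$ has fewer than $k$ nonzero entries and $\sigma_k(p^{(m)})=1$) is exactly the case in which $(1-k/m)_+=0$. Hence each $p^{(m)}$ achieves value $\tfrac{\sigma_m(u)}{m}+(1-k/m)_+$, and taking the best over $m$ gives $\mathrm{LHS}\ge\mathrm{RHS}$.

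For the ``$\le$'' direction I would take an arbitrary $p\in\simplex$ and let $\bar p$ be its coordinates rearranged in non-increasing order. By the rearrangement inequality $\inprod{p}{u}\le\inprod{\bar p}{u}$ (as $u$ is sorted), while $\sigma_k(p)=\sigma_k(\bar p)$ by symmetry of $\sigma_k$, so it is enough to bound the objective at $\bar p$. Setting $\bar p_{n+1}:=0$ and $\lambda_m:=m(\bar p_m-\bar p_{m+1})\ge0$, a telescoping computation shows $\sum_{m=1}^n\lambda_m=\sum_{m=1}^n\bar p_m=1$ and $(\sum_m\lambda_m p^{(m)})_i=\sum_{m\ge i}(\bar p_m-\bar p_{m+1})=\bar p_i$, i.e.\ $\bar p=\sum_m\lambda_m p^{(m)}$ is a convex combination. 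Since $\bar p$ and every $p^{(m)}$ are non-increasing, on this set $\sigma_k(\cdot)=\sum_{i=1}^k(\cdot)_i$, so the objective restricted to non-increasing vectors equals the \emph{affine} map $q\mapsto\inprod{q}{u}+1-\sum_{i=1}^k q_i$; therefore its value at $\bar p$ is the $\lambda$-weighted average of its values at the $p^{(m)}$, hence at most $\max_m\bigl(\tfrac{\sigma_m(u)}{m}+(1-k/m)_+\bigr)$. Combining the two directions finishes the proof, and re-subtracting $u_y$ recovers the stated identity for $\Lk$.

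The only steps requiring care are in the last paragraph: verifying that the rearranged $\bar p$ is genuinely a convex combination of the $p^{(m)}$ (the two telescoping identities) and that the objective is affine on the cone of non-increasing vectors, so that convexity becomes an equality there rather than merely an inequality. I do not anticipate real difficulty — these are routine Abel-summation manipulations — but they are the crux that makes the reduction ``maximize an affine function over the permutohedron-type polytope of sorted probability vectors, whose vertices are the $p^{(m)}$'' go through cleanly.
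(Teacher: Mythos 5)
Your proof is correct, and it reaches the same endpoint as the paper (reduce to sorted $u$, identify the uniform-on-top-$m$ distributions $p^{(m)}$ as the relevant candidates, and note that plugging them in gives $\tfrac{\sigma_m(u)}{m}+(1-\tfrac km)_+$), but the crux is handled by a genuinely different argument. The paper, after reducing to sorted $p$ via the same ordering observation you make with the rearrangement inequality, proves that \emph{some} maximizer is uniform on a prefix by contradiction: it takes a sorted maximizer $q$ with the maximal block length $m$ satisfying $q_1=q_m>q_{m+1}>0$, perturbs mass between the block and the tail, and derives a contradiction with either optimality or maximality of $m$ (relying on compactness for existence of a maximizer). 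You instead establish the upper bound constructively: any sorted $\bar p$ decomposes as the convex combination $\bar p=\sum_m \lambda_m p^{(m)}$ with $\lambda_m=m(\bar p_m-\bar p_{m+1})$, and since $\sigma_k(\cdot)$ agrees with the linear map $q\mapsto\sum_{i=1}^k q_i$ on non-increasing vectors, the objective is affine there, so its value at $\bar p$ is at most the maximum of its values at the $p^{(m)}$. This is the standard ``vertices of the sorted simplex'' argument; it avoids the compactness/perturbation step and the case analysis on $\epsilon$, and it gives the two inequalities cleanly and symmetrically. The paper's perturbation route is slightly more self-contained in that it never needs the telescoping identities, but your version is shorter, explicit, and arguably easier to verify; both are complete proofs of the lemma.
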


\begin{proof}
By Equation \ref{eq:new-surrogate}, 
\[ \Lk (u, y) = \sup_{p \in \simplex}\left(\inprod{p}{u} - \sumk(p)\right) + 1 - u_y ~.\]

Without loss of generality, we may assume $u$ is sorted. 
Since $\sumk(p)$ is not order dependent, and $\inprod{p}{u}$ will be maximized when the elements of $p$ have the same ordering as the elements of $u$, we can assume $p$ is sorted as well.
Let $\sort(\simplex)$ denote the subset of vectors $p \in \simplex$ that are sorted.
The loss then simplifies to

\begin{align*}
    \Lk(u,y)
    &= \sup_{p \in \sort(\simplex)}\left( \sum_{i=1}^k p_i (u_i - 1) + \sum_{i=k+1}^n p_i u_i\right) + 1 - u_y ~. \\
\intertext{
Let $v$ be the vector such that for $i \leq k$, $v_i = u_i - 1$, and for $i > k$, $v_i = u_i$. 
We can then reduce to
}
    &= \sup_{p \in \sort(\simplex)}\left( \sum_{i=1}^n p_i v_i\right) + 1 - u_y \\
    &= \sup_{p \in \sort(\simplex)}\inprod{p}{v} + 1 - u_y ~.
\end{align*}

We claim that, for any fixed $v$, there exists a $p \in \argsup_{p' \in \simplex} \inprod{p'}{v}$ such that $p = \ones_M / |M|$ for some set $M \subseteq [n]$.

We proceed by contradiction.
Assume that there is no $p$ that is exactly $\frac{1}{m}$ on $m$ indices 
that achieves the supremum.
Let $U = \argsup_{p \in \simplex} \inprod{p}{v} \subseteq \simplex$ be the set of (sorted) distributions that do achieve the supremum.
Since $\simplex$ is compact and $\inprod{p}{v}$ is linear, $U$ is nonempty.
By assumption, for every $q \in U$, there must be some index $m$ such that $q_1 = q_m > q_{m+1} > 0$. 
Choose any $q$ with the maximal such $m$.
Let $\mu = \frac{1}{m} \sum_{i=1}^m v_i$ be the average of the first $m$ elements of $v$. 
Then, we have
\begin{align*}
    \inprod{q}{v} &= q_m \mu + \sum_{i=m+1}^n q_i v_i ~.
\end{align*}
If $ m \mu > \sum_{i>m} v_i q_i$, we can choose a sufficiently small $\epsilon > 0$ and set $q'_i = q_1 - \epsilon\frac{1 - m \mu}{m}$ for $i \leq m$ and $q'_i = (1 + \epsilon) q_i$ for $i > m$ to get a new distribution $q' \in \simplex$.
Using this $q'$ instead of $q$ increases $\inprod{q}{v}$, so $q \not \in U$, a contradiction.
If instead $ m \mu < \sum_{i>m} v_i q_i$, we can instead choose a sufficiently large $\epsilon < 0$, and achieve the same result.
If instead $ m \mu = \sum_{i>m} v_i q_i$, we can choose $\epsilon$ such that $q'_m = q'_{m+1}$, so we did not choose the $q$ with the maximal $m$, also a contradiction.

Therefore, there is some sorted $p$ that is $\frac{1}{m}$ on exactly $m$ indicies that achieves the supremum.
We therefore need only consider this set of distributions.
Plugging this into the original equation, we get

\begin{align*}
    L_k(u,y)
    &= \sup_{p \in \simplex}\left( \sum_{i=1}^k p_i (u_i - 1) + \sum_{i=k+1}^n p_i u_i\right) + 1 - u_y\\
    &= \max_{1\leq m \leq n} \left( \sum_{i=1}^k \frac{1}{m}(u_i - 1)\right) + 1 - u_y\\
    &= \max_{1\leq m \leq n} \left\{ \tfrac {\sigma_m(u)}{m} + \left(1 - \tfrac k m\right)_+ \right\} - u_y~.
    \qedhere
\end{align*}
\end{proof}

\end{document}